\newcommand{\fullname}{Kronecker-factored Approximate Curvature}
\newcommand{\acronym}{K-FAC}
\title{Optimizing Neural Networks with \fullname{}}
\author{James Martens\thanks{jmartens@cs.toronto.edu} }
\author{Roger Grosse\thanks{rgrosse@cs.toronto.edu}}
\affil{Department of Computer Science, University of Toronto}
\date{}
\newcommand{\nonlin}{\phi}
\newcommand{\expected}{\operatorname{E}}
\newcommand{\ovec}{\operatorname{vec}}
\DeclareMathOperator{\trace}{tr}
\newcommand{\cov}{\operatorname{Cov}}
\newcommand{\deriv}{\mathcal{D}}
\newcommand{\derivfrac}[2]{\frac{\mathrm{d}#1}{\mathrm{d}#2}}
\newcommand{\Real}{\mathbb{R}}
\DeclareMathOperator{\KL}{KL}
\newcommand{\diag}{\operatorname{diag}}
\newcommand{\Normal}{\mathcal{N}}
\newcommand{\bigO}{\mathcal{O}}
\newcommand{\B}{B}
\def\thm@space@setup{%
  \thm@preskip=\parskip \thm@postskip=0pt
}
\newtheorem{theorem}{Theorem}
\newtheorem{lemma}[theorem]{Lemma}
\newtheorem{corollary}[theorem]{Corollary}
\begin{document}

\maketitle

\begin{abstract} 
We propose an efficient method for approximating natural gradient descent in neural networks which we call \fullname{} (\acronym{}). \acronym{} is based on an efficiently invertible approximation of a neural network's Fisher information matrix which is neither diagonal nor low-rank, and in some cases is completely non-sparse.  It is derived by approximating various large blocks of the Fisher (corresponding to entire layers) as being the Kronecker product of two much smaller matrices.  While only several times more expensive to compute than the plain stochastic gradient, the updates produced by \acronym{} make \emph{much} more progress optimizing the objective, which results in an algorithm that can be much faster than stochastic gradient descent with momentum in practice.  And unlike some previously proposed approximate natural-gradient/Newton methods which use high-quality non-diagonal curvature matrices (such as Hessian-free optimization), \acronym{} works very well in highly stochastic optimization regimes.  This is because the cost of storing and inverting \acronym{}'s approximation to the curvature matrix does not depend on the amount of data used to estimate it, which is a feature typically associated only with diagonal or low-rank approximations to the curvature matrix.
\end{abstract}

\section{Introduction}

The problem of training neural networks is one of the most important and highly investigated ones in machine learning.  Despite work on layer-wise pretraining schemes, and various sophisticated optimization methods which try to approximate Newton-Raphson updates or natural gradient updates, stochastic gradient descent (SGD), possibly augmented with momentum, remains the method of choice for large-scale neural network training \citep{momentum_ilya}.

From the work on Hessian-free optimization (HF) \citep{HF} and related methods \citep[e.g.][]{KSD} we know that updates computed using local curvature information can make much more progress per iteration than the scaled gradient.  The reason that HF sees fewer practical applications than SGD are twofold.  Firstly, its updates are much more expensive to compute, as they involve running linear conjugate gradient (CG) for potentially hundreds of iterations, each of which requires a matrix-vector product with the curvature matrix (which are as expensive to compute as the stochastic gradient on the current mini-batch).  Secondly, HF's estimate of the curvature matrix must remain fixed while CG iterates, and thus the method is able to go through much less data than SGD can in a comparable amount of time, making it less well suited to stochastic optimizations.

As discussed in \citet{HF_chapter} and \citet{momentum_ilya}, CG has the potential to be much faster at local optimization than gradient descent, when applied to quadratic objective functions.  Thus, insofar as the objective can be locally approximated by a quadratic, each step of CG could potentially be doing a lot more work than each iteration of SGD, which would result in HF being much faster overall than SGD.  However, there are examples of quadratic functions \citep[e.g.][]{cg_hardcase}, characterized by curvature matrices with highly spread-out eigenvalue distributions, where CG will have no distinct advantage over well-tuned gradient descent with momentum.  Thus, insofar as the quadratic functions being optimized by CG within HF are of this character, HF shouldn't in principle be faster than well-tuned SGD with momentum.  The extent to which neural network objective functions give rise to such quadratics is unclear, although \citet{momentum_ilya} provides some preliminary evidence that they do. %, which suggests just applying gradient descent to the original objective. %And while HF employs a diagonal precondition which speeds up CG (and might redistribute eigenvalues more favorably), diagonal preconditioning methods are also available for gradient descent \citep[e.g.][]{diag_lecun, pesky}.

CG falls victim to this worst-case analysis because it is a first-order method.  This motivates us to consider methods which don't rely on first-order methods like CG as their primary engines of optimization.   One such class of methods which have been widely studied are those which work by directly inverting a diagonal, block-diagonal, or low-rank approximation to the curvature matrix \citep[e.g.][]{diag_lecun, pesky, ADADELTA, TONGA, Ollivier}.  In fact, a diagonal approximation of the Fisher information matrix is used within HF as a preconditioner for CG.  However, these methods provide only a limited performance improvement in practice, especially compared to SGD with momentum \citep[see for example][]{oLBFGS, ADADELTA}, and many practitioners tend to forgo them in favor of SGD or SGD with momentum.

We know that the curvature associated with neural network objective functions is highly non-diagonal, and that updates which properly respect and account for this non-diagonal curvature, such as those generated by HF, can make much more progress minimizing the objective than the plain gradient or updates computed from diagonal approximations of the curvature (usually $\sim\!\!10^2$ HF updates are required to adequately minimize most objectives, compared to the $\sim\!\!\!\!10^4-10^5$ required by methods that use diagonal approximations). %This follows from the simple fact that methods like HF, which use a full-rank non-sparse estimate of the curvature matrix, has a much higher rate of per-update progress than methods based on diagonal approximations.  
Thus, if we had an efficient and direct way to compute the inverse of a high-quality non-diagonal approximation to the curvature matrix (i.e. without relying on first-order methods like CG) this could potentially yield an optimization method whose updates would be large and powerful like HF's, while being (almost) as cheap to compute as the stochastic gradient.

In this work we develop such a method, which we call \fullname{} (\acronym{}).  We show that our method can be much faster in practice than even highly tuned implementations of SGD with momentum on certain standard neural network optimization benchmarks.  

The main ingredient in \acronym{} is a sophisticated approximation to the Fisher information matrix, which despite being neither diagonal nor low-rank, nor even block-diagonal with small blocks, can be inverted very efficiently, and can be estimated in an online fashion using arbitrarily large subsets of the training data (without increasing the cost of inversion).  

This approximation is built in two stages.  In the first, the rows and columns of the Fisher are divided into groups, each of which corresponds to \emph{all the weights in a given layer}, and this gives rise to a block-partitioning of the matrix (where the blocks are \emph{much} larger than those used by \citet{TONGA} or \citet{Ollivier}).  These blocks are then approximated as Kronecker products between much smaller matrices, which we show is equivalent to making certain approximating assumptions regarding the statistics of the network's gradients.  

In the second stage, this matrix is further approximated as having an \emph{inverse} which is either block-diagonal or block-tridiagonal.  We justify this approximation through a careful examination of the relationships between inverse covariances, tree-structured graphical models, and linear regression.  Notably, this justification doesn't apply to the Fisher itself, and our experiments confirm that while the inverse Fisher does indeed possess this structure (approximately), the Fisher itself does not.

%With our approximate Fisher inverse we can compute an approximate natural gradient.      Since even the exact natural gradient cannot be used directly as an update (since it is only a ``direction"), to m
% To turn this into a practical optimization algorithm requires the application of various ``damping" techniques, which have been previsely used in practical 2nd-order optimization methods like HF.

The rest of this paper is organized as follows.  \textbf{Section \ref{sec:background_and_notation}} gives basic background and notation for neural networks and the natural gradient.  \textbf{Section \ref{sec:kron_approx}} describes our initial Kronecker product approximation to the Fisher.  \textbf{Section \ref{sec:structured_precision_approx}} describes our further block-diagonal and block-tridiagonal approximations of the inverse Fisher, and how these can be used to derive an efficient inversion algorithm.  \textbf{Section \ref{sec:estimating_A_and_G}} describes how we compute online estimates of the quantities required by our inverse Fisher approximation  over a large ``window" of previously processed mini-batches (which makes \acronym{} very different from methods like HF or KSD, which base their estimates of the curvature on a single mini-batch).  \textbf{Section \ref{sec:damping}} describes how we use our approximate Fisher to obtain a practical and robust optimization algorithm which requires very little manual tuning, through the careful application of various theoretically well-founded ``damping" techniques that are standard in the optimization literature.  Note that damping techniques compensate both for the local quadratic approximation being implicitly made to the objective, and for our further approximation of the Fisher, and are non-optional for essentially any 2nd-order method like \acronym{} to work properly, as is well established by both theory and practice within the optimization literature \citep{nocedal_book}.  \textbf{Section \ref{sec:momentum}} describes a simple and effective way of adding a type of ``momentum" to \acronym{}, which we have found works very well in practice.  \textbf{Section \ref{sec:efficiency}} describes the computational costs associated with \acronym{}, and various ways to reduce them to the point where each update is at most only several times more expensive to compute than the stochastic gradient.  \textbf{Section \ref{sec:pseudocode}} gives complete high-level pseudocode for \acronym{}.  \textbf{Section \ref{sec:invariance}} characterizes a broad class of network transformations and reparameterizations to which \acronym{} is essentially invariant.  \textbf{Section \ref{sec:related}} considers some related prior methods for neural network optimization.  Proofs of formal results are located in the appendix.

%[[\textbf{insert more as they come.  }]]

%[[\textbf{Be sure to modify the above paragraph if any sections get cut in the ICML version}]]

\section{Background and notation}
\label{sec:background_and_notation}

\subsection{Neural Networks}
\label{sec:neural_networks}

In this section we will define the basic notation for feed-forward neural networks which we will use throughout this paper.  Note that this presentation closely follows the one from \citet{ng_martens}.

A neural network transforms its input $a_0 = x$ to an output $f(x,\theta) = a_\ell$ through a series of $\ell$ layers, each of which consists of a bank of units/neurons.  The units each receive as input a weighted sum of the outputs of units from the previous layer and compute their output via a nonlinear ``activation" function.   We denote by $s_i$ the vector of these weighted sums for the $i$-th layer, and by $a_i$ the vector of unit outputs (aka ``activities"). The precise computation performed at each layer $i \in \{1,\ldots,\ell\}$ is given as follows:
\begin{align*}
s_i &= W_i \bar{a}_{i-1} \\
a_i &= \nonlin_i(s_i)
\end{align*}
where $\nonlin_i$ is an element-wise nonlinear function, %(each coordinate of which is sometimes refered to as an activation function")
$W_i$ is a weight matrix, and $\bar{a}_i$ is defined as the vector formed by appending to $a_i$ an additional homogeneous coordinate with value 1.  Note that we do not include explicit bias parameters here as these are captured implicitly through our use of homogeneous coordinates.  In particular, the last column of each weight matrix $W_i$ corresponds to what is usually thought of as the ``bias vector".  Figure \ref{fig:nnet} illustrates our definition for $\ell = 2$.

We will define $\theta = [ \ovec(W_1)^\top \ovec(W_2)^\top \dots \ovec(W_\ell)^\top ]^\top$, which is the vector consisting of all of the network's parameters concatenated together, where $\ovec$ is the operator which vectorizes matrices by stacking their columns together.

We let $L(y,z)$ denote the loss function which measures the disagreement between a prediction $z$ made by the network, and a target $y$.  The training objective function $h(\theta)$ is the average (or expectation) of losses $L(y,f(x,\theta))$ with respect to a training distribution $\hat{Q}_{x,y}$ over input-target pairs $(x, y)$.  $h(\theta)$ is a proxy for the objective which we actually care about but don't have access to, which is the expectation of the loss taken with respect to the true data distribution $Q_{x,y}$.

%As is required in order to use the natural gradient formalism we 
We will assume that the loss is given by the negative log probability associated with a simple predictive distribution $R_{y|z}$ for $y$ parameterized by $z$, i.e. that we have
\begin{align*}
L(y,z) = -\log r( y | z )
\end{align*}
where $r$ is $R_{y|z}$'s density function.  This is the case for both the standard least-squares and cross-entropy objective functions, where the predictive distributions are multivariate normal and multinomial, respectively.

We will let $P_{y|x}(\theta) = R_{y|f(x,\theta)}$ denote the conditional distribution defined by the neural network, as parameterized by $\theta$, and $p(y|x,\theta) = r(y|f(x,\theta))$ its density function.  Note that minimizing the objective function $h(\theta)$ can be seen as maximum likelihood learning of the model $P_{y|x}(\theta)$.

For convenience we will define the following additional notation:
\begin{align*}
\deriv v = \derivfrac{L( y, f(x,\theta) ) }{v} = -\derivfrac{ \log p(y|x,\theta) }{v} \quad \quad \mbox{and} \quad \quad g_i = \deriv s_i
\end{align*}

Algorithm \ref{alg:nnet_gradient} shows how to compute the gradient $\deriv \theta$ of the loss function of a neural network using standard backpropagation.

\begin{algorithm}
\begin{algorithmic}
\STATE {\bf input:} $a_0 = x$; $\theta$ mapped to $(W_1,W_2,\ldots,W_\ell)$.
\STATE
\STATE /* Forward pass */
\FORALL{$i$ {\bf from} 1 {\bf to} $\ell$}
\STATE $s_i \gets W_i \bar{a}_{i-1}$\\
\STATE $a_i \gets \nonlin_i(s_i)$
\ENDFOR
\STATE
\STATE /* Loss derivative computation */
\STATE $\displaystyle \deriv a_\ell \gets \left. \frac{\partial L(y, z)}{\partial z} \right |_{z = a_\ell}$
\STATE
\STATE
\STATE /* Backwards pass */
\FORALL {$i$ {\bf from} $\ell$ {\bf downto} 1}
\STATE $g_i \gets \deriv a_i \odot \nonlin'_i(s_i)$
\STATE $\deriv W_i \gets g_i \bar{a}_{i-1}^\top$
\STATE $\deriv a_{i-1} \gets W_i^\top g_i$
\ENDFOR
\STATE
%\STATE {\bf output:} $\deriv \theta$ as mapped from $(\deriv W_1, \deriv W_2, \ldots, \deriv W_\ell)$.
\STATE {\bf output:} $\deriv \theta = [ \ovec(\deriv W_1)^\top \ovec(\deriv W_2)^\top \dots \ovec(\deriv W_\ell)^\top ]^\top$
\end{algorithmic}
\caption{An algorithm for computing the gradient of the loss $L(y,f(x,\theta))$ for a given $(x,y)$. Note that we are assuming here for simplicity that the $\nonlin_i$ are defined as coordinate-wise functions. \label{alg:nnet_gradient}}
\end{algorithm}

\begin{figure}[H]
\begin{centering}
\includegraphics[width=.4\columnwidth]{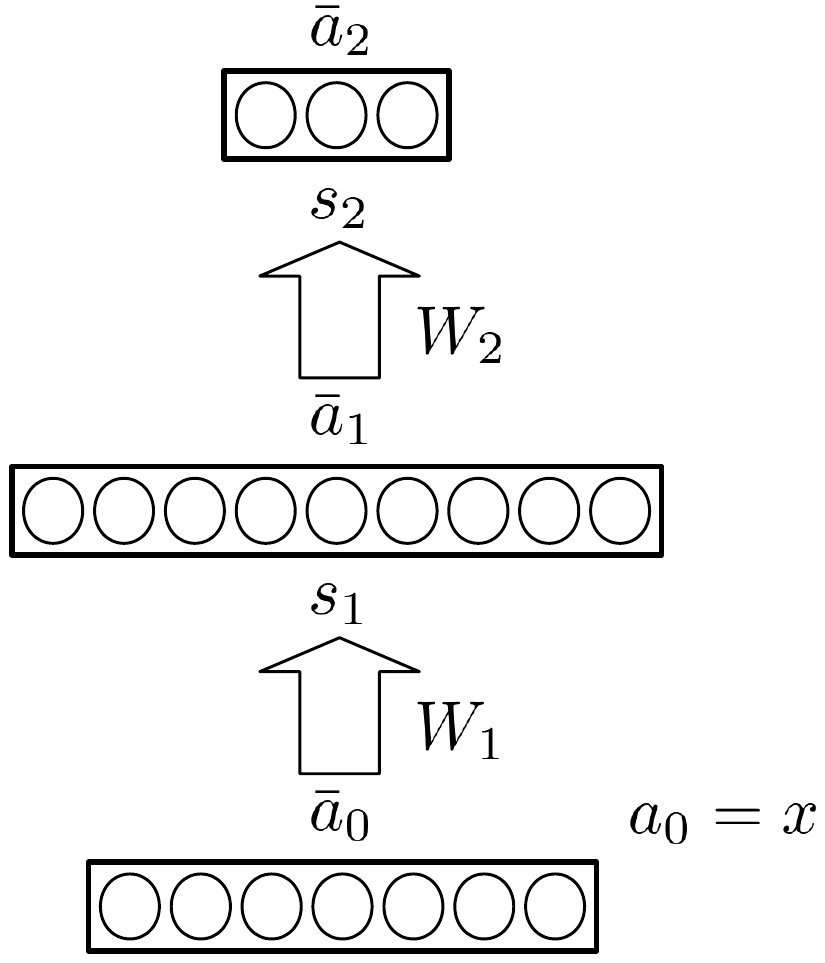}
\caption{\small A depiction of a standard feed-forward neural network for $\ell = 2$. \label{fig:nnet} }
\end{centering}
\end{figure}
%\newpage

\subsection{The Natural Gradient}
\label{sec:natural_gradient}

Because our network defines a conditional model $P_{y|x}(\theta)$, it has an associated Fisher information matrix (which we will simply call ``the Fisher") which is given by
\begin{align*}
F = \expected\left [ \derivfrac{ \log p(y|x,\theta) }{\theta} \derivfrac{ \log p(y|x,\theta) }{\theta}^\top \right] = \expected[ \deriv \theta \deriv \theta^\top ]
\end{align*}
Here, the expectation is taken with respect to the data distribution $Q_x$ over inputs $x$, and the model's predictive distribution $P_{y|x}(\theta)$ over $y$.  Since we usually don't have access to $Q_x$, and the above expectation would likely be intractable even if we did, we will instead compute $F$ using the training distribution $\hat{Q}_x$ over inputs $x$.

The well-known natural gradient \citep{natural_efficient} is defined as $F^{-1} \nabla h(\theta)$.  Motivated from the perspective of information geometry \citep{information_geom}, the natural gradient defines the direction in parameter space which gives the largest change in the objective per unit of change in the model, as measured by the KL-divergence.  This is to be contrasted with the standard gradient, which can be defined as the direction in parameter space which gives the largest change in the objective per unit of change in the parameters, as measured by the standard Euclidean metric.  %This interpretation of the natural gradient follows from the fact that the Fisher information matrix corresponds to the quadratic term in the Taylor series expansion of the KL-divergence between the $P_{y|x}$ at the current parameter setting and a (proposed) new one.   

The natural gradient also has links to several classical ideas from optimization.  It can be shown \citep{ng_martens,Razvan} that the Fisher is equivalent to the Generalized Gauss-Newton matrix (GGN) \citep{schraudolph,HF_chapter} in certain important cases, which is a well-known positive semi-definite approximation to the Hessian of the objective function.  In particular, \citep{ng_martens} showed that when the GGN is defined so that the network is linearized up to the loss function, and the loss function corresponds to the negative log probability of observations under an exponential family model $R_{y|z}$ with $z$ representing the \emph{natural parameters}, then the Fisher corresponds exactly to the GGN.\footnote{Note that the condition that $z$ represents the natural parameters might require one to formally include the nonlinear transformation usually performed by the final nonlinearity $\nonlin_{\ell}$ of the network (such as the logistic-sigmoid transform before a cross-entropy error) as part of the loss function $L$ instead.  Equivalently, one could linearize the network only up to the input $s_\ell$ to $\nonlin_{\ell}$ when computing the GGN (see \citet{HF_chapter}).}

The GGN has served as the curvature matrix of choice in HF and related methods, and so in light of its equivalence to the Fisher, these 2nd-order methods can be seen as approximate natural gradient methods.  And perhaps more importantly from a practical perspective, natural gradient-based optimization methods can conversely be viewed as 2nd-order optimization methods, which as pointed out by \citet{ng_martens}), brings to bare the vast wisdom that has accumulated about how to make such methods work well in both theory and practice \citep[e.g][]{nocedal_book}.  In Section \ref{sec:damping} we productively make use of these connections in order to design a robust and highly effective optimization method using our approximation to the natural gradient/Fisher (which is developed in Sections \ref{sec:kron_approx} and \ref{sec:structured_precision_approx}).

For some good recent discussion and analysis of the natural gradient, see \citet{IGO,ng_martens,Razvan}.

\section{A block-wise Kronecker-factored Fisher approximation}
\label{sec:kron_approx}

The main computational challenge associated with using the natural gradient is computing $F^{-1}$ (or its product with $\nabla h$).  For large networks, with potentially millions of parameters, computing this inverse naively is computationally impractical.  In this section we develop an initial approximation of $F$ which will be a key ingredient in deriving our efficiently computable approximation to $F^{-1}$ and the natural gradient.

Note that $\deriv \theta = [ \ovec(\deriv W_1)^\top \: \ovec(\deriv W_2)^\top \: \cdots \: \ovec(\deriv W_\ell)^\top ]^\top$ and so $F$ can be expressed as
\begin{align*}
%F &= \expected \left[ \deriv\theta \deriv\theta^\top \right] = \expected\left[ [ \ovec(\deriv W_1)^\top \: \ovec(\deriv W_2)^\top \: \cdots \: \ovec(\deriv W_\ell)^\top ]^\top [ \ovec(\deriv W_1)^\top \: \ovec(\deriv W_2)^\top \: \cdots \: \ovec(\deriv W_\ell)^\top ]  \right ] \\
F &= \expected \left[ \deriv\theta \deriv\theta^\top \right] \\
&= \expected\left[ [ \ovec(\deriv W_1)^\top \: \ovec(\deriv W_2)^\top \: \cdots \: \ovec(\deriv W_\ell)^\top ]^\top [ \ovec(\deriv W_1)^\top \: \ovec(\deriv W_2)^\top \: \cdots \: \ovec(\deriv W_\ell)^\top ]  \right ] \\
%F &= \expected \left[ \deriv\theta \deriv\theta^\top \right] = \expected\left[ [ \ovec(\deriv W_1)^\top \: \cdots \: \ovec(\deriv W_\ell)^\top ]^\top [ \ovec(\deriv W_1)^\top \: \cdots \: \ovec(\deriv W_\ell)^\top ]  \right ] \\
%&= \expected\left [ 
%\begin{bmatrix}
% \ovec(\deriv W_1) \ovec(\deriv W_1)^\top   &  \ovec(\deriv W_1) \ovec(\deriv W_2)^\top     &  \cdots     &   \ovec(\deriv W_1) \ovec(\deriv W_\ell)^\top  \\
% \ovec(\deriv W_1) \ovec(\deriv W_2)^\top      &  \ovec(\deriv W_2) \ovec(\deriv W_2)^\top     & \cdots   &   \ovec(\deriv W_2) \ovec(\deriv W_\ell)^\top  \\
%\vdots      &  \vdots     & \ddots & \vdots  \\
% \ovec(\deriv W_\ell) \ovec(\deriv W_1)^\top    &    \ovec(\deriv W_\ell) \ovec(\deriv W_2)^\top     & \cdots &  \ovec(\deriv W_\ell) \ovec(\deriv W_\ell)^\top 
%\end{bmatrix}
%\right ] \\
&=  
\begin{bmatrix}
 \expected \left[ \ovec(\deriv W_1) \ovec(\deriv W_1)^\top \right ]  &  \expected \left[ \ovec(\deriv W_1) \ovec(\deriv W_2)^\top \right ]   &  \cdots     &   \expected \left[ \ovec(\deriv W_1) \ovec(\deriv W_\ell)^\top \right ] \\
\expected \left[ \ovec(\deriv W_2) \ovec(\deriv W_1)^\top \right ]     &  \expected \left[ \ovec(\deriv W_2) \ovec(\deriv W_2)^\top \right ]    & \cdots   &   \expected \left[ \ovec(\deriv W_2) \ovec(\deriv W_\ell)^\top \right ] \\
\vdots      &  \vdots     & \ddots & \vdots  \\
\expected \left[ \ovec(\deriv W_\ell) \ovec(\deriv W_1)^\top \right ]   &   \expected \left[ \ovec(\deriv W_\ell) \ovec(\deriv W_2)^\top \right ]    & \cdots &  \expected \left[ \ovec(\deriv W_\ell) \ovec(\deriv W_\ell)^\top \right ]
\end{bmatrix}
\end{align*}

Thus, we see that $F$ can be viewed as an $\ell$ by $\ell$ block matrix, with the $(i,j)$-th block $F_{i,j}$ given by $F_{i,j} = \expected \left[ \ovec(\deriv W_i) \ovec(\deriv W_j)^\top \right ]$.

Noting that $\deriv W_i = g_i \bar{a}_{i-1}^\top$ and that $\ovec( u v^\top ) = v \otimes u$ we have $\ovec(\deriv W_i) = \ovec(g_i \bar{a}_{i-1}^\top) = \bar{a}_{i-1} \otimes g_i$, and thus we can rewrite $F_{i,j}$ as
\begin{align*}
F_{i,j} = \expected \left[ \ovec(\deriv W_i) \ovec(\deriv W_j)^\top \right ] = \expected \left[(\bar{a}_{i-1} \otimes g_i)(\bar{a}_{j-1} \otimes g_j)^\top \right ] &= \expected \left[(\bar{a}_{i-1} \otimes g_i)(\bar{a}_{j-1}^\top \otimes g_j^\top) \right ] \\
&= \expected \left[ \bar{a}_{i-1} \bar{a}_{j-1}^\top \otimes g_i g_j^\top \right ]
\end{align*}
where $A \otimes B$ denotes the Kronecker product between $A \in \Real^{m \times n}$ and $B$, and is given by
\begin{align*}
A\otimes B \equiv \begin{bmatrix} [A]_{1,1} B & \cdots & [A]_{1,n}B \\ \vdots & \ddots & \vdots \\ [A]_{m,1} B & \cdots & [A]_{m,n} B \end{bmatrix}
\end{align*}
Note that the Kronecker product satisfies many convenient properties that we will make use of in this paper, especially the identity $(A\otimes B)^{-1} = A^{-1}\otimes B^{-1}$.  See \citet{kronprod} for a good discussion of the Kronecker product.

Our initial approximation $\tilde{F}$ to $F$ will be defined by the following block-wise approximation:
\begin{align}
\label{eqn:kron_approx}
F_{i,j} = \expected \left[ \bar{a}_{i-1} \bar{a}_{j-1}^\top \otimes g_i g_j^\top \right] \approx \expected \left[ \bar{a}_{i-1} \bar{a}_{j-1}^\top \right] \otimes \expected \left[ g_i g_j^\top \right] = \bar{A}_{i-1,j-1} \otimes G_{i,j} = \tilde{F}_{i,j}
\end{align}
where $\bar{A}_{i,j} = \expected \left[ \bar{a}_i \bar{a}_j^\top \right]$ and $G_{i,j} = \expected \left[ g_i g_j^\top \right]$.

This gives
\begin{align*}
\tilde{F} = 
\begin{bmatrix}
\bar{A}_{0,0} \otimes G_{1,1}  &  \bar{A}_{0,1} \otimes G_{1,2} &  \cdots     &    \bar{A}_{0,\ell-1} \otimes G_{1,\ell} \\
\bar{A}_{1,0} \otimes G_{2,1}  &  \bar{A}_{1,1} \otimes G_{2,2} &  \cdots     &    \bar{A}_{1,\ell-1} \otimes G_{2,\ell} \\
\vdots      &  \vdots     & \ddots & \vdots  \\
\bar{A}_{\ell-1,0} \otimes G_{\ell,1}   &   \bar{A}_{\ell-1,1} \otimes G_{\ell,2}    & \cdots &  \bar{A}_{\ell-1,\ell-1} \otimes G_{\ell,\ell}
\end{bmatrix}
\end{align*}
which has the form of what is known as a Khatri-Rao product in multivariate statistics.

\begin{figure}
\begin{centering}
\includegraphics[width=.3\columnwidth]{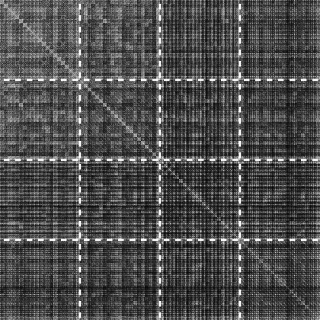}
\hspace{0.025\columnwidth}
\includegraphics[width=.3\columnwidth]{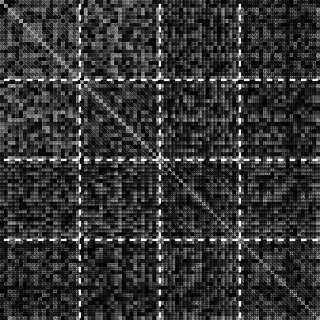}
\hspace{0.025\columnwidth}
\includegraphics[width=.3\columnwidth]{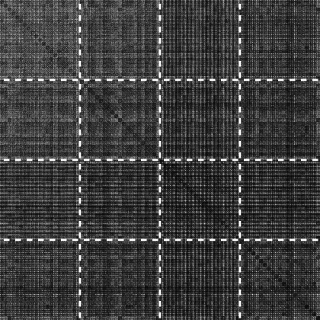}
\caption{\small A comparison of the exact Fisher $F$ and our block-wise Kronecker-factored approximation $\tilde{F}$, for the middle 4 layers of a standard deep neural network partially trained to classify a 16x16 down-scaled version of MNIST.  The network was trained with 7 iterations of \acronym{} in batch mode, achieving 5\% error (the error reached 0\% after 22 iterations) .  The network architecture is 256-20-20-20-20-20-10 and uses standard tanh units. On the \textbf{left} is the exact Fisher $F$, in the \textbf{middle} is our approximation $\tilde{F}$, and on the \textbf{right} is the difference of these.  The dashed lines delineate the blocks.  Note that for the purposes of visibility we plot the absolute values of the entries, with the white level corresponding linearly to the size of these values (up to some maximum, which is the same in each image). \label{fig:kron_approx} }
\end{centering}
\end{figure}

The expectation of a Kronecker product is, in general, not equal to the Kronecker product of expectations, and so this is indeed a major approximation to make, and one which likely won't become exact under any realistic set of assumptions, or as a limiting case in some kind of asymptotic analysis.  Nevertheless, it seems to be fairly accurate in practice, and is able to successfully capture the ``coarse structure" of the Fisher, as demonstrated in Figure \ref{fig:kron_approx} for an example network.

As we will see in later sections, this approximation leads to significant computational savings in terms of storage and inversion, which we will be able to leverage in order to design an efficient algorithm for computing an approximation to the natural gradient.  %In particular, storing blocks of $F$ only requires storing the two consistuent factors, as there is usually no need to actually evaluate the kronecker products.  Moreover, we will be able to efficiently invert these blocks using the well-known identity $(A \otimes B)^{-1} = A^{-1} \otimes B^{-1}$.

%In the following subsection we will provide some justification for this approximation by giving additional interpretations of what it is actually doing.

\subsection{Interpretations of this approximation}
\label{sec:justifying_approx}

Consider an arbitrary pair of weights $[W_i]_{k_1,k_2}$ and $[W_j]_{k_3,k_4}$ from the network, where $[\cdot]_{i,j}$ denotes the value of the $(i,j)$-th entry.  We have that the corresponding derivatives of these weights are given by $\deriv [W_i]_{k_1,k_2} = \bar{a}^{(1)} g^{(1)}$ and $\deriv [W_j]_{k_3,k_4} = \bar{a}^{(2)} g^{(2)}$, where we denote for convenience $\bar{a}^{(1)} = [\bar{a}_{i-1}]_{k_1}$, $\bar{a}^{(2)} = [\bar{a}_{j-1}]_{k_3}$, $g^{(1)} = [g_i]_{k_2}$, and $g^{(2)} = [g_j]_{k_4}$.

The approximation given by eqn.~\ref{eqn:kron_approx} is equivalent to making the following approximation for each pair of weights:
\begin{align}
\label{eqn:scalar_approx}
\expected \left[\deriv[W_i]_{k_1,k_2} \deriv[W_j]_{k_3,k_4} \right] = \expected \left[ (\bar{a}^{(1)} g^{(1)})  (\bar{a}^{(2)} g^{(2)}) \right] = \expected \left[ \bar{a}^{(1)} \bar{a}^{(2)} \: g^{(1)} g^{(2)} \right] \approx \expected \left[ \bar{a}^{(1)} \bar{a}^{(2)} \right] \expected \left[ g^{(1)} g^{(2)} \right]
\end{align}
%In other words, we are approximating the 2nd-order statistics (i.e. expectation of the product) between derivatives of these weights as the product between two quantities: the $2$nd order statistics of the derivatives $g^{(1)}$ and $g^{(2)}$ of the inputs (denoted by $s^{(1)}$ and $s^{(2)}$) to their corresponding outgoing units , and the $2$nd order statistics of the activations $c^{(1)}$ of $c^{(2)}$ of their corresponding incoming units.
And thus one way to interpret the approximation in eqn.~\ref{eqn:kron_approx} is that we are assuming statistical independence between products $\bar{a}^{(1)} \bar{a}^{(2)}$ of unit activities and products $g^{(1)} g^{(2)}$ of unit input derivatives.

Another more detailed interpretation of the approximation emerges by considering the following expression for the approximation error $\expected \left[ \bar{a}^{(1)} \bar{a}^{(2)} \: g^{(1)} g^{(2)} \right] - \expected \left[ \bar{a}^{(1)} \bar{a}^{(2)} \right] \expected \left[ g^{(1)} g^{(2)} \right]$ (which is derived in the appendix):
\begin{align}
\label{eqn:kron_error_expression}
%\expected \left[ \bar{a}^{(1)} \bar{a}^{(2)} \: g^{(1)} g^{(2)} \right] - \expected \left[ \bar{a}^{(1)} \bar{a}^{(2)} \right] \expected \left[ g^{(1)} g^{(2)} \right] = 
\kappa(\bar{a}^{(1)}, \bar{a}^{(2)}, g^{(1)}, g^{(2)}) + \expected[ \bar{a}^{(1)} ] \kappa(\bar{a}^{(2)}, g^{(1)}, g^{(2)}) + \expected[\bar{a}^{(2)}] \kappa(\bar{a}^{(1)}, g^{(1)}, g^{(2)})
\end{align}

Here $\kappa(\cdot)$ denotes the cumulant of its arguments.  Cumulants are a natural generalization of the concept of mean and variance to higher orders, and indeed 1st-order cumulants are means and 2nd-order cumulants are covariances.  Intuitively, cumulants of order $k$ measure the degree to which the interaction between variables is intrinsically of order $k$, as opposed to arising from many lower-order interactions.

A basic upper bound for the approximation error is
\begin{align}
\label{eqn:kron_error_bound}
|\kappa(\bar{a}^{(1)}, \bar{a}^{(2)}, g^{(1)}, g^{(2)})| + |\expected[ \bar{a}^{(1)} ]| |\kappa(\bar{a}^{(2)}, g^{(1)}, g^{(2)})| + |\expected[\bar{a}^{(2)}]| |\kappa(\bar{a}^{(1)}, g^{(1)}, g^{(2)})|
\end{align}
which will be small if all of the higher-order cumulants are small (i.e. those of order 3 or higher).  Note that in principle this upper bound may be loose due to possible cancellations between the terms in eqn.~\ref{eqn:kron_error_expression}.

Because higher-order cumulants are zero for variables jointly distributed according to a multivariate Gaussian, it follows that this upper bound on the approximation error will be small insofar as the joint distribution over $\bar{a}^{(1)}$, $\bar{a}^{(2)}$, $g^{(1)}$, and $g^{(2)}$ is well approximated by a multivariate Gaussian.  And while we are not aware of an argument for why this should be the case in practice, it does seem to be the case that for the example network from Figure \ref{fig:kron_approx}, the size of the error is well predicted by the size of the higher-order cumulants.  In particular, the total approximation error, summed over all pairs of weights in the middle 4 layers, is $2894.4$, and is of roughly the same size as the corresponding upper bound ($4134.6$), whose size is tied to that of the higher order cumulants (due to the impossibility of cancellations in eqn.~\ref{eqn:kron_error_bound}).

\section{Additional approximations to $\tilde{F}$ and inverse computations}
\label{sec:structured_precision_approx}

To the best of our knowledge there is no efficient general method for inverting a Khatri-Rao product like $\tilde{F}$.  Thus, we must make further approximations if we hope to obtain an efficiently computable approximation of the inverse Fisher.

In the following subsections we argue that the inverse of $\tilde{F}$ can be reasonably approximated as having one of two special structures, either of which make it efficiently computable.  The second of these will be slightly less restrictive than the first (and hence a better approximation) at the cost of some additional complexity.  We will then show how matrix-vector products with these approximate inverses can be efficiently computed, which will thus give an efficient algorithm for computing an approximation to the natural gradient.

\subsection{Structured inverses and the connection to linear regression}

%[[insert good lead-in / background discussion here ???]]

%There are well known relationships between the structure of inverse covariance matrix (aka the ``precision" or ``concentration" matrix) and the 

Suppose we are given a multivariate distribution whose associated covariance matrix is $\Sigma$.

Define the matrix $B$ so that for $i \neq j$, $[B]_{i,j}$ is the coefficient on the $j$-th variable in the optimal linear predictor of the $i$-th variable from all the other variables, and for $i = j$, $[B]_{i,j} = 0$.  Then define the matrix $D$ to be the diagonal matrix where $[D]_{i,i}$ is the variance of the error associated with such a predictor of the $i$-th variable.

\citet{pourahmadi_new} showed that $B$ and $D$ can be obtained from the inverse covariance $\Sigma^{-1}$ by the formulas
\begin{align*}
[B]_{i,j} = -\frac{[\Sigma^{-1}]_{i,j}}{[\Sigma^{-1}]_{i,i}} \quad \mbox{and} \quad [D]_{i,i} = \frac{1}{[\Sigma^{-1}]_{i,i}}
\end{align*}
from which it follows that the inverse covariance matrix can be expressed as
\begin{align*}
\Sigma^{-1} = D^{-1} (I - B)
\end{align*}

Intuitively, this result says that each row of the inverse covariance $\Sigma^{-1}$ is given by the coefficients of the optimal linear predictor of the $i$-th variable from the others, up to a scaling factor.  So if the $j$-th variable is much less ``useful" than the other variables for predicting the $i$-th variable, we can expect that the $(i,j)$-th entry of the inverse covariance will be relatively small.  

Note that ``usefulness" is a subtle property as we have informally defined it.  In particular, it is not equivalent to the degree of correlation between the $j$-th and $i$-th variables, or any such simple measure.  As a simple example, consider the case where the $j$-th variable is equal to the $k$-th variable plus independent Gaussian noise.  Since any linear predictor can achieve a lower variance simply by shifting weight from the $j$-th variable to the $k$-th variable, we have that the $j$-th variable is not useful (and its coefficient will thus be zero) in the task of predicting the $i$-th variable for any setting of $i$ other than $i = j$ or $i = k$.

Noting that the Fisher $F$ is a covariance matrix over $\deriv \theta$ w.r.t.~the model's distribution (because $\expected[ \deriv \theta ] = 0$ by Lemma \ref{lemma:udv_expectation}), we can thus apply the above analysis to the distribution over $\deriv \theta$ to gain insight into the approximate structure of $F^{-1}$, and by extension its approximation $\tilde{F}^{-1}$.

Consider the derivative $\deriv W_i$ of the loss with respect to the weights $W_i$ of layer $i$.   Intuitively, if we are trying to predict one of the entries of $\deriv W_i$ from the other entries of $\deriv \theta$, those entries also in $\deriv W_i$ will likely be the most useful in this regard.  Thus, it stands to reason that the largest entries of $\tilde{F}^{-1}$ will be those on the diagonal blocks, so that $\tilde{F}^{-1}$ will be well approximated as block-diagonal, with each block corresponding to a different $\deriv W_i$.

Beyond the other entries of $\deriv W_i$, it is the entries of $\deriv W_{i+1}$ and $\deriv W_{i-1}$ (i.e. those associated with adjacent layers) that will arguably be the most useful in predicting a given entry of $\deriv W_i$.   This is because the true process for computing the loss gradient only uses information from the layer below (during the forward pass) and from the layer above (during the backwards pass).  Thus, approximating $\tilde{F}^{-1}$ as block-tridiagonal seems like a reasonable and milder alternative than taking it to be block-diagonal.  Indeed, this approximation would be exact if the distribution over $\deriv \theta$ were given by a directed graphical model which generated each of the $\deriv W_i$'s, one layer at a time, from either $\deriv W_{i+1}$ or $\deriv W_{i-1}$. Or equivalently, if $\deriv W_i$ were distributed according to an undirected Gaussian graphical model with binary potentials only between entries in the same or adjacent layers.  Both of these models are depicted in Figure \ref{fig:graphical_model}.  

Now while in reality the $\deriv W_i$'s are generated using information from adjacent layers according to a process that is \emph{neither linear nor Gaussian}, it nonetheless stands to reason that their joint statistics might be reasonably approximated by such a model.  In fact, the idea of approximating the distribution over loss gradients with a directed graphical model forms the basis of the recent FANG method of \citet{FANG}.

\begin{figure}
\begin{centering}
%\includegraphics[width=.45\columnwidth]{covfig2.eps}
%\hspace{0.2in}
%\includegraphics[width=.45\columnwidth]{blockfig_new1.eps}
%\includegraphics[width=.3\columnwidth]{covfig3b_kron.eps}
%\includegraphics[width=.3\columnwidth]{covfig3b_kron_compressed.png}
%\includegraphics[width=.3\columnwidth]{newfig2_kronF_L1e-6.png}
\includegraphics[width=.3\columnwidth]{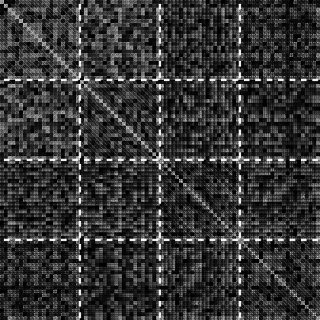}
\hspace{0.025\columnwidth}
\includegraphics[width=.3\columnwidth]{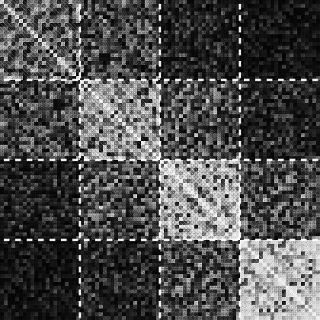}
\hspace{0.025\columnwidth}
\includegraphics[width=.3\columnwidth]{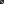}
\caption{\small A comparison of our block-wise Kronecker-factored approximation $\tilde{F}$, and its inverse, using the example neural network from Figure \ref{fig:kron_approx}.   On the \textbf{left} is $\tilde{F}$, in the \textbf{middle} is its exact inverse, and on the \textbf{right} is a 4x4 matrix containing the averages of the absolute values of the entries in each block of the inverse.  As predicted by our theory, the inverse exhibits an approximate block-tridiagonal structure, whereas $\tilde{F}$ itself does not.  Note that the corresponding plots for the exact $F$ and its inverse look similar.  The very small blocks visible on the diagonal of the inverse each correspond to the weights on the outgoing connections of a particular unit. The inverse was computed subject to the factored Tikhonov damping technique described in Sections \ref{sec:factored_tik} and \ref{sec:gamma}, using the same value of $\gamma$ that was used by \acronym{} at the iteration from which this example was taken (see Figure \ref{fig:kron_approx}).  Note that for the purposes of visibility we plot the absolute values of the entries, with the white level corresponding linearly to the size of these values (up to some maximum, which is chosen differently for the Fisher approximation and its inverse, due to the highly differing scales of these matrices).
\label{fig:inverse_approx}}
\end{centering}
\end{figure}

Figure \ref{fig:inverse_approx} examines the extent to which the inverse Fisher is well approximated as block-diagonal or block-tridiagonal for an example network.

In the following two subsections we show how both the block-diagonal and block-tridiagonal approximations to $\tilde{F}^{-1}$ give rise to computationally efficient methods for computing matrix-vector products with it.  And at the end of Section \ref{sec:structured_precision_approx} we present two figures (Figures \ref{fig:inverse_approx2} and \ref{fig:inverse_approx3}) which examine the quality of these approximations for an example network.

\subsection{Approximating $\tilde{F}^{-1}$ as block-diagonal}
\label{sec:blockdiag_approx}

Approximating $\tilde{F}^{-1}$ as block-diagonal is equivalent to approximating $\tilde{F}$ as block-diagonal.  A natural choice for such an approximation $\breve{F}$ of $\tilde{F}$, is to take the block-diagonal of $\breve{F}$ to be that of $\tilde{F}$. This gives the matrix
\begin{align*}
\breve{F} = \diag\left( \tilde{F}_{1,1}, \tilde{F}_{2,2}, \: \ldots, \:, \tilde{F}_{\ell,\ell} \right) = \diag\left( \bar{A}_{0,0} \otimes G_{1,1}, \: \bar{A}_{1,1} \otimes G_{2,2}, \: \ldots, \: \bar{A}_{\ell-1,\ell-1} \otimes G_{\ell,\ell} \right)
\end{align*}

Using the identity $(A \otimes B)^{-1} = A^{-1} \otimes B^{-1}$ we can easily compute the inverse of $\breve{F}$ as
\begin{align*}
\breve{F}^{-1} = \diag\left( \bar{A}_{0,0}^{-1} \otimes G_{1,1}^{-1}, \: \bar{A}_{1,1}^{-1} \otimes G_{2,2}^{-1}, \: \ldots, \: \bar{A}_{\ell-1,\ell-1}^{-1} \otimes G_{\ell,\ell}^{-1} \right)
\end{align*}
Thus, computing $\breve{F}^{-1}$ amounts to computing the inverses of $2\ell$ smaller matrices.

Then to compute $u = \breve{F}^{-1}v$, we can make use of the well-known identity $(A \otimes B) \ovec(X) = \ovec( B X A^\top )$ to get
\begin{align*}
U_i = G_{i,i}^{-1} V_i \bar{A}_{i-1,i-1}^{-1}
\end{align*}
where $v$ maps to $(V_1, V_2,\ldots, V_\ell)$ and $u$ maps to $(U_1, U_2, \ldots, U_\ell)$ in an analogous way to how $\theta$ maps to $(W_1, W_2, \ldots, W_\ell)$.

Note that block-diagonal approximations to the Fisher information have been proposed before in TONGA \citep{TONGA}, where each block corresponds to the weights associated with a particular unit.  In our block-diagonal approximation, the blocks correspond to all the parameters in a given layer, and are thus \emph{much} larger.  In fact, they are so large that they would be impractical to invert as general matrices.

\subsection{Approximating $\tilde{F}^{-1}$ as block-tridiagonal}
\label{sec:blocktridiag_approx}

Note that unlike in the above block-diagonal case, approximating $\tilde{F}^{-1}$ as block-tridiagonal is \emph{not} equivalent to approximating $\tilde{F}$ as block-tridiagonal.  Thus we require a more sophisticated approach to deal with such an approximation.  We develop such an approach in this subsection.

To start, we will define $\hat{F}$ to be the matrix which agrees with $\tilde{F}$ on the tridiagonal blocks, and which satisfies the property that $\hat{F}^{-1}$ is block-tridiagonal.  Note that this definition implies certain values for the off-tridiagonal blocks of $\hat{F}$ which will differ from those of $\tilde{F}$ insofar as $\tilde{F}^{-1}$ is not actually block-tridiagonal.

To establish that such a matrix $\hat{F}$ is well defined and can be inverted efficiently, we first observe that assuming that $\hat{F}^{-1}$ is block-tridiagonal is equivalent to assuming that it is the precision matrix of an undirected Gaussian graphical model (UGGM) over $\deriv\theta$ (as depicted in Figure \ref{fig:graphical_model}), whose density function is proportional to $\exp(-\deriv\theta^\top \hat{F}^{-1} \deriv\theta)$.  As this graphical model has a tree structure, there is an equivalent \emph{directed} graphical model with the same distribution and the same (undirected) graphical structure \citep[e.g.][]{bishop}, where the directionality of the edges is given by a directed acyclic graph (DAG).  Moreover, this equivalent directed model will also be linear/Gaussian, and hence a directed Gaussian Graphical model (DGGM).

Next we will show how the parameters of such a DGGM corresponding to $\hat{F}$ can be efficiently recovered from the tridiagonal blocks of $\hat{F}$, so that $\hat{F}$ is uniquely determined by these blocks (and hence well-defined).  We will assume here that the direction of the edges is from the higher layers to the lower ones.  Note that a different choice for these directions would yield a superficially different algorithm for computing the inverse of $\hat{F}$ that would nonetheless yield the same output.

For each $i$, we will denote the conditional covariance matrix of $\ovec(\deriv W_i)$ on $\ovec(\deriv W_{i+1})$ by $\Sigma_{i|i+1}$ and the linear coefficients from $\ovec(\deriv W_{i+1})$ to $\ovec(\deriv W_i)$ by the matrix $\Psi_{i,i+1}$, so that the conditional distributions defining the model are %of $\ovec(\deriv W_i)$ given $\deriv W_{i+1}$ is
\begin{align*}
\ovec(\deriv W_i) \sim \Normal\left( \Psi_{i,i+1}\!\ovec(\deriv W_{i+1}), \: \Sigma_{i|i+1} \right) \quad \quad \mbox{and} \quad \quad \ovec(\deriv W_\ell) \sim \Normal\left( \vec{0}, \: \Sigma_\ell \right)
\end{align*}

\begin{figure}
\begin{centering}
\includegraphics[width=.85\columnwidth]{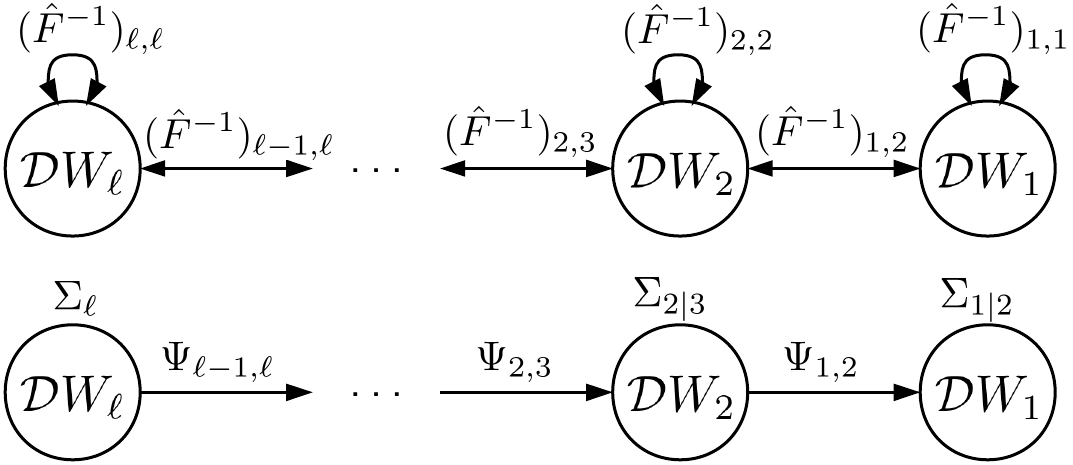}
\caption{\small A diagram depicting the UGGM corresponding to $\hat{F}^{-1}$ and its equivalent DGGM.  The UGGM's edges are labeled with the corresponding weights of the model (these are distinct from the network's weights).  Here, $(\hat{F}^{-1})_{i,j}$ denotes the $(i,j)$-th block of $\hat{F}^{-1}$. The DGGM's edges are labeled with the matrices that specify the linear mapping from the source node to the conditional mean of the destination node (whose conditional covariance is given by its label). \label{fig:graphical_model}}
\end{centering}
\end{figure}

%See Figure \ref{fig:graphical_model} for a depiction of the UGGM corresponding to $\hat{F}^{-1}$ and its equivalent DGGM.

Since $\Sigma_{\ell}$ is just the covariance of $\ovec(\deriv W_\ell)$, it is given simply by $\hat{F}_{\ell,\ell} = \tilde{F}_{\ell,\ell}$.  And for $i \leq \ell-1$, we can see that $\Psi_{i,i+1}$ is given by
\begin{align*}
\Psi_{i,i+1} = \hat{F}_{i,i+1} {\hat{F}_{i+1,i+1}}^{-1} = \tilde{F}_{i,i+1} {\tilde{F}_{i+1,i+1}}^{-1} = \left( \bar{A}_{i-1,i} \otimes G_{i,i+1} \right) \left( \bar{A}_{i,i} \otimes G_{i+1,i+1} \right)^{-1} %&= \bar{A}_{i-1,i} \bar{A}_{i,i}^{-1} \otimes G_{i,i+1} G_{i+1,i+1}^{-1} \\
&= \Psi^{\bar{A}}_{i-1,i} \otimes \Psi^G_{i,i+1}
\end{align*}
where
\begin{align*}
\Psi^{\bar{A}}_{i-1,i} = \bar{A}_{i-1,i} \bar{A}_{i,i}^{-1}  \quad \quad \mbox{and} \quad \quad  \Psi^G_{i,i+1} = G_{i,i+1} G_{i+1,i+1}^{-1}
\end{align*}
The conditional covariance $\Sigma_{i|i+1}$ is thus given by
\begin{align*}
\Sigma_{i|i+1} = \hat{F}_{i,i} - \Psi_{i,i+1} \hat{F}_{i+1,i+1} \Psi_{i,i+1}^\top &= \tilde{F}_{i,i} - \Psi_{i,i+1} \tilde{F}_{i+1,i+1} \Psi_{i,i+1}^\top \\
&= \bar{A}_{i-1,i-1} \otimes G_{i,i} - \Psi^{\bar{A}}_{i-1,i} \bar{A}_{i,i} \Psi^{\bar{A}\top}_{i-1,i} \otimes \Psi^G_{i,i+1} G_{i+1,i+1}  \Psi^{G\top}_{i,i+1}
\end{align*}

Following the work of \citet{FANG}, we use the block generalization of well-known ``Cholesky" decomposition of the precision matrix of DGGMs \citep{pourahmadi_old}, which gives
\begin{align*}
\hat{F}^{-1} = \Xi^\top \Lambda \Xi
\end{align*}
where,
\begin{align*}
\Lambda = \diag \left ( \Sigma_{1|2}^{-1}, \Sigma_{2|3}^{-1}, \: \ldots, \: \Sigma_{\ell-1|\ell}^{-1}, \Sigma_\ell^{-1} \right ) \quad \quad  \mbox{and} \quad \quad 
\Xi = 
\begin{bmatrix}
I  & -\Psi_{1,2}  &   &    \\
   &  I & -\Psi_{2,3}  &  \\
 &   &  I &  \ddots \\
 &   &    & \ddots & -\Psi_{\ell-1,\ell} \\
&   &   &     &  I 
\end{bmatrix}
\end{align*}

Thus, matrix-vector multiplication with $\hat{F}^{-1}$ amounts to performing matrix-vector multiplication by $\Xi$, followed by $\Lambda$, and then by $\Xi^\top$.

As in the block-diagonal case considered in the previous subsection, matrix-vector products with $\Xi$ (and $\Xi^\top$) can be efficiently computed using the well-known identity $(A \otimes B)^{-1} = A^{-1} \otimes B^{-1}$.  In particular, $u = \Xi^\top v$ can be computed as
\begin{align*}
U_i = V_i - \Psi^{G\top}_{i-1,i} V_{i-1} \Psi^{\bar{A}}_{i-2,i-1} \quad \quad \mbox{ and } \quad \quad U_1 = V_1
\end{align*}
and similarly $u = \Xi v$ can be computed as
\begin{align*}
U_i = V_i - \Psi^G_{i,i+1} V_{i+1} \Psi^{\bar{A}\top}_{i-1,i} \quad \quad \mbox{ and } \quad \quad U_\ell = V_\ell
\end{align*}
where the $U_i$'s and $V_i$'s are defined in terms of $u$ and $v$ as in the previous subsection.

Multiplying a vector $v$ by $\Lambda$ amounts to multiplying each $\ovec(V_i)$ by the corresponding $\Sigma_{i|i+1}^{-1}$.  This is slightly tricky because $\Sigma_{i|i+1}$ is the difference of Kronecker products, so we cannot use the straightforward identity $(A \otimes B)^{-1} = A^{-1} \otimes B^{-1}$.  Fortunately, there are efficient techniques for inverting such matrices which we discuss in detail in Appendix \ref{sec:stein_solve}.

%CUT:
%Note that the equation $u = \Lambda v$ is equivalent to $\Lambda^{-1} u = v$, which is in turn equivalent to $\Sigma_i \ovec(U_i) = \ovec(V_i)$.  Using the identity $(A \otimes B) \ovec(X) = \ovec( B X A^\top )$, and our formula for $\Sigma_i$, this can be expressed in matrix form as
%\begin{align*}
%G_{i,i} U_i Z_{i,i}  -  \Psi^G_{i,i+1} G_{i+1,i+1}  {\Psi^G_{i,i+1}}^\top U_i \Psi^Z_{i,i+1} \bar{A}_{i,i} {\Psi^\bar{A}_{i-1,i}}^\top = V_i
%\end{align*}
%This has the form $A U_i B - C U_i D = E$, which is known as a generalized Stein equation, and has been well studied in the control theory literature [[e.g. cite]].  Fortunately, there are various efficient methods available for solving such equations, some of which we will discuss in Section [[ref]] (including the one we end up using in our experiments).

\subsection{Examining the approximation quality}
\label{sec:extra_quality_figs}

Figures \ref{fig:inverse_approx2} and \ref{fig:inverse_approx3} examine the quality of the approximations $\breve{F}$ and $\hat{F}$ of $\tilde{F}$, which are derived by approximating $\tilde{F}^{-1}$ as block-diagonal and block-tridiagonal (resp.), for an example network.  

From Figure \ref{fig:inverse_approx2}, which compares $\breve{F}$ and $\hat{F}$ directly to $\tilde{F}$, we can see that while $\breve{F}$ and $\hat{F}$ exactly capture the diagonal and tridiagonal blocks (resp.) of $\tilde{F}$, as they must by definition, $\hat{F}$ ends up approximating the off-tridiagonal blocks of $\tilde{F}$ very well too.  This is likely owed to the fact that the approximating assumption used to derive $\hat{F}$, that $\tilde{F}^{-1}$ is block-tridiagonal, is a very reasonable one in practice (judging by Figure \ref{fig:inverse_approx}).

Figure \ref{fig:inverse_approx3}, which compares $\breve{F}^{-1}$ and $\hat{F}^{-1}$ to $\tilde{F}^{-1}$, paints an arguably more interesting and relevant picture, as the quality of the approximation of the natural gradient will be roughly proportional\footnote{The error in any approximation $F_0^{-1}\nabla h$ of the natural gradient $F^{-1}\nabla h$ will be roughly proportional to the error in the approximation $F_0^{-1}$ of the associated \emph{inverse} Fisher $F^{-1}$, since $\|F^{-1}\nabla h - F_0^{-1}\nabla h\| \leq \|\nabla h\| \|F^{-1} - F_0^{-1}\|$.} to the quality of approximation of the \emph{inverse} Fisher.  We can see from this figure that due to the approximate block-diagonal structure of $\tilde{F}^{-1}$, $\breve{F}^{-1}$ is actually a reasonably good approximation of $\tilde{F}^{-1}$, despite $\breve{F}$ being a rather poor approximation of $\tilde{F}$ (based on Figure \ref{fig:inverse_approx2}).  Meanwhile, we can see that by accounting for the tri-diagonal blocks, $\hat{F}^{-1}$ is indeed a significantly better approximation of $\tilde{F}^{-1}$ than $\breve{F}^{-1}$ is, even on the \emph{diagonal} blocks.

\begin{figure}%[H]
\begin{centering}
\includegraphics[width=.3\columnwidth]{newfig3_kronF_defdamp.png}
\hspace{0.025\columnwidth}
\includegraphics[width=.3\columnwidth]{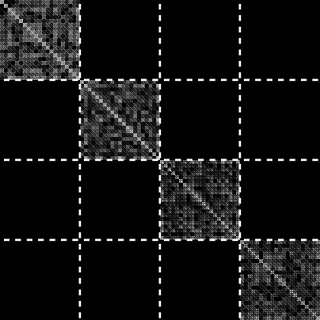}
\hspace{0.025\columnwidth}
\includegraphics[width=.3\columnwidth]{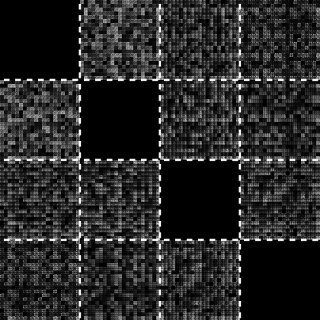}

\vspace{0.025\columnwidth}

\includegraphics[width=.3\columnwidth]{newfig3_kronF_defdamp.png}
\hspace{0.025\columnwidth}
\includegraphics[width=.3\columnwidth]{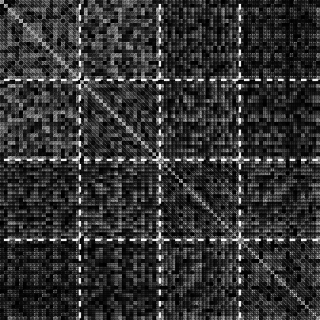}
\hspace{0.025\columnwidth}
\includegraphics[width=.3\columnwidth]{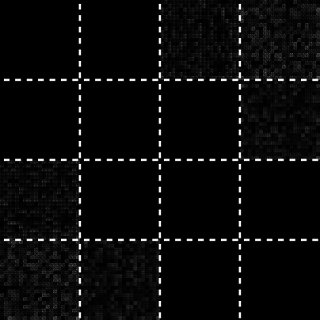}

\caption{\small A comparison of our block-wise Kronecker-factored approximation $\tilde{F}$, and its approximations $\breve{F}$ and $\hat{F}$ (which are based on approximating the inverse $\tilde{F}^{-1}$ as either block-diagonal or block-tridiagonal, respectively), using the example neural network from Figure \ref{fig:kron_approx}. On the \textbf{left} is $\tilde{F}$, in the \textbf{middle} its approximation, and on the \textbf{right} is the absolute difference of these.  The \textbf{top row} compares to $\breve{F}$ and the \textbf{bottom row} compares to $\hat{F}$.  While the diagonal blocks of the top right matrix, and the tridiagonal blocks of the bottom right matrix are exactly zero due to how $\breve{F}$ and $\hat{F}$ (resp.) are constructed, the off-tridiagonal blocks of the bottom right matrix, while being very close to zero, are actually non-zero (which is hard to see from the plot). Note that for the purposes of visibility we plot the absolute values of the entries, with the white level corresponding linearly to the size of these values (up to some maximum, which is the same in each image). \label{fig:inverse_approx2}}
\end{centering}
\end{figure}

\begin{figure}[H]
\begin{centering}
\includegraphics[width=.3\columnwidth]{newfig3_invkronF_defdamp.png}
\hspace{0.025\columnwidth}
\includegraphics[width=.3\columnwidth]{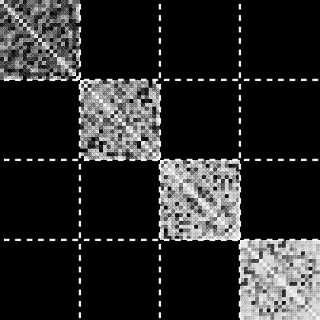}
\hspace{0.025\columnwidth}
\includegraphics[width=.3\columnwidth]{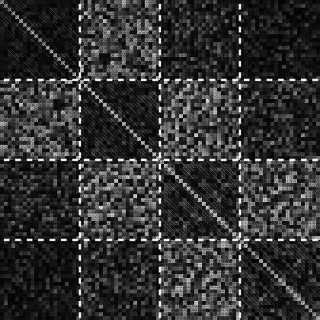}

\vspace{0.025\columnwidth}

\includegraphics[width=.3\columnwidth]{newfig3_invkronF_defdamp.png}
\hspace{0.025\columnwidth}
\includegraphics[width=.3\columnwidth]{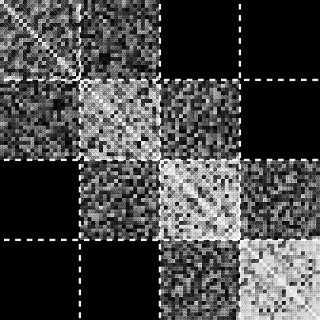}
\hspace{0.025\columnwidth}
\includegraphics[width=.3\columnwidth]{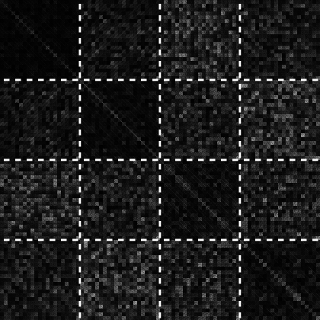}

\caption{\small A comparison of the exact inverse $\tilde{F}^{-1}$ of our block-wise Kronecker-factored approximation $\tilde{F}$, and its block-diagonal and block-tridiagonal approximations $\breve{F}^{-1}$ and $\hat{F}^{-1}$ (resp.), using the example neural network from Figure \ref{fig:kron_approx}. On the \textbf{left} is $\tilde{F}^{-1}$, in the \textbf{middle} its approximation, and on the \textbf{right} is the absolute difference of these.  The \textbf{top row} compares to $\breve{F}^{-1}$ and the \textbf{bottom row} compares to $\hat{F}^{-1}$.  The inverse was computed subject to the factored Tikhonov damping technique described in Sections \ref{sec:factored_tik} and \ref{sec:gamma}, using the same value of $\gamma$ that was used by \acronym{} at the iteration from which this example was taken (see Figure \ref{fig:kron_approx}).  Note that for the purposes of visibility we plot the absolute values of the entries, with the white level corresponding linearly to the size of these values (up to some maximum, which is the same in each image). \label{fig:inverse_approx3}}
\end{centering}
\end{figure}

\section{Estimating the required statistics}
\label{sec:estimating_A_and_G}

Recall that $\bar{A}_{i,j} = \expected \left[ \bar{a}_i \bar{a}_j^\top \right]$ and $G_{i,j} = \expected \left[ g_i g_j^\top \right]$. Both approximate Fisher inverses discussed in Section \ref{sec:structured_precision_approx} require some subset of these.  In particular, the block-diagonal approximation requires them for $i = j$, while the block-tridiagonal approximation requires them for $j \in \{i,i+1\}$ (noting that $\bar{A}_{i,j}^\top = \bar{A}_{j,i}$ and $G_{i,j}^\top = G_{j,i}$).

Since the $\bar{a}_i$'s don't depend on $y$, we can take the expectation $\expected \left[ \bar{a}_i \bar{a}_j^\top \right]$ with respect to just the training distribution $\hat{Q}_x$ over the inputs $x$.  On the other hand, the $g_i$'s do depend on $y$, and so the expectation\footnote{It is important to note this expectation should \emph{not} be taken with respect to the training/data distribution over $y$ (i.e. $\hat{Q}_{y|x}$ or $Q_{y|x}$).  Using the training/data distribution for $y$ would perhaps give an approximation to a quantity known as the ``empirical Fisher information matrix", which lacks the previously discussed equivalence to the Generalized Gauss-Newton matrix, and would not be compatible with the theoretical analysis performed in Section \ref{sec:justifying_approx} (in particular, Lemma \ref{lemma:udv_expectation} would break down).  Moreover, such a choice would not give rise to what is usually thought of as the natural gradient, and based on the findings of \citet{HF}, would likely perform worse in practice as part of an optimization algorithm.  See \citet{ng_martens} for a more detailed discussion of the empirical Fisher and reasons why it may be a poor choice for a curvature matrix compared to the standard Fisher.} $\expected \left[ g_i g_j^\top \right]$ must be taken with respect to \emph{both} $\hat{Q}_x$ and the network's predictive distribution $P_{y|x}$.

While computing matrix-vector products with the $G_{i,j}$ could be done exactly and efficiently for a given input $x$ (or small mini-batch of $x$'s) by adapting the methods of \citet{schraudolph}, there doesn't seem to be a sufficiently efficient method for computing the entire matrix itself.  Indeed, the hardness results of \citet{curvprop} suggest that this would require, for each example $x$ in the mini-batch, work that is asymptotically equivalent to matrix-matrix multiplication involving matrices the same size as $G_{i,j}$.  While a small constant number of such multiplications is arguably an acceptable cost (see Section \ref{sec:efficiency}), a number which grows with the size of the mini-batch would not be.

Instead, we will approximate the expectation over $y$ by a standard Monte-Carlo estimate obtained by sampling $y$'s from the network's predictive distribution and then rerunning the backwards phase of backpropagation (see Algorithm \ref{alg:nnet_gradient}) as if these were the training targets.  

Note that computing/estimating the required $\bar{A}_{i,j}$/$G_{i,j}$'s involves computing averages over outer products of various $\bar{a}_i$'s from network's usual forward pass, and $g_i$'s from the modified backwards pass (with targets sampled as above).  Thus we can compute/estimate these quantities on the same input data used to compute the gradient $\nabla h$, at the cost of one or more additional backwards passes, and a few additional outer-product averages.  Fortunately, this turns out to be quite inexpensive, as we have found that just one modified backwards pass is sufficient to obtain a good quality estimate in practice, and the required outer-product averages are similar to those already used to compute the gradient in the usual backpropagation algorithm.  %Implementing these computations usually just amounts to copying the backwards pass once, and inserting a few lines of code into the forward pass and the modified backwards pass to 

In the case of online/stochastic optimization we have found that the best strategy is to maintain running estimates of the required $\bar{A}_{i,j}$'s and $G_{i,j}$'s using a simple exponentially decaying averaging scheme.  In particular, we take the new running estimate to be the old one weighted by $\epsilon$, plus the estimate on the new mini-batch weighted by $1-\epsilon$, for some $0 \leq \epsilon < 1$.  In our experiments we used $\epsilon = \min\{ 1-1/k, 0.95\}$, where $k$ is the iteration number.

Note that the more naive averaging scheme where the estimates from each iteration are given equal weight would be inappropriate here.  This is because the $\bar{A}_{i,j}$'s and $G_{i,j}$'s depend on the network's parameters $\theta$, and these will slowly change over time as optimization proceeds, so that estimates computed many iterations ago will become stale.% and thus will need to be ``down-weighted".

This kind of exponentially decaying averaging scheme is commonly used in methods involving diagonal or block-diagonal approximations (with much smaller blocks than ours) to the curvature matrix \citep[e.g.][]{lecun_tricks,ng_adaptive, pesky}.  Such schemes have the desirable property that they allow the curvature estimate to depend on much more data than can be reasonably processed in a single mini-batch.  

Notably, for methods like HF which deal with the exact Fisher indirectly via matrix-vector products, such a scheme would be impossible to implement efficiently, as the exact Fisher matrix (or GGN) seemingly cannot be summarized using a compact data structure whose size is independent of the amount of data used to estimate it.  Indeed, it seems that the only representation of the exact Fisher which would be independent of the amount of data used to estimate it would be an explicit $n \times n$ matrix (which is far too big to be practical).  Because of this, HF and related methods must base their curvature estimates only on subsets of data that can be reasonably processed all at once, which limits their effectiveness in the stochastic optimization regime. %, which is a severe drawback to using such approaches in the stochastic optimization setting.

\section{Update damping}
\label{sec:damping}

\subsection{Background and motivation}

%The idealized natural gradient approach is to follow the smooth geodesic path in the Riemannian manifold (implied by the Fisher information matrix viewed as a metric tensor) from the current predictive distribution to the target one, by taking a series of infinitesimally small steps in the direction of the natural gradient (which gets recomputed at each point).  While this is clearly impractical as a real optimization method, one can take larger steps and still follow these geodesics approximately.  But in our experience, to obtain an update which satisfies the minimal requirement of not worsening the objective function value, one must make the step size so small that the resulting optimization algorithm is not practical.

The idealized natural gradient approach is to follow the smooth path in the Riemannian manifold (implied by the Fisher information matrix viewed as a metric tensor) that is generated by taking a series of infinitesimally small steps (in the original parameter space) in the direction of the natural gradient (which gets recomputed at each point). While this is clearly impractical as a real optimization method, one can take larger steps and still follow these paths approximately.  But in our experience, to obtain an update which satisfies the minimal requirement of not worsening the objective function value, it is often the case that one must make the step size so small that the resulting optimization algorithm performs poorly in practice.

The reason that the natural gradient can only be reliably followed a short distance is that it is defined merely as an optimal \emph{direction} (which trades off improvement in the objective versus change in the predictive distribution), and not a discrete \emph{update}.  

Fortunately, as observed by \citet{ng_martens}, the natural gradient can be understood using a more traditional optimization-theoretic perspective which implies how it can be used to generate updates that will be useful over larger distances.  In particular, when $R_{y|z}$ is an exponential family model with $z$ as its \emph{natural} parameters (as it will be in our experiments), \citet{ng_martens} showed that the Fisher becomes equivalent to the Generalized Gauss-Newton matrix (GGN), which is a positive semi-definite approximation of the Hessian of $h$.  Additionally, there is the well-known fact that when $L(x,f(x,\theta))$ is the negative log-likelihood function associated with a given $(x,y)$ pair (as we are assuming in this work), the Hessian $H$ of $h$ and the Fisher $F$ are closely related in the sense $H$ is the expected Hessian of $L$ under the \emph{training} distribution $\hat{Q}_{x,y}$, while $F$ is the expected Hessian of $L$ under the \emph{model's} distribution $P_{x,y}$ (defined by the density $p(x,y) = p(y|x)q(x)$).

From these observations it follows that 
\begin{align}
\label{eqn:quad_model}
M(\delta) = \frac{1}{2} \delta^\top F \delta + \nabla h(\theta)^\top \delta + h(\theta)
\end{align}
can be viewed as a convex approximation of the 2nd-order Taylor series of expansion of $h(\delta + \theta)$, whose minimizer $\delta^*$ is the (negative) natural gradient $-F^{-1} \nabla h(\theta)$.  Note that if we add an $\ell_2$ or ``weight-decay" regularization term to $h$ of the form $\displaystyle \frac{\eta}{2} \| \theta \|_2^2$, then similarly $F + \eta I$ can be viewed as an approximation of the Hessian of $h$, and replacing $F$ with $F + \eta I$ in $M(\delta)$ yields an approximation of the 2nd-order Taylor series, whose minimizer is a kind of ``regularized" (negative) natural gradient $-(F + \eta I)^{-1} \nabla h(\theta)$, which is what we end up using in practice.

From the interpretation of the natural gradient as the minimizer of $M(\delta)$, we can see that it fails to be useful as a local update only insofar as $M(\delta)$ fails to be a good local approximation to $h(\delta + \theta)$.  And so as argued by \citet{ng_martens}, it is natural to make use of the various ``damping" techniques that have been developed in the optimization literature for dealing with the breakdowns in local quadratic approximations that inevitably occur during optimization.  Notably, this breakdown usually won't occur in the final ``local convergence" stage of optimization where the function becomes well approximated as a convex quadratic within a sufficiently large neighborhood of the local optimum.  This is the phase traditionally analyzed in most theoretical results, and while it is important that an optimizer be able to converge well in this final phase, it is arguably much more important from a practical standpoint that it behaves sensibly before this phase.  

This initial ``exploration phase" \citep{darken_2phase} is where damping techniques help in ways that are not apparent from the asymptotic convergence theorems alone, which is not to say there are not strong mathematical arguments that support their use \citep[see][]{nocedal_book}.  In particular, in the exploration phase it will often still be true that $h(\theta+\delta)$ is accurately approximated by a convex quadratic \emph{locally within some region} around $\delta = 0$, and that therefor optimization can be most efficiently performed by minimizing a sequence of such convex quadratic approximations within adaptively sized local regions.

Note that well designed damping techniques, such as the ones we will employ, automatically adapt to the local properties of the function, and effectively ``turn themselves off" when the quadratic model becomes a sufficiently accurate local approximation of $h$, allowing the optimizer to achieve the desired asymptotic convergence behavior \citep{levenberg_marquardt}.

Successful and theoretically well-founded damping techniques include Tikhonov damping (aka Tikhonov regularization, which is closely connected to the trust-region method) with Levenberg-Marquardt style adaptation \citep{levenberg_marquardt}, line-searches, and trust regions, truncation, etc., all of which tend to be much more effective in practice than merely applying a learning rate to the update, or adding a \emph{fixed} multiple of the identity to the curvature matrix.  Indeed, a subset of these techniques was exploited in the work of \citet{HF}, and primitive versions of them have appeared implicitly in older works such as \citet{diag_lecun}, and also in many recent diagonal methods like that of \citet{ADADELTA}, although often without a good understanding of what they are doing and why they help. 

Crucially, more powerful 2nd-order optimizers like HF and \acronym{}, which have the capability of taking \emph{much larger steps} than 1st-order methods (or methods which use diagonal curvature matrices), \emph{require} more sophisticated damping solutions to work well, and will usually \emph{completely fail} without them, which is consistent with predictions made in various theoretical analyses \citep[e.g.][]{nocedal_book}.  As an analogy one can think of such powerful 2nd-order optimizers as extremely fast racing cars that need more sophisticated control systems than standard cars to prevent them from flying off the road.  Arguably one of the reasons why high-powered 2nd-order optimization methods have historically tended to under-perform in machine learning applications, and in neural network training in particular, is that their designers did not understand or take seriously the issue of quadratic model approximation quality, and did not employ the more sophisticated and effective damping techniques that are available to deal with this issue. 

For a detailed review and discussion of various damping techniques and their crucial role in practical 2nd-order optimization methods, we refer the reader to \citet{HF_chapter}.

%In this section we give a concrete proposal for how to use these damping techniques to give a practical optimization method which we have found works very well in practice.  

%Natural gradient methods, like any optimization methods that involve the use of quadratic approximations (see discussion in Section \ref{sec:natural_gradient} [[unless it got moved]]), must be used with some kind of ``update damping" or ``trust-region" scheme in order to work well.  Without such a schemes, these methods tend to trust their local quadratic approximations too much, and will often produce inappropriate large updates and unusable updates.

%Damping schemes can range from simple scaling/line-searching of the update, to so-called Tikhonov regularization methods that involve adding to the curvature matrix a multiple of the identity, to more exotic ones such as premature termination of conjugate gradient iterations (in the context of HF, see [[cite HF chapter]]), or ones that use the particular structure of the model being optimized [[cite HF RNN paper]].  

%Note that because the local properties of the objective function change as optimization proceeds, any success scheme must be adaptive in order to work well in practice.

\subsection{A highly effective damping scheme for \acronym{}}

Methods like HF which use the exact Fisher seem to work reasonably well with an adaptive Tikhonov regularization technique where $\lambda I$ is added to $F + \eta I$, and where $\lambda$ is adapted according to Levenberg-Marquardt style adjustment rule. %, as their quadratic models typically become more trustworthy as optimization progresses, thus allowing the adjustment heuristics to shrink the Tikhonov terms.  
This common and well-studied method can be shown to be equivalent to imposing an adaptive spherical region (known as a ``trust region") which constrains the optimization of the quadratic model \citep[e.g][]{nocedal_book}.  However, we found that this simple technique is insufficient when used with our approximate natural gradient update proposals.  In particular, we have found that there never seems to be a ``good" choice for $\lambda$ that gives rise to updates which are of a quality comparable to those produced by methods that use the exact Fisher, such as HF.  

One possible explanation for this finding is that, unlike quadratic models based on the exact Fisher (or equivalently, the GGN), the one underlying \acronym{} has no guarantee of being accurate up to 2nd-order.  Thus, $\lambda$ must remain large in order to compensate for this intrinsic 2nd-order inaccuracy of the model, which has the side effect of ``washing out" the small eigenvalues (which represent important low-curvature directions).

%By contrast, we found that this failed to happen with our method [[name??]], 
Fortunately, through trial and error, we were able to find a relatively simple and highly effective damping scheme, which combines several different techniques, and which works well within \acronym{}.  Our scheme works by computing an initial update proposal using a version of the above described adaptive Tikhonov damping/regularization method, and then re-scaling this according to quadratic model computed using the exact Fisher.  This second step is made practical by the fact that it only requires a single matrix-vector product with the exact Fisher, and this can be computed efficiently using standard methods.  We discuss the details of this scheme in the following subsections.

\subsection{A factored Tikhonov regularization technique}
\label{sec:factored_tik}

In the first stage of our damping scheme we generate a candidate update proposal $\Delta$ by applying a slightly modified form of Tikhononv damping to our approximate Fisher, before multiplying $-\nabla h$ by its inverse.

In the usual Tikhonov regularization/damping technique, one adds $(\lambda + \eta) I$ to the curvature matrix (where $\eta$ accounts for the $\ell_2$ regularization), which is equivalent to adding a term of the form $\displaystyle \frac{\lambda + \eta}{2} \| \delta \|_2^2$ to the corresponding quadratic model (given by $M(\delta)$ with $F$ replaced by our approximation).  For the block-diagonal approximation $\breve{F}$ of $\tilde{F}$ (from Section \ref{sec:blockdiag_approx}) this amounts to adding $(\lambda + \eta)I$ (for a lower dimensional $I$) to each of the individual diagonal blocks, which gives modified diagonal blocks of the form
\begin{align}
\label{eqn:exact_tik}
\bar{A}_{i-1,i-1} \otimes G_{i,i} \:+\: (\lambda + \eta) I \:=\: \bar{A}_{i-1,i-1} \otimes G_{i,i} \:+\: (\lambda + \eta) I \otimes I
\end{align}
Because this is the sum of two Kronecker products we cannot use the simple identity $(A \otimes B)^{-1} = A^{-1} \otimes B^{-1}$ anymore.  Fortunately however, there are efficient techniques for inverting such matrices, which we discuss in detail in Appendix \ref{sec:stein_solve}.

If we try to apply this same Tikhonov technique to our more sophisticated approximation $\hat{F}$ of $\tilde{F}$ (from Section \ref{sec:blocktridiag_approx}) by adding $(\lambda + \eta) I$ to each of the diagonal blocks of $\hat{F}$, it is no longer clear how to efficiently invert $\hat{F}$.  Instead, a solution which we have found works very well in practice (and which we also use for the block-diagonal approximation $\breve{F}$), is to add $\pi_i (\sqrt{\lambda + \eta}) I$ and $\displaystyle \frac{1}{\pi_i} (\sqrt{\lambda + \eta}) I$ for a scalar constant $\pi_i$ to the individual Kronecker factors $\bar{A}_{i-1,i-1}$ and $G_{i,i}$ (resp.) of each diagonal block, giving
\begin{align}
\label{eqn:approx_tik}
\left (\bar{A}_{i-1,i-1} + \pi_i (\sqrt{\lambda + \eta}) I \right) \otimes \left (G_{i,i} + \frac{1}{\pi_i} (\sqrt{\lambda + \eta}) I \right)
\end{align}
As this is a single Kronecker product, all of the computations described in Sections \ref{sec:blockdiag_approx} and \ref{sec:blocktridiag_approx} can still be used here too, simply by replacing each $\bar{A}_{i-1,i-1}$ and $G_{i,i}$ with their modified versions $\bar{A}_{i-1,i-1} + \pi_i (\sqrt{\lambda + \eta}) I$ and $\displaystyle G_{i,i} + \frac{1}{\pi_i} (\sqrt{\lambda + \eta}) I$.

To see why the expression in eqn.~\ref{eqn:approx_tik} is a reasonable approximation to eqn.~\ref{eqn:exact_tik}, note that expanding it gives
\begin{align*}
\bar{A}_{i-1,i-1} \otimes G_{i,i} + \pi_i (\sqrt{\lambda + \eta}) I \otimes G_{i,i} +  \frac{1}{\pi_i} (\sqrt{\lambda + \eta}) \bar{A}_{i-1,i-1} \otimes I + (\lambda + \eta) I \otimes I
\end{align*}
which differs from eqn.~\ref{eqn:exact_tik} by the residual error expression
\begin{align*}
\pi_i (\sqrt{\lambda + \eta}) I \otimes G_{i,i} +  \frac{1}{\pi_i} (\sqrt{\lambda + \eta}) \bar{A}_{i-1,i-1} \otimes I
\end{align*}

While the choice of $\pi_i = 1$ is simple and can sometimes work well in practice, a slightly more principled choice can be found by minimizing the obvious upper bound (following from the triangle inequality) on the matrix norm of this residual expression, for some matrix norm $\| \cdot \|_\upsilon$.  This gives
\begin{align*}
\pi_i = \sqrt{\frac{\|\bar{A}_{i-1,i-1} \otimes I\|_\upsilon}{\| I \otimes G_{i,i} \|_\upsilon}}
\end{align*}

Evaluating this expression can be done efficiently for various common choices of the matrix norm $\| \cdot \|_\upsilon$.  For example, for a general $B$ we have $\| I \otimes B \|_F = \| B \otimes I \|_F = \sqrt{d} \|B\|_F$ where $d$ is the height/dimension of $I$, and also $\| I \otimes B \|_2 = \| B \otimes I \|_2 = \| B \|_2$.  

In our experience, one of the best and must robust choices for the norm $\| \cdot \|_\upsilon$ is the trace-norm, which for PSD matrices is given by the trace.  With this choice, the formula for $\pi_i$ has the following simple form:
\begin{align*}
\pi_i = \sqrt{ \frac{\trace(\bar{A}_{i-1,i-1})/(d_{i-1}+1)}{\trace(G_{i,i})/d_i} }
\end{align*}
where $d_i$ is the dimension (number of units) in layer $i$.  Intuitively, the inner fraction is just the average eigenvalue of $\bar{A}_{i-1,i-1}$ divided by the average eigenvalue of $G_{i,i}$.

Interestingly, we have found that this factored approximate Tikhonov approach, which was originally motivated by computational concerns, often works better than the exact version (eqn.~\ref{eqn:exact_tik}) in practice.  The reasons for this are still somewhat mysterious to us, but it may have to do with the fact that the inverse of the product of two quantities is often most robustly estimated as the inverse of the product of their individually regularized estimates.

%The $\gamma$ parameter can be adapted throughout optimization using a simple heuristic which we will describe later in Subsection \ref{sec:adapt_gamma}.

\subsection{Re-scaling according to the exact $F$}
\label{sec:update_rescaling}

Given an update proposal $\Delta$ produced by multiplying the negative gradient $-\nabla h$ by our approximate Fisher inverse (subject to the Tikhonov technique described in the previous subsection), the second stage of our proposed damping scheme re-scales $\Delta$ according to the quadratic model $M$ as computed with the exact $F$, to produce a final update $\delta = \alpha \Delta$.

More precisely, we optimize $\alpha$ according to the value of the quadratic model
\begin{align*}
M(\delta) = M(\alpha \Delta) = \frac{\alpha^2}{2} \Delta^\top (F + (\lambda + \eta) I) \Delta + \alpha \nabla h^\top \Delta + h(\theta)
\end{align*}
as computed using an estimate of the exact Fisher $F$ (to which we add the $\ell_2$ regularization + Tikhonov term $(\lambda + \eta) I$). Because this is a 1-dimensional quadratic minimization problem, the formula for the optimal $\alpha$ can be computed very efficiently as
\begin{align*}
\alpha^* = \frac{-\nabla h^\top \Delta}{\Delta^\top (F + (\lambda + \eta)I ) \Delta} = \frac{-\nabla h^\top \Delta}{\Delta^\top F \Delta + (\lambda + \eta) \|\Delta\|_2^2}
\end{align*}

To evaluate this formula we use the current stochastic gradient $\nabla h$ (i.e. the same one used to produce $\Delta$), and compute matrix-vector products with $F$ using the input data from the same mini-batch.  While using a mini-batch to compute $F$ gets away from the idea of basing our estimate of the curvature on a long history of data (as we do with our \emph{approximate} Fisher), it is made slightly less objectionable by the fact that we are only using it to estimate a single scalar quantity ($\Delta^\top F \Delta$).  This is to be contrasted with methods like HF which perform a long and careful optimization of $M(\delta)$ using such an estimate of $F$.

Because the matrix-vector products with $F$ are only used to compute scalar quantities in \acronym{}, we can reduce their computational cost by roughly one half (versus standard matrix-vector products with $F$) using a simple trick which is discussed in Appendix \ref{sec:cheap_uFv}.

Intuitively, this second stage of our damping scheme effectively compensates for the intrinsic inaccuracy of the approximate quadratic model (based on our approximate Fisher) used to generate the initial update proposal $\Delta$, by essentially falling back on a more accurate quadratic model based on the exact Fisher.

Interestingly, by re-scaling $\Delta$ according to $M(\delta)$, \acronym{} can be viewed as a version of HF which uses our approximate Fisher as a preconditioning matrix (instead of the traditional diagonal preconditioner), and runs CG for only 1 step, initializing it from 0.  This observation suggests running CG for longer, thus obtaining an algorithm which is even closer to HF (although using a much better preconditioner for CG).  Indeed, this approach works reasonably well in our experience, but suffers from some of the same problems that HF has in the stochastic setting, due its much stronger use of the mini-batch--estimated exact $F$.

Figure \ref{fig:damping_rescaling} demonstrates the effectiveness of this re-scaling technique versus the simpler method of just using the raw $\Delta$ as an update proposal.  We can see that $\Delta$, without being re-scaled, is a very poor update to $\theta$, and won't even give \emph{any} improvement in the objective function unless the strength of the factored Tikhonov damping terms is made very large.  On the other hand, when the update is re-scaled, we can afford to compute $\Delta$ using a much smaller strength for the factored Tikhonov damping terms, and overall this yields a much larger and more effective update to $\theta$.

\begin{figure}%[H]
\begin{centering}
\vspace{-0.35in}
\includegraphics[width=0.8\columnwidth]{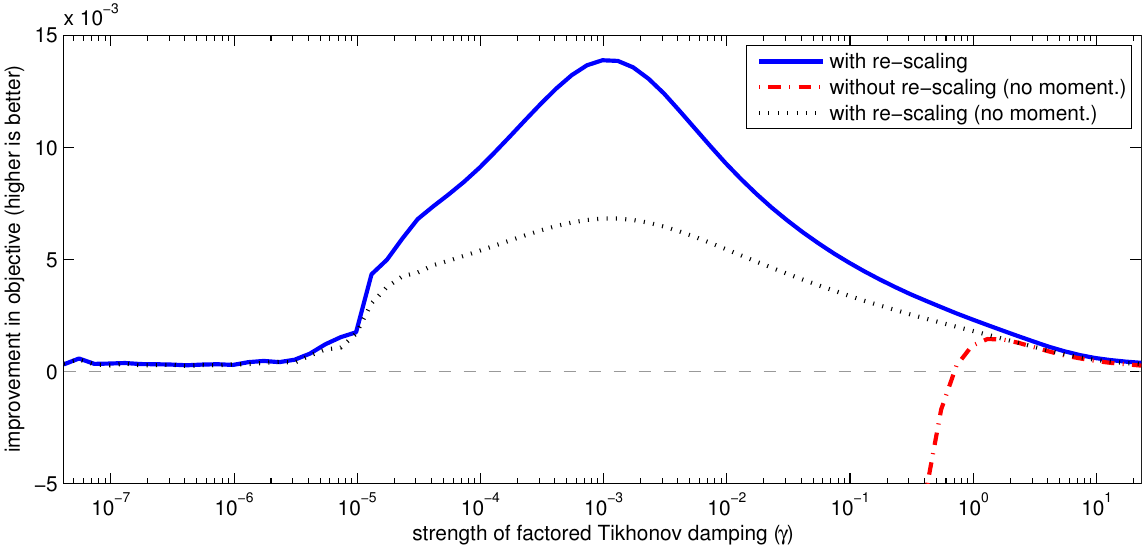}
\caption{\small A comparison of the effectiveness of the proposed damping scheme, with and without the re-scaling techniques described in Section \ref{sec:update_rescaling}.  The network used for this comparison is the one produced at iteration 500 by \acronym{} (with the block-tridiagonal inverse approximation) on the MNIST autoencoder problem described in Section \ref{sec:kfac_experiments}.  The y-axis is the improvement in the objective function $h$ (i.e. $h(\theta) - h(\theta + \delta)$) produced by the update $\delta$, while the x-axis is the strength constant used in the factored Tikhonov damping technique (which is denoted by ``$\gamma$" as described in Section \ref{sec:gamma}).  In the legend, ``no moment." indicates that the momentum technique developed for \acronym{} in Section \ref{sec:momentum} (which relies on the use of re-scaling) was not used.\label{fig:damping_rescaling} }
\end{centering}
\end{figure}

%\vspace{0.4in}

\subsection{Adapting $\lambda$}
\label{sec:adapt_lambda}

%A simple measure of the quality of our choice of $\gamma$ is the (negative) value of the quadratic model $M(\delta) = M(\alpha \Delta)$ for the optimally chosen $\alpha$.  To adjust $\gamma$ based on this measure (or others like it) we use a simple greedy adjustment heuristic.  In particular, every so often during the optimization we try 3 different values of $\gamma$, such as $\gamma_0$, $(3/2) \gamma_0$ and $(2/3)\gamma_0$ where $\gamma_0$ is the current value, and choose the new $\gamma$ to be the best of these (as measured by our quality metric).  In our experiments, ``every so often" usually means every 10-20 iterations of the optimizer.

%We have found that the goodness measure $M(\delta)$ works well in practice, and has the added bonus that it can be computed at essentially no additional cost from the incidental quantities already computed when solving for the optimal $\alpha$.  In our experiments we found it behaves similarly to other obvious ways of measuring the ``goodness" of $\gamma$, such as $h(\theta + \delta)$.

Tikhonov damping can be interpreted as implementing a trust-region constraint on the update $\delta$, so that in particular the constraint $\|\delta\| \leq r$ is imposed for some $r$, where $r$ depends on $\lambda$ and the curvature matrix \citep[e.g.][]{nocedal_book}.  While some approaches adjust $r$ and then seek to find the matching $\lambda$, it is often simpler just to adjust $\lambda$ directly, as the precise relationship between $\lambda$ and $r$ is complicated, and the curvature matrix is constantly evolving as optimization takes place.

The theoretically well-founded Levenberg-Marquardt style rule used by HF for doing this, which we will adopt for \acronym{}, is given by
\begin{itemize}[label={}]
\item $\quad$ if $\rho > 3/4$ then $\lambda \leftarrow \omega_1 \lambda$
\item $\quad$ if $\rho < 1/4$ then $\displaystyle \lambda \leftarrow \frac{1}{\omega_1} \lambda$
\end{itemize}
where $\displaystyle \rho \equiv \frac{h(\theta + \delta) - h(\theta)}  {M(\delta) - M(0)}$ is the ``reduction ratio" and $0 < \omega_1 < 1$ is some decay constant, and all quantities are computed on the current mini-batch (and $M$ uses the exact $F$). 

Intuitively, this rule tries to make $\lambda$ as small as possible (and hence the implicit trust-region as large as possible) while maintaining the property that the quadratic model $M(\delta)$ remains a good \emph{local} approximation to $h$ (in the sense that it accurately predicts the value of $h(\theta + \delta)$ for the $\delta$ which gets chosen at each iteration).  It has the desirable property that as the optimization enters the final convergence stage where $M$ becomes an almost exact approximation in a sufficiently large neighborhood of the local minimum, the value of $\lambda$ will go rapidly enough towards $0$ that it doesn't interfere with the asymptotic local convergence theory enjoyed by 2nd-order methods \citep{levenberg_marquardt}.

In our experiments we applied this rule every $T_1$ iterations of \acronym{}, with $\omega_1 = (19/20)^{T_1}$ and $T_1 = 5$, from a starting value of $\lambda = 150$.  Note that the optimal value of $\omega_1$ and the starting value of $\lambda$ may be application dependent, and setting them inappropriately could significantly slow down \acronym{} in practice. 

Computing $\rho$ can be done quite efficiently.  Note that for the optimal $\delta$, $M(\delta) - M(0) = \frac{1}{2} \nabla h^\top\delta$, and $h(\theta)$ is available from the usual forward pass.  The only remaining quantity which is needed to evaluate $\rho$ is thus $h(\theta + \delta)$, which will require an additional forward pass.  But fortunately, we only need to perform this once every $T_1$ iterations.

\subsection{Maintaining a separate damping strength for the approximate Fisher}
\label{sec:gamma}

While the scheme described in the previous sections works reasonably well in most situations, we have found that in order to avoid certain failure cases and to be truly robust in a large variety of situations, the Tikhonov damping strength parameter for the factored Tikhonov technique described in Section \ref{sec:factored_tik} should be maintained and adjusted independently of $\lambda$.  To this end we replace the expression $\sqrt{\lambda + \eta}$ in Section \ref{sec:factored_tik} with a separate constant $\gamma$, which we initialize to $\sqrt{\lambda + \eta}$ but which is then adjusted using a different rule, which is described at the end of this section.

The reasoning behind this modification is as follows.  The role of $\lambda$, according to the Levenberg Marquardt theory \citep{levenberg_marquardt}, is to be as small as possible while maintaining the property that the quadratic model $M$ remains a trust-worthy approximation of the true objective.  Meanwhile, $\gamma$'s role is to ensure that the initial update proposal $\Delta$ is as good an approximation as possible to the true optimum of $M$ (as computed using a mini-batch estimate of the exact $F$), so that in particular the re-scaling performed in Section \ref{sec:update_rescaling} is as benign as possible.  While one might hope that adding the same multiple of the identity to our approximate Fisher as we do to the exact $F$ (as it appears in $M$) would produce the best $\Delta$ in this regard, this isn't obviously the case.  In particular, using a larger multiple may help compensate for the approximation we are making to the Fisher when computing $\Delta$, and thus help produce a more ``conservative" but ultimately more useful initial update proposal $\Delta$, which is what we observe happens in practice.

A simple measure of the quality of our choice of $\gamma$ is the (negative) value of the quadratic model $M(\delta) = M(\alpha \Delta)$ for the optimally chosen $\alpha$.  To adjust $\gamma$ based on this measure (or others like it) we use a simple greedy adjustment rule.  In particular, every $T_2$ iterations during the optimization we try 3 different values of $\gamma$ ($\gamma_0$, $\omega_2 \gamma_0$, and $(1/\omega_2) \gamma_0$, where $\gamma_0$ is the current value) and choose the new $\gamma$ to be the best of these, as measured by our quality metric.  In our experiments we used $T_2 = 20$ (which must be a multiple of the constant $T_3$ as defined in Section \ref{sec:efficiency}), and $\omega_2 = (\sqrt{19/20})^{T_2}$.

We have found that $M(\delta)$ works well in practice as a measure of the quality of $\gamma$, and has the added bonus that it can be computed at essentially no additional cost from the incidental quantities already computed when solving for the optimal $\alpha$.  In our initial experiments we found that using it gave similar results to those obtained by using other obvious measures for the quality of $\gamma$, such as $h(\theta + \delta)$.

\section{Momentum}
\label{sec:momentum}

\citet{momentum_ilya} found that momentum \citep{momentum_orig, early_nnet_momentum} was very helpful in the context of stochastic gradient descent optimization of deep neural networks.  A version of momentum is also present in the original HF method, and it plays an arguably even more important role in more ``stochastic" versions of HF \citep{HF_chapter,ryan_hf}. 

A natural way of adding momentum to \acronym{}, and one which we have found works well in practice, is to take the update to be $\delta = \alpha \Delta + \mu \delta_0$, where $\delta_0$ is the final update computed at the previous iteration, and where $\alpha$ and $\mu$ are chosen to minimize $M(\delta)$.  This allows \acronym{} to effectively build up a better solution to the local quadratic optimization problem $\min_\delta M(\delta)$ (where $M$ uses the \emph{exact} $F$) over many iterations, somewhat similarly to how Matrix Momentum \citep{MM_natural} and HF do this \citep[see][]{momentum_ilya}.  %As in Matrix Momentum, we can expect that if the definition of $M(\delta)$ stabilizes (which will happen if both the optimizer convergences and the noise in the gradient estimate shrinks), $\delta$ will tend to the exact minimizer of $M$, or in other words the exact natural gradient update (modulo appropriate damping).

The optimal solution for $\alpha$ and $\mu$ can be computed as
\begin{align*}
\begin{bmatrix}
\alpha^* \\ \mu^*
\end{bmatrix}
%= -
%\begin{bmatrix}
%\Delta^\top (F + (\lambda + \eta)I) \Delta  & \Delta^\top (F + (\lambda + \eta)I) \delta_0 \\
%\delta_0^\top (F + (\lambda + \eta)I) \Delta  & \delta_0^\top (F + (\lambda + \eta)I) \delta_0
%\end{bmatrix}^{-1}
%\begin{bmatrix}
%\nabla h^\top \Delta \\ \nabla h^\top \delta_0
%\end{bmatrix}
= -
\begin{bmatrix}
\Delta^\top F \Delta + (\lambda + \eta)\|\Delta\|_2^2  & \Delta^\top F \delta_0 + (\lambda + \eta) \Delta^\top \delta_0\\
\Delta^\top F \delta_0 + (\lambda + \eta) \Delta^\top \delta_0 & \delta_0^\top F \delta_0 + (\lambda + \eta)\|\delta_0\|_2^2
\end{bmatrix}^{-1}
\begin{bmatrix}
\nabla h^\top \Delta \\ \nabla h^\top \delta_0
\end{bmatrix}
\end{align*}

The main cost in evaluating this formula is computing the two matrix-vector products $F \Delta$ and $F \delta_0$.  Fortunately, the technique discussed in Appendix \ref{sec:cheap_uFv} can be applied here to compute the 4 required scalars at the cost of only two forwards passes (equivalent to the cost of only one matrix-vector product with $F$).

%This method of introducing momentum into the optimization is remenisent of the type of momentum present naturally in HF approach.  Indeed, just like 
%As discussed in Section \ref{sec:update_rescaling}, the rescaling of the update $\delta$ according to the quadratic model 
%turns our proposed approach into a version of HF where CG is preconditioned with the approximate Fisher and run for only one step.  The addition of $\Delta_0$ to the search space is very similar to how in HF one uses the previous update is used to initialize CG.  However, there is one subtle difference.  Here we are searching over the subspace $\ospan\{ \delta, \Delta_0 \}$, whereas one step of CG initialized from $\Delta_0$ searches the space $???$.
%With the addition of the direction $\Delta_0$ to the searchable subspace, this method becomes similar to 
% and the aforementioned connection that our update scaling damping technique has with that %method.  It 

Empirically we have found that this type of momentum provides substantial acceleration in regimes where the gradient signal has a low noise to signal ratio, which is usually the case in the early to mid stages of stochastic optimization, but can also be the case in later stages if the mini-batch size is made sufficiently large. % where the optimization problem at hand more closely resembles classical deterministic optimization instead of stochastic estimation.  
These findings are consistent with predictions made by convex optimization theory, and with older empirical work done on neural network optimization \citep{lecun_tricks}. 

Notably, because the implicit ``momentum decay constant" $\mu$ in our method is being computed on the fly, one doesn't have to worry about setting schedules for it, or adjusting it via heuristics, as one often does in the context of SGD.

Interestingly, if $h$ is a quadratic function (so the definition of $M(\delta)$ remains fixed at each iteration) and all quantities are computed deterministically (i.e. without noise), then using this type of momentum makes \acronym{} equivalent to performing preconditioned linear CG on $M(\delta)$, with the preconditioner given by our approximate Fisher.  This follows from the fact that linear CG can be interpreted as a momentum method where the learning rate $\alpha$ and momentum decay coefficient $\mu$ are chosen to jointly minimize $M(\delta)$ at the current iteration.

\section{Computational Costs and Efficiency Improvements}
\label{sec:efficiency}

Let $d$ be the typical number of units in each layer and $m$ the mini-batch size. The significant computational tasks required to compute a single update/iteration of \acronym{}, and rough estimates of their associated computational costs, are as follows:
\begin{enumerate}
\item standard forwards and backwards pass: $2C_1 \ell d^2 m$ \label{task:standard_fb}
\item computation of the gradient $\nabla h$ on the current mini-batch using quantities computed in backwards pass: $C_2 \ell d^2 m$ \label{task:gradient}
\item additional backwards pass with random targets (as described in Section \ref{sec:estimating_A_and_G}): $C_1 \ell d^2 m$ \label{task:extra_backwards}
\item updating the estimates of the required $\bar{A}_{i,j}$'s and $G_{i,j}$'s from quantities computed in the forwards pass and the additional randomized backwards pass:  $2C_2 \ell d^2 m$ \label{task:A_G_est}
\item matrix inverses (or SVDs for the block-tridiagonal inverse, as described in Appendix \ref{sec:stein_solve}) required to compute the inverse of the approximate Fisher: $C_3 \ell d^3$ for the block-diagonal inverse, $C_4 \ell d^3$ for the block-tridiagonal inverse \label{task:inverses_and_svds}
\item various matrix-matrix products required to compute the matrix-vector product of the approximate inverse with the stochastic gradient: $C_5 \ell d^3$ for the block-diagonal inverse, $C_6 \ell d^3$ for the block-tridiagonal inverse \label{task:approxF_product}
\item matrix-vector products with the exact $F$ on the current mini-batch using the approach in Appendix \ref{sec:cheap_uFv}: $4C_1 \ell d^2 m$ with momentum, $2C_1 \ell d^2 m$ without momentum \label{task:exactF_product}
\item additional forward pass required to evaluate the reduction ratio $\rho$ needed to apply the $\lambda$ adjustment rule described in Section \ref{sec:adapt_lambda}: $C_1 \ell d^2 m$ every $T_1$ iterations \label{task:rho_comp}
\end{enumerate}
Here the $C_i$ are various constants that account for implementation details, and we are assuming the use of the naive cubic matrix-matrix multiplication and inversion algorithms when producing the cost estimates.  Note that it it is hard to assign precise values to the constants, as they very much depend on how these various tasks are implemented.

Note that most of the computations required for these tasks will be sped up greatly by performing them in parallel across units, layers, training cases, or all of these.  The above cost estimates however measure sequential operations, and thus may not accurately reflect the true computation times enjoyed by a parallel implementation.  In our experiments we used a vectorized implementation that performed the computations in parallel over units and training cases, although not over layers (which is possible for computations that don't involve a sequential forwards or backwards ``pass" over the layers).

Tasks \ref{task:standard_fb} and \ref{task:gradient} represent the standard stochastic gradient computation.

The costs of tasks \ref{task:extra_backwards} and \ref{task:A_G_est} are similar and slightly smaller than those of tasks \ref{task:standard_fb} and \ref{task:gradient}.  One way to significantly reduce them is to use a random subset of the current mini-batch of size $\tau_1 m$ to update the estimates of the required $\bar{A}_{i,j}$'s and $G_{i,j}$'s.  One can similarly reduce the cost of task \ref{task:exactF_product} by computing the (factored) matrix-vector product with $F$ using such a subset of size $\tau_2 m$, although we recommend proceeding with caution when doing this, as using inconsistent sets of data for the quadratic and linear terms in $M(\delta)$ can hypothetically cause instability problems which are avoided by using consistent data (see \citet{HF_chapter}, Section 13.1).  In our experiments in Section \ref{sec:kfac_experiments} we used $\tau_1 = 1/8$ and $\tau_2 = 1/4$, which seemed to have a negligible effect on the quality of the resultant updates, while significantly reducing per-iteration computation time.  In a separate set of unreported experiments we found that in certain situations, such as when $\ell_2$ regularization isn't used and the network starts heavily overfitting the data, or when smaller mini-batches were used, we had to revert to using $\tau_2 = 1$ to prevent significant deterioration in the quality of the updates.

The cost of task \ref{task:rho_comp} can be made relatively insignificant by making the adjustment period $T_1$ for $\lambda$ large enough.  We used $T_1 = 5$ in our experiments.

%The two tasks whose costs are hard to compare directly with the cost of computing the gradient \ref{task:inverses_and_svds} and \ref{task:approxF_product}. 
The costs of tasks \ref{task:inverses_and_svds} and \ref{task:approxF_product} are hard to compare directly with the costs associated with computing the gradient, as their relative sizes will depend on factors such as the architecture of the neural network being trained, as well as the particulars of the implementation.  However, one quick observation we can make is that both tasks \ref{task:inverses_and_svds} and \ref{task:approxF_product} involve computations that be performed in parallel across the different layers, which is to be contrasted with many of the other tasks which require \emph{sequential} passes over the layers of the network.  

Clearly, if $m \gg d$, then the cost of tasks \ref{task:inverses_and_svds} and \ref{task:approxF_product} becomes negligible in comparison to the others.  However, it is more often the case that $m$ is comparable or perhaps smaller than $d$.  Moreover, while algorithms for inverses and SVDs tend to have the same asymptotic cost as matrix-matrix multiplication, they are at least several times more expensive in practice, in addition to being harder to parallelize on modern GPU architectures (indeed, CPU implementations are often faster in our experience).   Thus, $C_3$ and $C_4$ will typically be (much) larger than $C_5$ and $C_6$, and so in a basic/naive implementation of \acronym{}, task \ref{task:inverses_and_svds} can dominate the overall per-iteration cost. 

Fortunately, there are several possible ways to mitigate the cost of task \ref{task:inverses_and_svds}.  As mentioned above, one way is to perform the computations for each layer in parallel, and even simultaneously with the gradient computation and other tasks.  In the case of our block-tridiagonal approximation to the inverse, one can avoid computing any SVDs or matrix square roots by using an iterative Stein-equation solver (see Appendix \ref{sec:stein_solve}).  And there are also ways of reducing matrix-inversion (and even matrix square-root) to a short sequence of matrix-matrix multiplications using iterative methods \citep{newton_inversion}.  Furthermore, because the matrices in question only change slowly over time, one can consider hot-starting these iterative inversion methods from previous solutions.  In the extreme case where $d$ is very large, one can also consider using low-rank + diagonal approximations of the $\bar{A}_{i,j}$ and $G_{i,j}$ matrices maintained online (e.g. using a similar strategy as \citet{TONGA}) from which inverses and/or SVDs can be more easily computed. Although based on our experience such approximations can, in some cases, lead to a substantial degradation in the quality of the updates.

While these ideas work reasonably well in practice, perhaps the simplest method, and the one we ended up settling on for our experiments, is to simply recompute the approximate Fisher inverse only every $T_3$ iterations (we used $T_3 = 20$ in our experiments).  As it turns out, the curvature of the objective stays relatively stable during optimization, especially in the later stages, and so in our experience this strategy results in only a modest decrease in the quality of the updates.

If $m$ is much smaller than $d$, the costs associated with task \ref{task:approxF_product} can begin to dominate (provided $T_3$ is sufficiently large so that the cost of task \ref{task:inverses_and_svds} is relatively small).  And unlike task \ref{task:inverses_and_svds}, task \ref{task:approxF_product} must be performed at every iteration.  While the simplest solution is to increase $m$ (while reaping the benefits of a less noisy gradient), in the case of the block-diagonal inverse it turns out that we can change the cost of task \ref{task:approxF_product} from $C_5 \ell d^3$ to $C_5 \ell d^2 m$ by taking advantage of the low-rank structure of the stochastic gradient.  The method for doing this is described below.

Let $\bar{\mathcal{A}}_i$ and $\mathcal{G}_i$ be matrices whose columns are the $m$ $\bar{a}_i$'s and $g_i$'s (resp.)~associated with the current mini-batch.  Let $\nabla_{W_i} h$ denote the gradient of $h$ with respect to $W_i$, shaped as a matrix (instead of a vector).  The estimate of $\nabla_{W_i} h$ over the mini-batch is given by $\frac{1}{m} \mathcal{G}_i \bar{\mathcal{A}}_{i-1}^\top$, which is of rank-$m$.  From Section \ref{sec:blockdiag_approx}, computing the $\breve{F}^{-1} \nabla h$ amounts to computing $U_i = G_{i,i}^{-1} (\nabla_{W_i} h) \bar{A}_{i-1,i-1}^{-1}$.  Substituting in our mini-batch estimate of $\nabla_{W_i} h$ gives
\begin{align*}
U_i &= G_{i,i}^{-1} \left( \frac{1}{m} \mathcal{G}_i \bar{\mathcal{A}}_{i-1}^\top \right) \bar{A}_{i-1,i-1}^{-1} = \frac{1}{m} \left( G_{i,i}^{-1} \mathcal{G}_i \right) \left( \bar{\mathcal{A}}_{i-1}^\top \bar{A}_{i-1,i-1}^{-1} \right)
\end{align*}
Direct evaluation of the expression on the right-hand side involves only matrix-matrix multiplications between matrices of size $m \times d$ and $d \times m$ (or between those of size $d \times d$ and $d \times m$), and thus we can reduce the cost of task \ref{task:approxF_product} to $C_5 \ell d^2 m$.

Note that the use of standard $\ell_2$ weight-decay is not compatible with this trick.  This is because the contribution of the weight-decay term to each $\nabla_{W_i} h$ is $\eta W_i$, which will typically not be low-rank.  Some possible ways around this issue include computing the weight-decay contribution $\eta \breve{F}^{-1} \theta$ separately and refreshing it only occasionally, or using a different regularization method, such as drop-out \citep{dropout} or weight-magnitude constraints.

Note that the adjustment technique for $\gamma$ described in Section \ref{sec:gamma} requires that, at every $T_2$ iterations, we compute 3 different versions of the update for each of 3 candidate values of $\gamma$.  In an ideal implementation these could be computed in parallel with each other, although in the summary analysis below we will assume they are computed serially.

Summarizing, we have that with all of the various efficiency improvements discussed in this section, the average per-iteration computational cost of \acronym{}, in terms of \emph{serial} arithmetic operations, is
\begin{align*}
[ (2 &+ \tau_1 + 2(1+\chi_{mom})(1+2/T_2)\tau_2 + 1/T_1)C_1 + (1 + 2 \tau_1) C_2 ] \ell d^2 m \\
&+ (1+2/T_2) [(C_4/T_3 + C_6)\chi_{tri} + C_3/T_3 (1-\chi_{tri})] \ell d^3 + (1+2/T_2) C_5  (1-\chi_{tri}) \ell d^2\min\{d,m\}
\end{align*}
where $\chi_{mom}, \chi_{tri} \in \{0,1\}$ are flag variables indicating whether momentum and the block-tridiagonal inverse approximation (resp.) are used.  

Plugging in the values of these various constants that we used in our experiments, for the block-diagonal inverse approximation ($\chi_{tri} = 0$) this becomes
\begin{align*}
( 3.425 C_1 + 1.25 C_2 ) \ell d^2 m +  0.055 C_3 \ell d^3 + 1.1 C_5 \ell d^2\min\{d,m\}
\end{align*}
and for the block-tridiagonal inverse approximation ($\chi_{tri} = 1$)
\begin{align*}
( 3.425 C_1 + 1.25 C_2 ) \ell d^2 m + (0.055 C_4 + 1.1 C_6) \ell d^3
\end{align*}
which is to be compared to the per-iteration cost of SGD, as given by
\begin{align*}
(2C_1 + C_2) \ell d^2 m
\end{align*}

\section{Pseudocode for \acronym{}}
\label{sec:pseudocode}
\vspace{-0.2in}
Algorithm \ref{alg:K-FAC} gives high-level pseudocode for the \acronym{} method, with the details of how to perform the computations required for each major step left to their respective sections.
%\vspace{-0.1in}
\begin{algorithm}[H]
\small
\begin{spacing}{1.02}
\begin{algorithmic}
\STATE $\bullet$ Initialize $\theta_1$ (e.g. using a good method such as the ones described in \citet{HF} or \citet{bengio_glorot})
\STATE $\bullet$ Choose initial values of $\lambda$ (err on the side of making it too large)
\STATE $\bullet$ $\gamma \gets \sqrt{\lambda + \eta}$ 
\STATE $\bullet$ $k \gets 1$
\WHILE{ $\theta_k$ is not satisfactory }
\STATE $\bullet$ Choose a mini-batch size $m$ (e.g. using a fixed value, an adaptive rule, or some predefined schedule)
\STATE $\bullet$ Select a random mini-batch $S' \subset S$ of training cases of size $m$
\STATE $\bullet$ Select a random subset $S_1 \subset S'$ of size $\tau_1 |S'|$
\STATE $\bullet$ Select a random subset $S_2 \subset S'$ of size $\tau_2 |S'|$
\STATE $\bullet$ Perform a forward and backward pass on $S'$ to estimate the gradient $\nabla h(\theta_k)$ (see Algorithm \ref{alg:nnet_gradient})
\STATE $\bullet$ Perform an additional backwards pass on $S_1$ using random targets generated from the model's predictive distribution (as described in Section \ref{sec:estimating_A_and_G})
\STATE $\bullet$ Update the estimates of the required $\bar{A}_{i,j}$'s and $G_{i,j}$'s using the $a_i$'s computed in forward pass for $S_1$, and the $g_i$'s computed in the additional backwards pass for $S_1$ (as described Section \ref{sec:estimating_A_and_G})
\STATE $\bullet$ Choose a set $\Gamma$ of new candidate $\gamma$'s as described in Section \ref{sec:gamma} (setting $\Gamma = \{\gamma\}$ if not adjusting $\gamma$ at this iteration, i.e. if $k \not\equiv 0 \pmod{T_2}$ )
\FOR {each $\gamma \in \Gamma$}
\IF{ recomputing the approximate Fisher inverse this iteration (i.e. if $k \equiv 0 \pmod{T_3}$ or $k \leq 3$)}
\STATE $\bullet$ Compute the approximate Fisher inverse (using the formulas derived in Section \ref{sec:blockdiag_approx} or Section \ref{sec:blocktridiag_approx}) from versions of the current $\bar{A}_{i,j}$'s and $G_{i,j}$'s which are modified as per the factored Tikhonov damping technique described in Section \ref{sec:factored_tik} (but using $\gamma$ as described in Section \ref{sec:gamma})
\ENDIF
\STATE $\bullet$ Compute the update proposal $\Delta$ by multiplying current estimate of approximate Fisher inverse by the estimate of $\nabla h$ (using the formulas derived in Section \ref{sec:blockdiag_approx} or Section \ref{sec:blocktridiag_approx}).  For layers with size $d < m$ consider using trick described at the end of Section \ref{sec:efficiency} for increased efficiency.
\STATE $\bullet$ Compute the final update $\delta$ from $\Delta$ as described in Section \ref{sec:update_rescaling} (or Section \ref{sec:momentum} if using momentum) where the matrix-vector products with $F$ are estimated on $S_2$ using the $a_i$'s computed in the forward pass
\ENDFOR
\STATE $\bullet$ Select the $\delta$ and the new $\gamma$ computing in the above loop that correspond to the lowest value of $M(\delta)$ (see Section \ref{sec:gamma})
\IF{ updating $\lambda$ this iteration (i.e. if $k \equiv 0 \pmod {T_1}$) }
	\STATE $\bullet$ Update $\lambda$ with the Levenberg-Marquardt style rule described in Section \ref{sec:adapt_lambda}
\ENDIF
\STATE $\bullet$ $\theta_{k+1} \gets \theta_k + \delta$
\STATE $\bullet$ $k \gets k+1$
\ENDWHILE
%\STATE {\bf output:} $\theta_k$
\end{algorithmic}
\end{spacing}
\caption{ High-level pseudocode for \acronym{} \label{alg:K-FAC} }
\end{algorithm}

\section{Invariance Properties and the Relationship to Whitening and Centering}
\label{sec:invariance}

When computed with the exact Fisher, the natural gradient specifies a direction in the space of predictive distributions which is invariant to the specific way that the model is parameterized.  This invariance means that the smooth path through distribution space produced by following the natural gradient with infinitesimally small steps will be similarly invariant.

For a practical natural gradient based optimization method which takes large discrete steps in the direction of the natural gradient, this invariance of the optimization path will only hold approximately.  As shown by \citet{ng_martens}, the approximation error will go to zero as the effects of damping diminish and the reparameterizing function $\zeta$ tends to a locally linear function.  Note that the latter will happen as $\zeta$ becomes smoother, or the local region containing the update shrinks to zero.  

Because \acronym{} uses an approximation of the natural gradient, these invariance results are not applicable in our case.  Fortunately, as was shown by \citet{ng_martens}, one can establish invariance of an update direction with respect to a given reparameterization of the model by verifying certain simple properties of the curvature matrix $C$ used to compute the update.  We will use this result to show that, under the assumption that damping is absent (or negligible in its affect), \acronym{} is invariant to a broad and natural class of transformations of the network.%, which end up being equivalent to a certain type of linear reparameterization.

This class of transformations is given by the following modified network definition (c.f. the definition in Section \ref{sec:neural_networks}):
\begin{align*}
s^\dagger_i &= W_i^\dagger \bar{a}^\dagger_{i-1} \\
\bar{a}^\dagger_i &= \Omega_i \bar{\nonlin}_i( \Phi_i s^\dagger_i)
\end{align*}
where $\bar{\nonlin}_i$ is the function that computes $\nonlin_i$ and then appends a homogeneous coordinate (with value 1), $\Omega_i$ and $\Phi_i$ are arbitrary invertible matrices of the appropriate sizes (except that we assume $\Omega_\ell = I$), $\bar{a}^\dagger_0 = \Omega_0 \bar{a}_0$, and where the network's output is given by $f^\dagger(x,\theta) = a^\dagger_\ell$.  Note that because $\Omega_i$ multiplies $\bar{\nonlin}_i(\Phi_i s^\dagger_i)$, it can implement arbitrary translations of the unit activities $\nonlin_i(\Phi_i s^\dagger_i)$ in addition to arbitrary linear transformations. %Figure \ref{fig:nnet_trans} depicts the transformed version of the network for $\ell = 2$.
Figure \ref{fig:nnet_trans} illustrates our modified network definition for $\ell = 2$ (c.f. Figure \ref{fig:nnet}).

Here, and going forward, we will add a ``$\dagger$" superscript to any network-dependent quantity in order to denote the analogous version of it computed by the transformed network.  Note that under this identification, the loss derivative formulas for the transformed network are analogous to those of the original network, and so our various Fisher approximations are still well defined.

\begin{figure}%[H]
\begin{centering}
\includegraphics[width=.4\columnwidth]{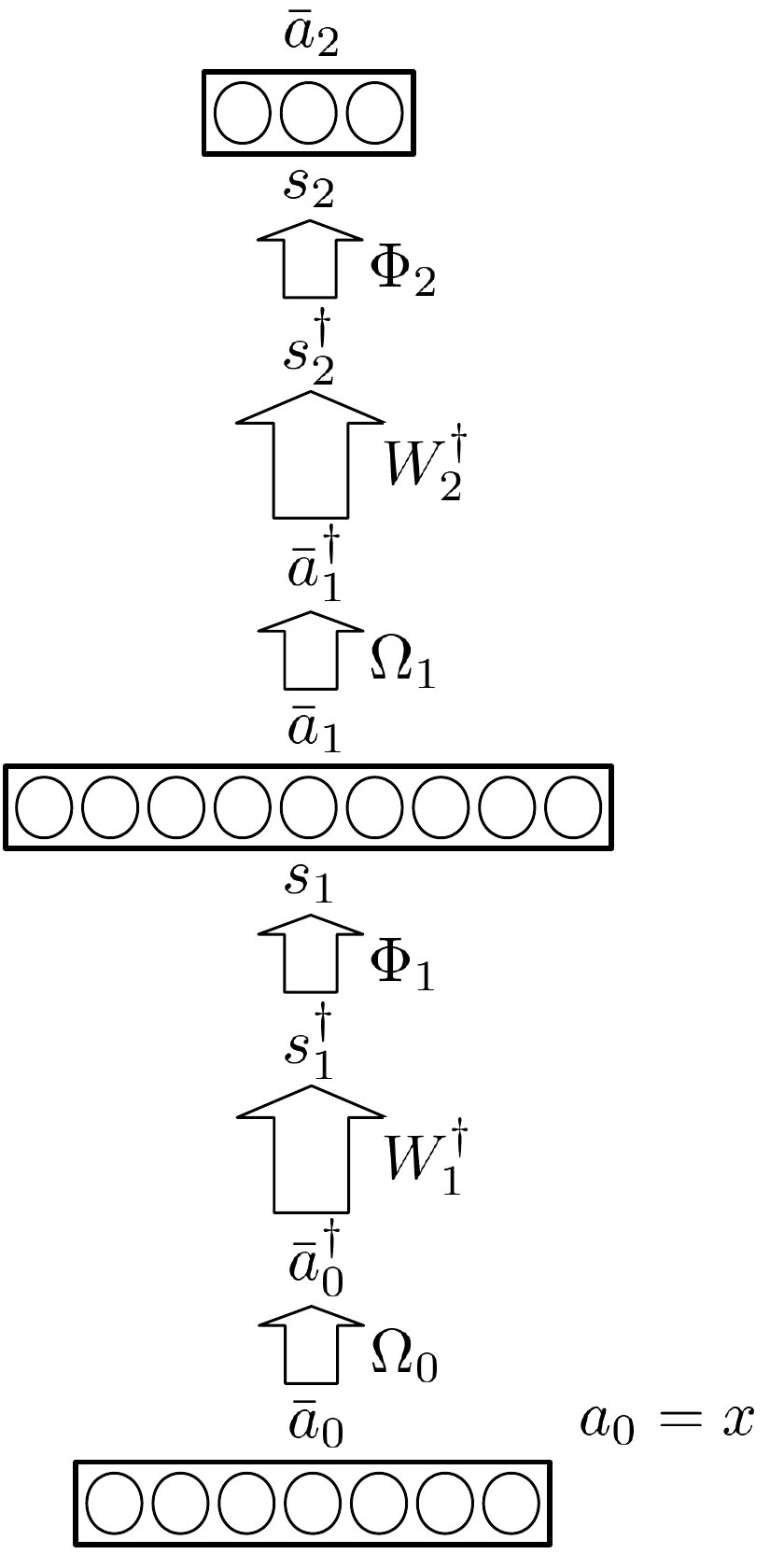}
\caption{\small A depiction of a transformed network for $\ell = 2$.  Note that the quantities labeled with $\bar{a_i}$ and $s_i$ (without ``$\dagger$") will be equal to the analogous quantities from the default network, provided that $\theta = \zeta(\theta^\dagger)$ as in Theorem \ref{thm:invariance}.  \label{fig:nnet_trans} }
\end{centering}
\end{figure}
\newpage

The following theorem describes the main technical result of this section.

\begin{theorem}
\label{thm:invariance}
There exists an invertible linear function $\theta = \zeta(\theta^\dagger)$ so that $f^\dagger(x,\theta^\dagger) = f(x,\theta) = f(x,\zeta(\theta^\dagger))$, and thus the transformed network can be viewed as a reparameterization of the original network by $\theta^\dagger$.  Moreover, additively updating $\theta$ by $\delta = -\alpha \breve{F}^{-1} \nabla h$ or $\delta = -\alpha \hat{F}^{-1} \nabla h$ in the original network is equivalent to additively updating $\theta^\dagger$ by $\delta^\dagger = -\alpha \breve{F}^{\dagger-1} \nabla h^\dagger$ or $\delta^\dagger = -\alpha \hat{F}^{\dagger-1} \nabla h^\dagger$ (resp.) in the transformed network, in the sense that $\zeta(\theta^\dagger + \delta^\dagger) = \theta + \delta$.
\end{theorem}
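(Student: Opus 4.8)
The plan is to first exhibit the reparameterization $\zeta$ explicitly, and then reduce the update-invariance claim to a covariance property of the approximate curvature matrices, in the spirit of the criterion of \citet{ng_martens}. I would construct $\zeta$ by a forward induction over layers: setting $W_i = \Phi_i W_i^\dagger \Omega_{i-1}$ for each $i$ (with $\Omega_\ell = I$ handling the output layer), I would show by induction that $s^\dagger_i = \Phi_i^{-1} s_i$ and hence $\bar{a}^\dagger_i = \Omega_i \bar{a}_i$ whenever $\theta = \zeta(\theta^\dagger)$, the base case being the given $\bar{a}^\dagger_0 = \Omega_0 \bar{a}_0$. This yields $a^\dagger_\ell = a_\ell$, i.e. $f^\dagger(x,\theta^\dagger) = f(x,\zeta(\theta^\dagger))$. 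Vectorizing via $\ovec(W_i) = (\Omega_{i-1}^\top \otimes \Phi_i)\ovec(W_i^\dagger)$ shows $\zeta$ is linear and invertible, with a constant Jacobian $J$ that is block-diagonal across layers, the $i$-th block being $J_i = \Omega_{i-1}^\top \otimes \Phi_i$.

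Next I would determine how the statistics transform. Since the two networks produce the same loss and $s_i = \Phi_i s^\dagger_i$, the chain rule gives $g^\dagger_i = \Phi_i^\top g_i$, while the forward relation gives $\bar{a}^\dagger_i = \Omega_i \bar{a}_i$; taking expectations yields $\bar{A}^\dagger_{i,j} = \Omega_i \bar{A}_{i,j}\Omega_j^\top$ and $G^\dagger_{i,j} = \Phi_i^\top G_{i,j}\Phi_j$. The crux is then to show $\breve{F}^\dagger = J^\top \breve{F} J$ and $\hat{F}^\dagger = J^\top \hat{F} J$. The block-diagonal case is a direct computation: using $(A\otimes B)(C\otimes D) = AC\otimes BD$, the $i$-th diagonal block $J_i^\top(\bar{A}_{i-1,i-1}\otimes G_{i,i})J_i$ collapses to $(\Omega_{i-1}\bar{A}_{i-1,i-1}\Omega_{i-1}^\top)\otimes(\Phi_i^\top G_{i,i}\Phi_i) = \bar{A}^\dagger_{i-1,i-1}\otimes G^\dagger_{i,i} = \breve{F}^\dagger_{i,i}$, and block-diagonality on both sides finishes it.

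The main obstacle is the block-tridiagonal case, because $\hat{F}$ is not assembled directly from its blocks but is defined as the unique matrix agreeing with $\tilde{F}$ on the tridiagonal blocks and having a block-tridiagonal inverse. Rather than tracking the factors $\Xi,\Lambda$, I would show that $J^\top \hat{F} J$ satisfies these two defining properties and appeal to uniqueness. For the first, the same Kronecker collapse gives $J_i^\top \tilde{F}_{i,j} J_j = \bar{A}^\dagger_{i-1,j-1}\otimes G^\dagger_{i,j} = \tilde{F}^\dagger_{i,j}$ for $|i-j|\le 1$, so $J^\top\hat{F}J$ agrees with $\tilde{F}^\dagger$ on the tridiagonal blocks. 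For the second, $(J^\top\hat{F}J)^{-1} = J^{-1}\hat{F}^{-1}J^{-\top}$, and since $J^{-1}$ and $J^{-\top}$ are block-diagonal, conjugating the block-tridiagonal $\hat{F}^{-1}$ by them preserves the tridiagonal zero pattern. Uniqueness then forces $J^\top\hat{F}J = \hat{F}^\dagger$.

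Finally I would assemble the conclusion. Because the networks have identical outputs, $h^\dagger(\theta^\dagger) = h(\zeta(\theta^\dagger))$, so $\nabla h^\dagger = J^\top\nabla h$ at $\theta=\zeta(\theta^\dagger)$. Writing $C$ for either $\breve{F}$ or $\hat{F}$ and using $C^\dagger = J^\top C J$, the update is $\delta^\dagger = -\alpha (C^\dagger)^{-1}\nabla h^\dagger = -\alpha J^{-1}C^{-1}J^{-\top}J^\top\nabla h = J^{-1}\delta$. Since $\zeta$ is linear with $\zeta(\theta^\dagger) = J\theta^\dagger$, this gives $\zeta(\theta^\dagger + \delta^\dagger) = J\theta^\dagger + J\delta^\dagger = \theta + \delta$, as claimed. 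I would close by noting that the whole argument hinges on the exact covariance $C^\dagger = J^\top C J$, which is precisely why the theorem assumes damping (and $\ell_2$ regularization) is absent: an additive term $(\lambda+\eta)I$ transforms into $(\lambda+\eta)J^\top J$ rather than reproducing itself, destroying the covariance.
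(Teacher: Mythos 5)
Your proposal is correct, and the first half (the construction of $\zeta$, the induction giving $s_i^\dagger = \Phi_i^{-1}s_i$ and $\bar{a}_i^\dagger = \Omega_i \bar{a}_i$, the transformation laws $\bar{A}^\dagger_{i,j} = \Omega_i \bar{A}_{i,j}\Omega_j^\top$, $G^\dagger_{i,j} = \Phi_i^\top G_{i,j}\Phi_j$, and the block-diagonal Kronecker collapse) coincides with the paper's argument, as does the final reduction of the update claim to the covariance identity $C^\dagger = J_\zeta^\top C J_\zeta$ (the paper packages this reduction as a lemma cited from \citet{ng_martens}, whereas you prove it inline via $\delta^\dagger = J_\zeta^{-1}\delta$; both are fine). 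Where you genuinely diverge is the block-tridiagonal case: the paper verifies $\hat{F}^\dagger = J_\zeta^\top \hat{F} J_\zeta$ by explicitly tracking the Cholesky-type factorization $\hat{F}^{-1} = \Xi^\top \Lambda \Xi$, computing how $\Psi_{i,i+1}$, $\Sigma_{i|i+1}$, $\Lambda$, and $\Xi$ each transform and recombining them, which is a substantially longer calculation. You instead invoke the \emph{defining characterization} of $\hat{F}$ --- the unique matrix agreeing with $\tilde{F}$ on the tridiagonal blocks whose inverse is block-tridiagonal --- and check that $J_\zeta^\top \hat{F} J_\zeta$ inherits both properties (blockwise agreement because $J_\zeta$ is block-diagonal, and preservation of the block-tridiagonal zero pattern of the inverse under block-diagonal congruence). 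This is shorter and cleaner, but note that it leans on the well-definedness/uniqueness of $\hat{F}$ established in Section \ref{sec:blocktridiag_approx} via the DGGM parameter recovery (which applies since $J_\zeta^\top \hat{F} J_\zeta$ remains positive definite); the paper's explicit route avoids invoking that uniqueness and, as a by-product, yields the transformation laws for the algorithmic quantities $\Psi^{\bar{A}}$, $\Psi^G$, $\Sigma_{i|i+1}$ that \acronym{} actually manipulates. Your closing observation about why damping breaks the covariance ($(\lambda+\eta)I \mapsto (\lambda+\eta)J_\zeta^\top J_\zeta$) is a correct and useful addendum consistent with the hypotheses of Corollary \ref{cor:invariance}.
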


This immediately implies the following corollary which characterizes the invariance of a basic version of \acronym{} to the given class of network transformations.

\begin{corollary}
\label{cor:invariance}
The optimization path taken by \acronym{} (using either of our Fisher approximations $\breve{F}$ or $\hat{F}$) through the space of predictive distributions is the same for the default network as it is for the transformed network (where the $\Omega_i$'s and $\Phi_i$'s remain fixed).  This assumes the use of an equivalent initialization ($\theta_0 = \zeta(\theta_0^\dagger)$), and a basic version of \acronym{} where damping is absent or negligible in effect, momentum is not used, and where the learning rates are chosen in a way that is independent of the network's parameterization.
\end{corollary}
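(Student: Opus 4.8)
The plan is to first construct the reparameterization $\zeta$ explicitly from the forward-pass recursions, identify its Jacobian, and then reduce the update-equivalence claim to a single transformation property of the curvature matrices, following the recipe attributed to \citet{ng_martens}.

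For the first part I would argue by induction over layers, positing that the ``undaggered'' quantities coincide with those of the original network. Concretely, I would posit $s_i = \Phi_i s^\dagger_i$ and $\bar{a}_i = \bar{\nonlin}_i(s_i)$, so that $\bar{a}^\dagger_i = \Omega_i \bar{a}_i$. Substituting $s^\dagger_i = W_i^\dagger \bar{a}^\dagger_{i-1} = W_i^\dagger \Omega_{i-1}\bar{a}_{i-1}$ into $s_i = \Phi_i s^\dagger_i$ and demanding $s_i = W_i \bar{a}_{i-1}$ forces $W_i = \Phi_i W_i^\dagger \Omega_{i-1}$, i.e. $\ovec(W_i) = (\Omega_{i-1}^\top \otimes \Phi_i)\ovec(W_i^\dagger)$ via the identity $\ovec(AXB) = (B^\top\otimes A)\ovec(X)$. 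This defines $\zeta$ as a block-diagonal \emph{linear} map whose $i$-th Jacobian block is $J_i = \Omega_{i-1}^\top \otimes \Phi_i$; invertibility of $\zeta$ follows since each $J_i$ is a Kronecker product of invertibles. The base case $\bar{a}^\dagger_0 = \Omega_0 \bar{a}_0$ together with the terminal condition $\Omega_\ell = I$ then gives $f^\dagger(x,\theta^\dagger) = f(x,\zeta(\theta^\dagger))$, establishing the reparameterization claim.

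For the update part, I would invoke the fact (from \citet{ng_martens}) that an additive update $-\alpha C^{-1}\nabla h$ is invariant under a linear reparameterization with Jacobian $J$ whenever the curvature matrix transforms as $C^\dagger = J^\top C J$: since $\nabla h^\dagger = J^\top \nabla h$ by the chain rule and $\zeta$ is linear with no offset, this identity yields $\delta^\dagger = -\alpha (J^\top C J)^{-1}J^\top\nabla h = J^{-1}\delta$ and hence $\zeta(\theta^\dagger + \delta^\dagger) = J(\theta^\dagger+\delta^\dagger) = \theta + \delta$. So the whole theorem reduces to verifying $\breve{F}^\dagger = J^\top \breve{F} J$ and $\hat{F}^\dagger = J^\top \hat{F} J$. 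Differentiating the loss through $s_i = \Phi_i s^\dagger_i$ gives $g^\dagger_i = \Phi_i^\top g_i$, while $\bar{a}^\dagger_i = \Omega_i \bar{a}_i$ from above; taking expectations yields $\bar{A}^\dagger_{i,j} = \Omega_i \bar{A}_{i,j}\Omega_j^\top$ and $G^\dagger_{i,j} = \Phi_i^\top G_{i,j}\Phi_j$. Feeding these into $\tilde{F}^\dagger_{i,j} = \bar{A}^\dagger_{i-1,j-1}\otimes G^\dagger_{i,j}$ and applying the mixed-product rule $(AB)\otimes(CD) = (A\otimes C)(B\otimes D)$ gives $\tilde{F}^\dagger_{i,j} = J_i^\top \tilde{F}_{i,j} J_j$, i.e. $\tilde{F}^\dagger = J^\top \tilde{F} J$. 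The block-diagonal case is then immediate, since $\breve{F}$ merely retains the diagonal blocks and $J$ is block-diagonal.

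The main obstacle is the block-tridiagonal case, since $\hat{F}$ is defined only \emph{implicitly} (it agrees with $\tilde{F}$ on the tridiagonal blocks and is constrained to have a block-tridiagonal inverse), so I cannot read off $\hat{F}^\dagger$ blockwise. I would instead argue from the uniqueness of this defining characterization (established in Section \ref{sec:blocktridiag_approx}): it suffices to show that $J^\top \hat{F} J$ both agrees with $\tilde{F}^\dagger$ on the tridiagonal blocks and has a block-tridiagonal inverse. The first holds because $(J^\top \hat{F} J)_{i,j} = J_i^\top \hat{F}_{i,j} J_j = J_i^\top \tilde{F}_{i,j} J_j = \tilde{F}^\dagger_{i,j}$ for $|i-j|\le 1$, using that $\hat{F}$ matches $\tilde{F}$ on those blocks. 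The second holds because $(J^\top \hat{F} J)^{-1} = J^{-1}\hat{F}^{-1} J^{-\top}$ is a product of a block-diagonal, a block-tridiagonal, and a block-diagonal matrix, which is again block-tridiagonal. Hence $J^\top \hat{F} J = \hat{F}^\dagger$ by uniqueness, completing the reduction and thus the theorem. The one subtlety I would need to confirm is that the implicit characterization of $\hat{F}$ genuinely pins it down uniquely, so that the uniqueness step is legitimate; this is exactly the well-definedness argued when recovering the DGGM parameters from the tridiagonal blocks.
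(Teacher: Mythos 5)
Your proposal is correct, and its skeleton coincides with the paper's: the corollary is, in both treatments, an immediate consequence of the single-update equivalence (Theorem \ref{thm:invariance}), obtained by induction over iterations from $\theta_0 = \zeta(\theta_0^\dagger)$ together with $f^\dagger(x,\theta^\dagger) = f(x,\zeta(\theta^\dagger))$; and your construction of $\zeta$, the Jacobian $J_\zeta = \diag(\Omega_0^\top \otimes \Phi_1, \ldots, \Omega_{\ell-1}^\top \otimes \Phi_\ell)$, the reduction via the lemma of \citet{ng_martens}, and the transformation laws $\bar{A}^\dagger_{i,j} = \Omega_i \bar{A}_{i,j}\Omega_j^\top$, $G^\dagger_{i,j} = \Phi_i^\top G_{i,j}\Phi_j$, hence $\tilde{F}^\dagger = J_\zeta^\top \tilde{F} J_\zeta$ and $\breve{F}^\dagger = J_\zeta^\top \breve{F} J_\zeta$, are exactly the paper's. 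Where you genuinely depart is the block-tridiagonal case. The paper proves $\hat{F}^\dagger = J_\zeta^\top \hat{F} J_\zeta$ constructively: it computes how the DGGM quantities transform, namely $\Psi^\dagger_{i,i+1} = (\Omega_{i-1}\otimes\Phi_i^\top)\Psi_{i,i+1}(\Omega_i\otimes\Phi_{i+1}^\top)^{-1}$ and $\Sigma^\dagger_{i|i+1} = (\Omega_{i-1}\otimes\Phi_i^\top)\Sigma_{i|i+1}(\Omega_{i-1}^\top\otimes\Phi_i)$, and pushes these through the decomposition $\hat{F}^{-1} = \Xi^\top\Lambda\Xi$ to obtain $\hat{F}^{\dagger-1} = J_\zeta^{-1}\hat{F}^{-1}J_\zeta^{-\top}$. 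You instead argue abstractly from uniqueness: $J_\zeta^\top \hat{F} J_\zeta$ agrees with $\tilde{F}^\dagger$ on the tridiagonal blocks (since $J_\zeta$ is block-diagonal and $\hat{F}$ agrees with $\tilde{F}$ there) and has a block-tridiagonal inverse (since $J_\zeta^{-1}\hat{F}^{-1}J_\zeta^{-\top}$ is a block-diagonal/block-tridiagonal/block-diagonal product), so it must equal $\hat{F}^\dagger$ by the well-definedness of $\hat{F}$'s characterization. This is legitimate: the uniqueness you lean on is precisely what Section \ref{sec:blocktridiag_approx} establishes when it recovers the DGGM parameters from the tridiagonal blocks, the congruence $J_\zeta^\top\hat{F}J_\zeta$ inherits positive definiteness from $\hat{F}$ so the characterization applies, and your exhibited matrix simultaneously settles existence for the transformed network. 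Your route buys a markedly shorter argument that sidesteps the paper's page of matrix algebra; the paper's explicit route buys the additional information that the quantities the algorithm actually stores and manipulates ($\Psi_{i,i+1}$, $\Sigma_{i|i+1}$, $\Lambda$, $\Xi$) themselves transform covariantly, so the implemented procedure, and not merely the abstract update $-\alpha\hat{F}^{-1}\nabla h$, is manifestly equivalent across parameterizations. Both arguments are complete for the statement as given.
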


While this corollary assumes that the $\Omega_i$'s and $\Phi_i$'s are fixed, if we relax this assumption so that they are allowed to vary smoothly with $\theta$, then $\zeta$ will be a smooth function of $\theta$, and so as discussed in \citet{ng_martens}, invariance of the optimization path will hold approximately in a way that depends on the smoothness of $\zeta$ (which measures how quickly the $\Omega_i$'s and $\Phi_i$'s change) and the size of the update.  Moreover, invariance will hold exactly in the limit as the learning rate goes to 0.

Note that the network transformations can be interpreted as replacing the network's nonlinearity $\bar{\nonlin}_i (s_i)$ at each layer $i$ with a ``transformed" version $\Omega_i \bar{\nonlin}_i( \Phi_i s_i)$.  So since the well-known logistic sigmoid and tanh functions are related to each other by such a transformation, an immediate consequence of Corollary \ref{cor:invariance} is that \acronym{} is invariant to the choice of logistic sigmoid vs.~tanh activation functions (provided that equivalent initializations are used and that the effect of damping is negligible, etc.).  

Also note that because the network inputs are also transformed by $\Omega_0$, \acronym{} is thus invariant to arbitrary affine transformations of the input, which includes many popular training data preprocessing techniques.  %And as we will see below, it can be interpreted as performing these kinds of transformations automatically.

Many other natural network transformations, such as ones which ``center" and normalize unit activities so that they have mean 0 and variance 1 can be described using diagonal choices for the $\Omega_i$'s and $\Phi_i$'s which vary smoothly with $\theta$.  In addition to being approximately invariant to such transformations (or exactly, in the limit as the step size goes to 0), \acronym{} is similarly invariant to a more general class of such transformations, such as those which transform the units so that they have a mean of 0, so they are ``centered", and a \emph{covariance matrix} of $I$, so they are ``whitened", which is a much stronger condition than the variances of the individual units each being 1.  

%In the case where $\Delta$ is computed simply as $\Delta = \breve{F}^{-1} \nabla h$ from our block-diagonal approximation $\breve{F}$ of the Fisher (i.e. without the use of damping techniques or other complications), it has a simple and intuitive interpretation, which is stated in the follow corollary.

In the case where we use the block-diagonal approximation $\breve{F}$ and compute updates without damping, Theorem \ref{thm:invariance} affords us an additional elegant interpretation of what \acronym{} is doing.  In particular, the updates produced by \acronym{} end up being equivalent to those produced by \emph{standard gradient descent} using a network which is transformed so that the unit activities and the unit-gradients are both centered and whitened (with respect to the model's distribution).  This is stated formally in the following corollary.
\begin{corollary}
\label{cor:whitened_interpretation}
Additively updating $\theta$ by $-\alpha \breve{F}^{-1} \nabla h$ in the original network is equivalent to additively updating $\theta^\dagger$ by the gradient descent update $-\alpha \nabla h^\dagger$ (where $\theta = \zeta(\theta^\dagger)$ as in Theorem \ref{thm:invariance}) in a transformed version of the network where the unit activities $a^\dagger_i$ and the unit-gradients $g^\dagger_i$ are both centered and whitened with respect to the model's distribution.
\end{corollary}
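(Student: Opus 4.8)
The plan is to reduce the claim to Theorem~\ref{thm:invariance} by exhibiting a specific member of the transformation family from Section~\ref{sec:invariance} in which the block-diagonal approximation $\breve{F}^\dagger$ collapses to the identity. Theorem~\ref{thm:invariance} already tells us that updating $\theta$ by $-\alpha\breve{F}^{-1}\nabla h$ in the original network is equivalent (in the sense $\zeta(\theta^\dagger+\delta^\dagger)=\theta+\delta$) to updating $\theta^\dagger$ by $-\alpha\breve{F}^{\dagger-1}\nabla h^\dagger$ in the transformed network. Hence it suffices to choose the transformation matrices $\Omega_i$ and $\Phi_i$ so that $\breve{F}^\dagger = I$, for then $\breve{F}^{\dagger-1}\nabla h^\dagger = \nabla h^\dagger$ and the transformed update is exactly the gradient descent step $-\alpha\nabla h^\dagger$.

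First I would record how the transformation acts on the statistics entering $\breve{F}^\dagger$. Since the undaggered quantities $s_i,\bar{a}_i,g_i$ coincide with those of the original network (as in Figure~\ref{fig:nnet_trans}), the transformed network satisfies $\bar{a}^\dagger_i = \Omega_i\bar{a}_i$ and, because $s_i = \Phi_i s^\dagger_i$, the chain rule gives $g^\dagger_i = \Phi_i^\top g_i$. Consequently $\bar{A}^\dagger_{i,i} = \Omega_i \bar{A}_{i,i}\Omega_i^\top$ and $G^\dagger_{i,i} = \Phi_i^\top G_{i,i}\Phi_i$, and each diagonal block of $\breve{F}^\dagger$ is $\bar{A}^\dagger_{i-1,i-1}\otimes G^\dagger_{i,i}$.

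Next I would choose $\Phi_i = G_{i,i}^{-1/2}$ (the symmetric inverse square root), giving $G^\dagger_{i,i} = I$; whitening the unit-gradients is exactly this step, and their centering is automatic since $\expected[g_i]=0$ under the model's distribution by the score-function identity (the same property underlying Lemma~\ref{lemma:udv_expectation}). For the activities I would take $\Omega_i$ to be the affine map that preserves the homogeneous coordinate while sending $a_i \mapsto \cov(a_i)^{-1/2}(a_i - \expected[a_i])$, i.e.\ the map that centers and whitens $a_i$. Because the resulting $a^\dagger_i$ has zero mean and identity covariance, the cross terms with the appended constant vanish and $\bar{A}^\dagger_{i,i} = \expected[\bar{a}^\dagger_i (\bar{a}^\dagger_i)^\top] = I$; this is where centering is essential, as it is what makes the \emph{second-moment} matrix (rather than merely the covariance) equal to the identity. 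These choices are consistent: $\Omega_i$ and $\Phi_i$ act on disjoint pieces ($\bar{a}^\dagger_i$ versus $g^\dagger_i=\Phi_i^\top g_i$), so they may be selected independently, and the constraint $\Omega_\ell = I$ is harmless because $\breve{F}$ only involves $\bar{A}_{0,0},\ldots,\bar{A}_{\ell-1,\ell-1}$ and never $\bar{A}_{\ell,\ell}$. With $\bar{A}^\dagger_{i-1,i-1}=I$ and $G^\dagger_{i,i}=I$ for all $i$, every diagonal block equals $I\otimes I = I$, so $\breve{F}^\dagger = I$ and the conclusion follows from Theorem~\ref{thm:invariance}.

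The main obstacle I anticipate is the bookkeeping around the homogeneous coordinate together with the requirement that the gradients be mean-zero: one must verify that centering the activities is precisely what kills the off-diagonal blocks of $\bar{A}^\dagger_{i,i}$ coupling $a^\dagger_i$ to the appended constant, and that $\expected[g_i]=0$ holds with respect to the model's \emph{predictive} distribution (not the data distribution). A minor technical caveat is that $\cov(a_i)$ and $G_{i,i}$ must be nonsingular for the whitening maps to exist, which is implicitly assumed in the undamped, generic setting considered here.
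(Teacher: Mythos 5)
Your proposal is correct, and it follows the same overall route as the paper's own proof: reduce to Theorem \ref{thm:invariance} by exhibiting transformation matrices for which $\bar{A}^\dagger_{i-1,i-1} = I$ and $G^\dagger_{i,i} = I$, so that $\breve{F}^\dagger = I$ and the transformed update $-\alpha \breve{F}^{\dagger -1}\nabla h^\dagger$ is literally the gradient step $-\alpha \nabla h^\dagger$. The one genuine difference is your choice of $\Omega_i$. The paper takes the symmetric root $\Omega_i = \bar{A}_{i,i}^{-1/2}$, whereas you take the affine map that fixes the homogeneous coordinate and sends $a_i \mapsto \cov(a_i)^{-1/2}(a_i - \expected[a_i])$; both satisfy $\Omega_i \bar{A}_{i,i} \Omega_i^\top = I$ (they differ by an orthogonal factor), so both yield $\breve{F}^\dagger = I$. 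Your choice buys something concrete: because the last coordinate of $\bar{a}^\dagger_i$ remains the constant $1$, the off-diagonal block of $\bar{A}^\dagger_{i,i} = I$ literally reads $\expected[a^\dagger_i] = 0$, so the ``centered'' part of the conclusion is immediate, and the whitening of $a^\dagger_i$ holds by construction. Under the paper's symmetric-root choice the homogeneous coordinate is no longer constant, so inferring centering from ``the final column of $\bar{A}^\dagger_{i,i}$ excluding the last entry,'' as the paper's proof does, is not strictly justified --- $\expected[\bar{a}^\dagger_i \bar{a}^{\dagger\top}_i] = I$ alone does not force $\expected[a^\dagger_i]=0$ when the last coordinate is itself random. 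In this respect your construction is the cleaner witness for the corollary as stated. The rest of your bookkeeping --- the identical choice $\Phi_i = G_{i,i}^{-1/2}$, centering of $g^\dagger_i$ via the score-function identity of Lemma \ref{lemma:udv_expectation}, the observation that the constraint $\Omega_\ell = I$ is harmless because $\bar{A}_{\ell,\ell}$ never enters $\breve{F}$, and the nonsingularity caveat --- matches the paper's argument.
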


\section{Related Work}

\label{sec:related}

The Hessian-free optimization method of \citet{HF} uses linear conjugate gradient (CG) to optimize local quadratic models of the form of eqn.~\ref{eqn:quad_model} (subject to an adaptive Tikhonov damping technique) in lieu of directly solving it using matrix inverses.  As discussed in the introduction, the main advantages of \acronym{} over HF are twofold.  Firstly, \acronym{} uses an efficiently computable direct solution for the inverse of the curvature matrix and thus avoids the costly matrix-vector products associated with running CG within HF.  Secondly, it can estimate the curvature matrix from a lot of data by using an online exponentially-decayed average, as opposed to relatively small-sized fixed mini-batches used by HF.  The cost of doing this is of course the use of an inexact approximation to the curvature matrix.

%[[\textbf{Briefly discuss FANG here}]]

\citet{TONGA} proposed a neural network optimization method known as TONGA based on a block-diagonal approximation of the \emph{empirical} Fisher where each block corresponds to the weights associated with a particular unit.  By contrast, \acronym{} uses \emph{much} larger blocks, each of which corresponds to all the weights within a particular layer.  The matrices which are inverted in \acronym{} are roughly the same size as those which are inverted in TONGA, but rather than there being one per unit as in TONGA, there are only two per layer. Therefore, \acronym{} is significantly less computationally intensive than TONGA, despite using what is arguably a much more accurate approximation to the Fisher.  Note that to help mitigate the cost of the many matrix inversions it requires, TONGA approximates the blocks as being low-rank plus a diagonal term, although this introduces further approximation error.

Centering methods work by either modifying the gradient \citep{Schraudolph_centering} or dynamically reparameterizing the network itself \citep{Tapani_centering, Vatanen_centering, Wiesler_centering}, so that various unit-wise scalar quantities like the activities (the $a_i$'s) and local derivatives (the $\phi_i'(s_i)$'s) %(and possibly also backpropagated loss derivatives) 
are 0 on average (i.e. ``centered"), as they appear in the formula for the gradient.  Typically, these methods require the introduction of additional ``skip" connections (which bypass the nonlinearities of a given layer) in order to preserve the expressive power/efficiency of the network after these transformations are applied.

It is argued by \citet{Tapani_centering} that the application of the centering transformation makes the Fisher of the resulting network closer to a diagonal matrix, and thus makes its gradient more closely resemble its natural gradient.  However, this argument uses the strong approximating assumption that the correlations between various network-dependent quantities, such as the activities of different units within a given layer, are zero.  In our notation, this would be like assuming that the $G_{i,i}$'s are diagonal, and that the $\bar{A}_{i,i}$'s are rank-1 plus a diagonal term.  Indeed, using such an approximation within the block-diagonal version of \acronym{} would yield an algorithm similar to standard centering, although without the need for skip connections (and hence similar to the version of centering proposed by \citet{Wiesler_centering}).

%Sketch proof of equivalence to centering (assuming G = I, A = rank 1 + diag as needed):
%--
%Both start at default param.  After one update, both methods are at the same model, but K-FAC restores default param, while centering uses new one.
%
%We know from the theorem that if K-FAC was applied to a model with this new param, it would generate the same update as it would for the default param.  So the question is now:
%
%Given a non-default param (corresponding to a network transform), do both K-FAC and centering produce the same update?  Well, probably yes, since the non-linearity can just absorb the reparam and we can treat the whole thing the same as a different "default param" for both centering and K-FAC

As shown in Corollary \ref{cor:whitened_interpretation}, \acronym{} can also be interpreted as using the gradient of a transformed network as its update direction, although one in which the $g_i$'s and $a_i$'s are both centered and \emph{whitened} (with respect to the model's distribution). Intuitively, it is this whitening which accounts for the correlations between activities (or back-propagated gradients) within a given layer. %, which would otherwise be ignored by only doing centering.

%\acronym{} also has the additional advantage over centering methods that it doesn't require the network to be augmented with skip connections.

\citet{Ollivier} proposed a neural network optimization method which uses a block-diagonal approximation of the Fisher, with the blocks corresponding to the incoming weights (and bias) of each unit.  This method is similar to TONGA, except that it approximates the Fisher instead of the empirical Fisher (see \citet{ng_martens} for a discussion of the difference between these).  Because computing blocks of the Fisher is expensive (it requires $k$ backpropagations, where $k$ is the number of output units), this method uses a biased deterministic approximation which can be computed more efficiently, and is similar in spirit to the deterministic approximation used by \citet{lecun_tricks}.  %The author shows that this approximate Fisher has the interpretation of being the Hessian of a parameterization invariant metric (and thus grants analogous invariance to optimization methods which use this approximate Fisher).
Note that while such an approximation could hypothetically be used within \acronym{} to compute the $G_{i,j}$'s, we have found that our basic unbiased stochastic approximation works nearly as well as the exact values in practice.

%As is the case with TONGA, the inverting all of these blocks can become expensive in large networks, and so the author proposes additional approximation which further simplifies the computations [[and I have no idea how to interpret this ``quasi-diagonal" approximation]]. This is to be constrasted with TONGA which uses a dynamically adjusted low-rank + diagonal approximation.

The work most closely related to ours is that of \citet{heskes}, who proposed an approximation of the Fisher of feed-forward neural networks similar to our Kronecker-factored block-diagonal approximation $\breve{F}$ from Section \ref{sec:blockdiag_approx}, and used it to derive an efficient approximate natural-gradient based optimization method by exploiting the identity $(A \otimes B)^{-1} = A^{-1} \otimes B^{-1}$.  \acronym{} differs from Heskes' method in several important ways which turn out to be crucial to it working well in practice.  

In Heskes' method, update damping is accomplished using a basic factored Tikhonov technique where $\gamma I$ is added to each $G_{i,i}$ and $\bar{A}_{i,i}$ for a fixed parameter $\gamma > 0$ which is set by hand.  By contrast, \acronym{} uses a factored Tikhonov technique where $\gamma$ adapted dynamically as described in Section \ref{sec:gamma}, combined with a re-scaling technique based on a local quadratic model computed using the exact Fisher (see Section \ref{sec:update_rescaling}).  Note that the adaptation of $\gamma$ is important since what constitutes a good or even merely acceptable value of $\gamma$ will change significantly over the course of optimization.  And the use of our re-scaling technique, or something similar to it, is also crucial as we have observed empirically that basic Tikhonov damping is incapable of producing high quality updates by itself, even when $\gamma$ is chosen optimally at each iteration (see Figure \ref{fig:damping_rescaling} of Section \ref{sec:update_rescaling}).   

Also, while Heskes' method computes the $G_{i,i}$'s exactly, \acronym{} uses a stochastic approximation which scales efficiently to neural networks with much higher-dimensional outputs (see Section \ref{sec:estimating_A_and_G}).

Other advances we have introduced include the more accurate block-tridiagonal approximation to the inverse Fisher, a parameter-free type of momentum (see Section \ref{sec:momentum}), online estimation of the $G_{i,i}$ and $\bar{A}_{i,i}$ matrices, and various improvements in computational efficiency (see Section~\ref{sec:efficiency}).  We have found that each of these additional elements is important in order for \acronym{} to work as well as it does in various settings.

Concurrently with this work \citet{povey_ng} has developed a neural network optimization method which uses a block-diagonal Kronecker-factored approximation similar to the one from \citet{heskes}.  This approach differs from \acronym{} in numerous ways, including its use of the empirical Fisher (which doesn't work as well as the standard Fisher in our experience -- see Section \ref{sec:estimating_A_and_G}), and its use of only a basic factored Tikhonov damping technique without adaptive re-scaling or any form of momentum.  One interesting idea introduced by \citet{povey_ng} is a particular method for maintaining an online low-rank plus diagonal approximation of the factor matrices for each block, which allows their inverses to be computed more efficiently (although subject to an approximation).  While our experiments with similar kinds of methods for maintaining such online estimates found that they performed poorly in practice compared to the solution of refreshing the inverses only occasionally (see Section \ref{sec:efficiency}), the particular one developed by \citet{povey_ng} could potentially still work well, and may be especially useful for networks with very wide layers.

%[[\textbf{the remainder of this section is a good candidate for removal in the ICML version}]]

%Our derivation of the Kronecker-factored Fisher approximation is also arguably stronger and better justified than Heskes', although he gives at least one useful interpretation of it which we haven't considered. 

\vspace{-0.1in}
\section{Heskes' interpretation of the block-diagonal approximation}
\vspace{-0.075in}

\citet{heskes} discussed an alternative interpretation of the block-diagonal approximation which yields some useful insight to complement our own theoretical analysis. In particular, he observed that the block-diagonal Fisher approximation $\breve{F}$ is the curvature matrix corresponding to the following quadratic function which measures the difference between the new parameter value $\theta'$ and the current value $\theta$:
\begin{align*}
D(\theta', \theta) = \frac{1}{2} \sum_{i=1}^\ell \expected\left[ (s_i - s'_i)^\top G_{i,i} (s_i - s'_i) \right] %= \frac{1}{2} \sum_{i=1}^\ell \Tr( G_{i,i} \expected\left[ (s_i - W'_i \bar{a}_{i-1}) (s_i - W'_i \bar{a}_{i-1})^\top \right] )
\end{align*}
Here, $s'_i = W'_i \bar{a}_{i-1}$, and the $s_i$'s and $\bar{a}_i$'s are determined by $\theta$ and are independent of $\theta'$ (which determines the $W'_i$'s).  

$D(\theta',\theta)$ can be interpreted as a reweighted sum of squared changes of each of the $s_i$'s.  The reweighing matrix $G_{i,i}$ is given by
\begin{align*}
G_{i,i} = \expected \left[ g_i g_i^\top \right] = \expected\left[F_{P^{(i)}_{y|s_i}}\right]
\end{align*}
where $P^{(i)}_{y|s_i}$ is the network's predictive distribution as parameterized by $s_i$, and $F_{P^{(i)}_{y|s_i}}$ is its Fisher information matrix, and where the expectation is taken w.r.t.~the distribution on $s_i$ (as induced by the distribution on the network's input $x$).  Thus, the effect of reweighing by the $G_{i,i}$'s is to (approximately) translate changes in $s_i$ into changes in the predictive distribution over $y$, although using the expected/average Fisher $G_{i,i} = \expected[F_{P^{(i)}_{y|s_i}}]$ instead of the more specific Fisher $F_{P^{(i)}_{y|s_i}}$.  

Interestingly, if one used $F_{P^{(i)}_{y|s_i}}$ instead of $G_{i,i}$ in the expression for $D(\theta',\theta)$, then $D(\theta',\theta)$ would correspond to a basic layer-wise block-diagonal approximation of $F$ where the blocks are computed exactly (i.e. without the Kronecker-factorizing approximation introduced in Section \ref{sec:kron_approx}).  Such an approximate Fisher would have the interpretation of being the Hessian w.r.t.~$\theta'$ of either of the measures
\begin{align*}
\sum_{i=1}^\ell \expected \left[ \KL \left ( P^{(i)}_{y|s_i} \: \| \: P^{(i)}_{y| s'_i} \right )\right] \quad\quad \mbox{or} \quad\quad \sum_{i=1}^\ell \expected \left[ \KL \left ( P^{(i)}_{y| s'_i}  \: \| \: P^{(i)}_{y|s_i} \right ) \right]
\end{align*}
Note that each term in either of these sums is a function measuring an intrinsic quantity (i.e. changes in the output distribution), and so overall these are intrinsic measures except insofar as they assume that $\theta$ is divided into $\ell$ independent groups that each parameterize one of the $\ell$ different predictive distributions (which are each conditioned on their respective $a_{i-1}$'s).

It is not clear whether $\breve{F}$, with its Kronecker-factorizing structure can similarly be interpreted as the Hessian of such a self-evidently intrinsic measure.  If it could be, then this would considerably simplify the proof of our Theorem \ref{thm:invariance} (e.g. using the techniques of \citet{IGO}).  Note that $D(\theta',\theta)$ itself doesn't work, as it isn't obviously intrinsic.  Despite this, as shown in Section \ref{sec:invariance}, both $\breve{F}$ and our more advanced approximation $\hat{F}$ produce updates which have strong invariance properties.

%\vspace{-0.1in}

\section{Experiments}
\label{sec:kfac_experiments}

%\vspace{-0.075in}

To investigate the practical performance of \acronym{} we applied it to the 3 deep autoencoder optimization problems from \citet{HintonScience}, which use the ``MNIST", ``CURVES", and ``FACES" datasets respectively (see \citet{HintonScience} for a complete description of the network architectures and datasets).  Due to their high difficulty, performance on these problems has become a standard benchmark for neural network optimization methods \citep[e.g.][]{HF,KSD,momentum_ilya}.  We included $\ell_2$ regularization with a coefficient of $\eta = 10^{-5}$ in each of these three optimization problems (i.e. so that $\frac{\eta}{2} \|\theta\|_2^2$ was added to the objective), which is lower than what was used by \citet{HF}, but higher than what was used by \citet{momentum_ilya}.

%Mention where they come from, and papers they have been considered in.  Should we specific the architectures?  Maybe stuff like that should go the appendix, if anywhere

As our baseline we used the version of SGD with momentum based on Nesterov's Accelerated Gradient \citep{NAG} described in \citet{momentum_ilya}, which was calibrated to work well on these particular deep autoencoder problems.  For each problem we followed the prescription given by \citet{momentum_ilya} for determining the learning rate, and the increasing schedule for the decay parameter $\mu$. %However, in lieu of using a two-phase optimization approach, we used an averaging approach
We did not compare to methods based on diagonal approximations of the curvature matrix, as in our experience such methods tend not perform as well on these kinds of optimization problems as the baseline does (an observation which is consistent with the findings of \citet{schraudolph, ADADELTA}).

Our implementation of \acronym{} used most of the efficiency improvements described in Section \ref{sec:efficiency}, except that all ``tasks" were computed serially (and thus with better engineering and more hardware, a faster implementation could likely be obtained). Because the mini-batch size $m$ tended to be comparable to or larger than the typical/average layer size $d$, we did not use the technique described at the end of Section \ref{sec:efficiency} for accelerating the computation of the approximate inverse, as this only improves efficiency in the case where $m < d$, and will otherwise decrease efficiency.

Both \acronym{} and the baseline were implemented using vectorized MATLAB code accelerated with the GPU package Jacket.  The code for \acronym{} is available for download\footnote{\url{http://www.cs.toronto.edu/~jmartens/docs/KFAC3-MATLAB.zip}}.  All tests were performed on a single computer with a 4.4 Ghz 6 core Intel CPU and an NVidia GTX 580 GPU with 3GB of memory.  Each method used the same initial parameter setting, which was generated using the ``sparse initialization" technique from \citet{HF} (which was also used by \citet{momentum_ilya}).

To help mitigate the detrimental effect that the noise in the stochastic gradient has on the convergence of the baseline (and to a lesser extent \acronym{} as well) we used a exponentially decayed iterate averaging approach based loosely on Polyak averaging \citep[e.g.][]{kevin}.  In particular, at each iteration we took the ``averaged" parameter estimate to be the previous such estimate, multiplied by $\xi$, plus the new iterate produced by the optimizer, multiplied by $1-\xi$, for $\xi = 0.99$.  Since the training error associated with the optimizer's current iterate may sometimes be lower than the training error associated with the averaged estimate (which will often be the case when the mini-batch size $m$ is very large), we report the minimum of these two quantities.

To be consistent with the numbers given in previous papers we report the reconstruction error instead of the actual objective function value (although these are almost perfectly correlated in our experience).  And we report the error on the training set as opposed to the test set, as we are chiefly interested in optimization speed and not the generalization capabilities of the networks themselves.

In our first experiment we examined the relationship between the mini-batch size $m$ and the per-iteration rate of progress made by \acronym{} and the baseline on the MNIST problem.  The results from this experiment are plotted in Figure \ref{fig:experiments_minibatch}.  They strongly suggest that the per-iteration rate of progress of \acronym{} tends to a superlinear function of $m$ (which can be most clearly seen by examining the plots of training error vs training cases processed), which is to be contrasted with the baseline, where increasing $m$ has a much smaller effect on the per-iteration rate of progress, and with \acronym{} without momentum, where the per-iteration rate of progress seems to be a linear or slightly sublinear function of $m$.  It thus appears that the main limiting factor in the convergence of \acronym{} (with momentum applied) is the noise in the gradient, at least in later stages of optimization, and that this is not true of the baseline to nearly the same extent.  This would seem to suggest that \acronym{}, much more than SGD, would benefit from a massively parallel distributed implementation which makes use of more computational resources than a single GPU.

\begin{figure}[H]
\begin{centering}

\includegraphics[width=0.46\columnwidth]{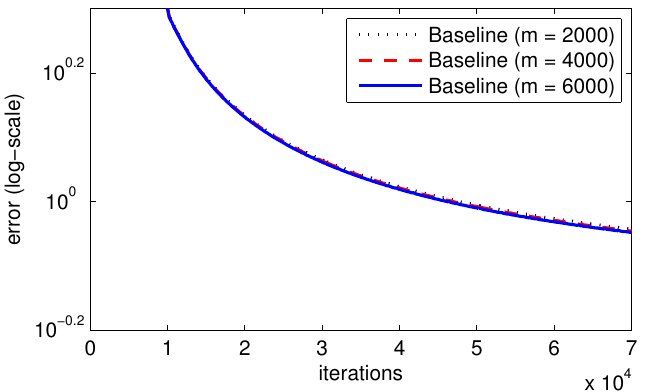} \hspace{0.025\columnwidth}
\includegraphics[width=0.46\columnwidth]{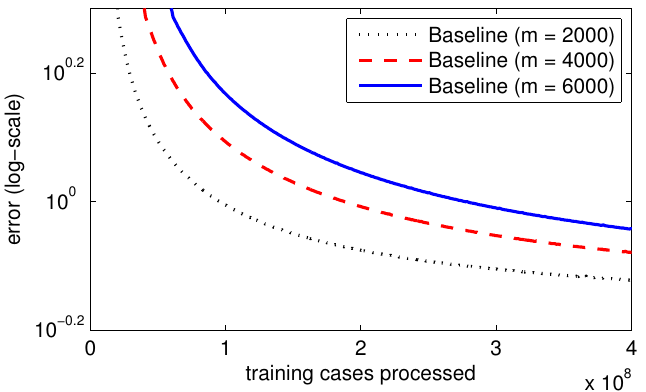}

\vspace{0.005\columnwidth}

\includegraphics[width=0.46\columnwidth]{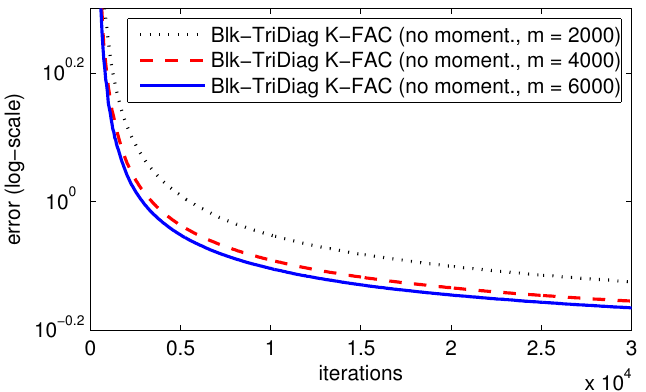} \hspace{0.025\columnwidth}
\includegraphics[width=0.46\columnwidth]{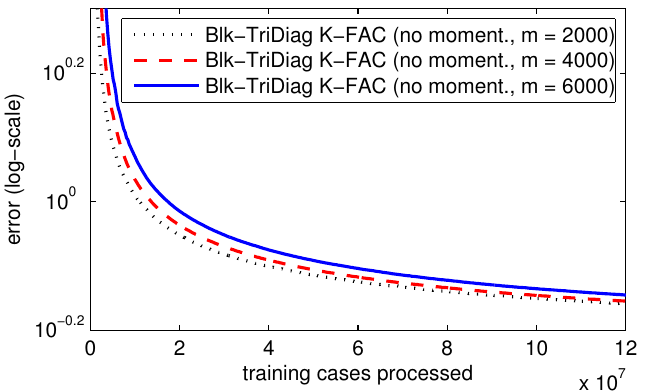}

\vspace{0.005\columnwidth}

\includegraphics[width=0.46\columnwidth]{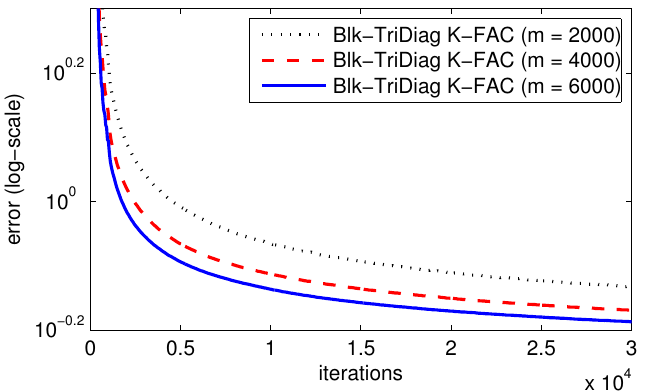} \hspace{0.025\columnwidth}
\includegraphics[width=0.46\columnwidth]{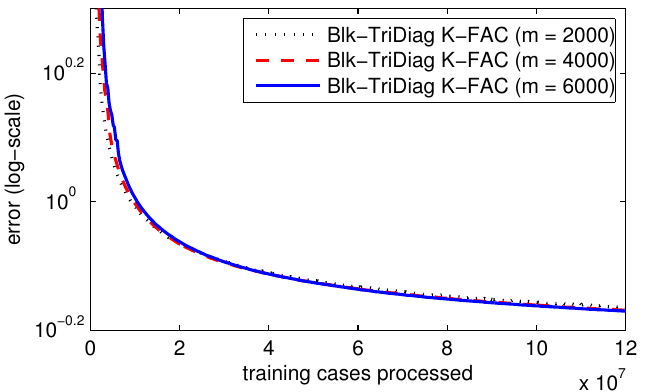}

\vspace{0.005\columnwidth}

\includegraphics[width=0.46\columnwidth]{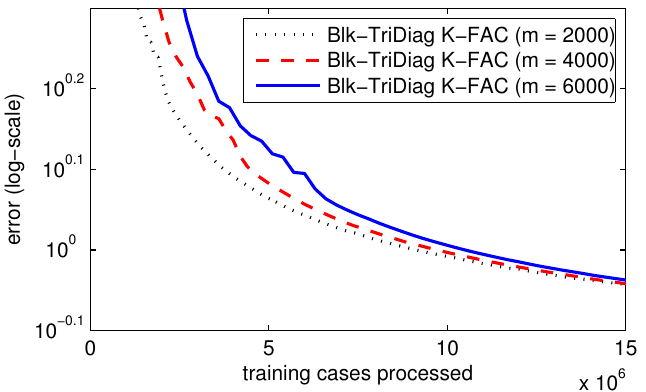} \hspace{0.025\columnwidth}
\includegraphics[width=0.46\columnwidth]{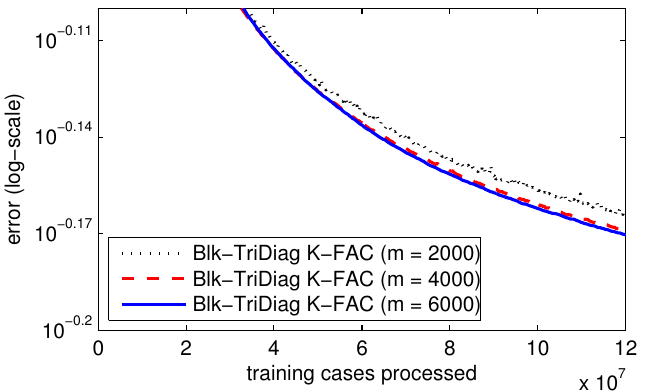}

\vspace{-0.1in}
\caption{\small Results from our first experiment examining the relationship between the mini-batch size $m$ and the per-iteration progress (\textbf{left column}) or per-training case progress (\textbf{right column}) progress made by \acronym{} on the MNIST deep autoencoder problem.  Here, ``Blk-TriDiag \acronym{}" is the block-tridiagonal version of \acronym{}, and ``Blk-Diag \acronym{}" is the block-diagonal version, and ``no moment." indicates that momentum was not used.  The \textbf{bottom row} consists of zoomed-in versions of the right plot from the row above it, with the left plot concentrating on the beginning stage of optimization, and the right plot concentrating on the later stage. Note that the x-axes of these two last plots are at significantly different scales ($10^6$ vs $10^7$). \label{fig:experiments_minibatch} }
\end{centering}
\end{figure}

But even in the single CPU/GPU setting, the fact that the per-iteration rate of progress tends to a \emph{superlinear} function of $m$, while the per-iteration computational cost of \acronym{} is a roughly linear function of $m$, suggests that in order to obtain the best per-second rate of progress with \acronym{}, we should use a rapidly increasing schedule for $m$.   To this end we designed an exponentially increasing schedule for $m$, given by $m_k = \min( m_1 \exp( (k-1)/b ), |S| )$, where $k$ is the current iteration, $m_1 = 1000$, and where $b$ is chosen so that $m_{500} = |S|$.  The approach of increasing the mini-batch size in this way is analyzed by \citet{FriedlanderSchmidt}.  Note that for other neural network optimization problems, such as ones involving larger training datasets than these autoencoder problems, a more slowly increasing schedule, or one that stops increasing well before $m$ reaches $|S|$, may be more appropriate.  One may also consider using an approach similar to that of \citet{byrd2012sample} for adaptively determining a suitable mini-batch size.

In our second experiment we evaluated the performance of our implementation of \acronym{} versus the baseline on all 3 deep autoencoder problems, where we used the above described exponentially increasing schedule for $m$ for \acronym{}, and a fixed setting of $m$ for the baseline and momentum-less \acronym{} (which was chosen from a small range of candidates to give the best overall per-second rate of progress).  The relatively high values of $m$ chosen for the baseline ($m = 250$ for CURVES, and $m = 500$ for MNIST and FACES, compared to the $m = 200$ which was used by \citet{momentum_ilya}) reflect the fact that our implementation of the baseline uses a high-performance GPU and a highly optimized linear algebra package, which allows for many training cases to be efficiently processed in parallel.  Indeed, after a certain point, making $m$ much smaller didn't result in a significant reduction in the baseline's per-iteration computation time.

Note that in order to process the very large mini-batches required for the exponentially increasing schedule without overwhelming the memory of the GPU, we partitioned the mini-batches into smaller ``chunks" and performed all computations involving the mini-batches, or subsets thereof, one chunk at a time.

The results from this second experiment are plotted in Figures \ref{fig:experiments_time} and \ref{fig:experiments_iterations}.  For each problem \acronym{} had a \emph{per-iteration} rate of progress which was orders of magnitude higher than that of the baseline's (Figure \ref{fig:experiments_iterations}), provided that momentum was used, which translated into an overall much higher \emph{per-second} rate of progress (Figure \ref{fig:experiments_time}), despite the higher cost of \acronym{}'s iterations (due mostly to the much larger mini-batch sizes used).  Note that Polyak averaging didn't produce a significant increase in convergence rate of \acronym{} in this second experiment (actually, it hurt a bit) as the increasing schedule for $m$ provided a much more effective (although expensive) solution to the problem of noise in the gradient.

The importance of using some form of momentum on these problems is emphasized in these experiments by the fact that without the momentum technique developed in Section \ref{sec:momentum}, \acronym{} wasn't significantly faster than the baseline (which itself used a strong form of momentum).  These results echo those of \citet{momentum_ilya}, who found that without momentum, SGD was orders of magnitude slower on these particular problems.  Indeed, if we had included results for the baseline without momentum they wouldn't even have appeared in the axes boundaries of the plots in Figure \ref{fig:experiments_time}.

\begin{figure}[H]
\begin{centering}
\includegraphics[width=0.7\columnwidth]{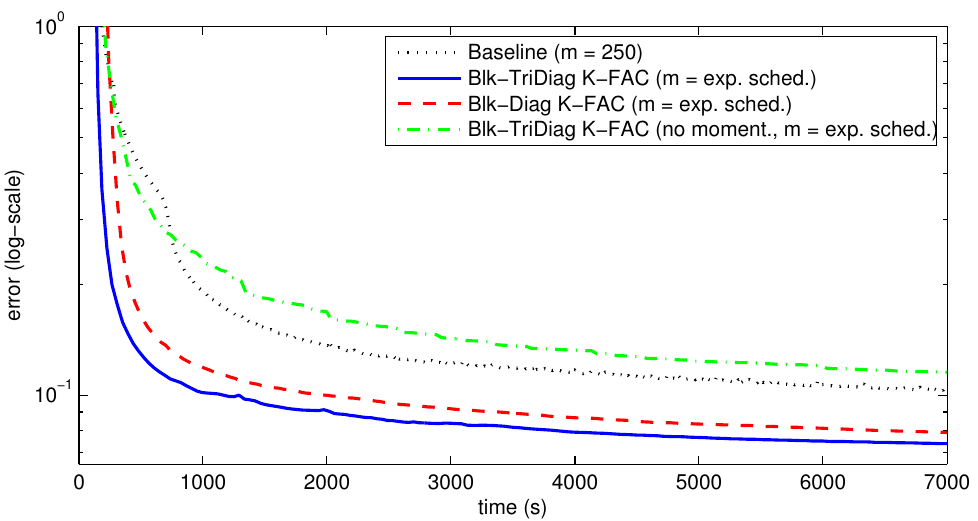}

\vspace{0.025\columnwidth}

\includegraphics[width=0.7\columnwidth]{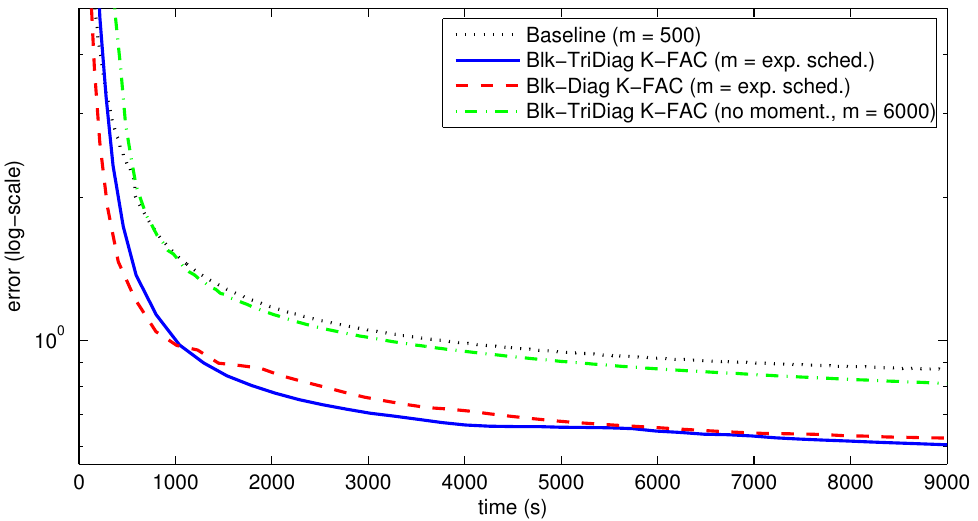}

\vspace{0.025\columnwidth}
\includegraphics[width=0.7\columnwidth]{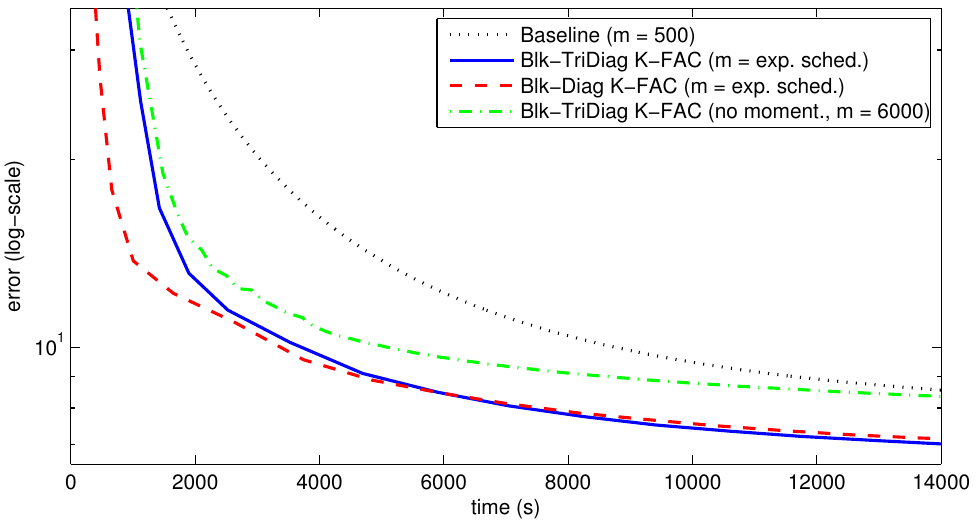}

\caption{\small Results from our second experiment showing training error versus computation time on the CURVES (\textbf{top}), MNIST (\textbf{middle}), and FACES (\textbf{bottom}) deep autoencoder problems.  %Here, ``Blk-TriDiag \acronym{}" is the block-tridiagonal version of \acronym{}, and ``Blk-Diag \acronym{}" is the block-diagonal version.  ``No moment." indicates that momentum was not used. 
\label{fig:experiments_time} }
\end{centering}
\end{figure}

\begin{figure}[H]
\begin{centering}

\includegraphics[width=0.45\columnwidth]{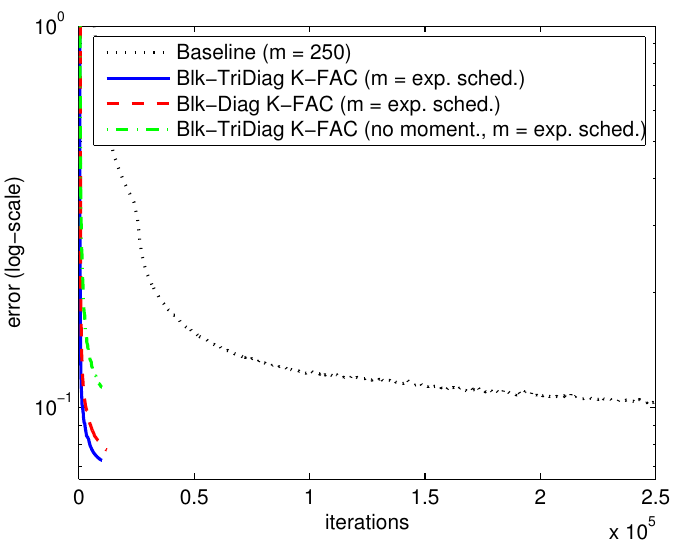}
\hspace{0.025\columnwidth}
\includegraphics[width=0.45\columnwidth]{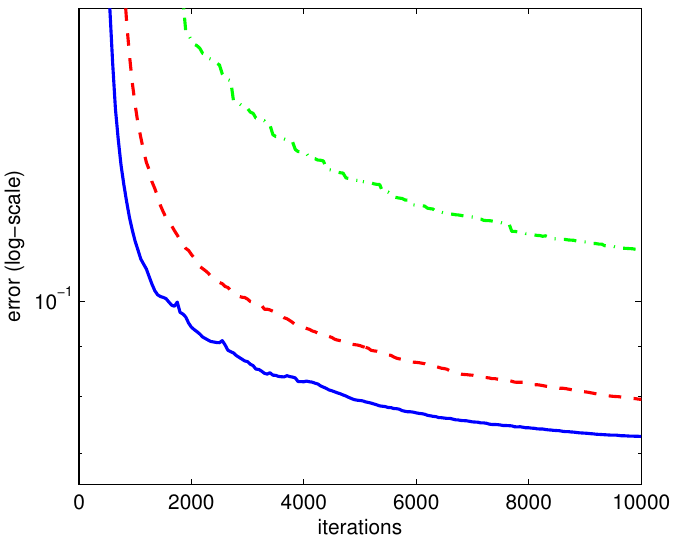}

\vspace{0.025\columnwidth}

\includegraphics[width=0.45\columnwidth]{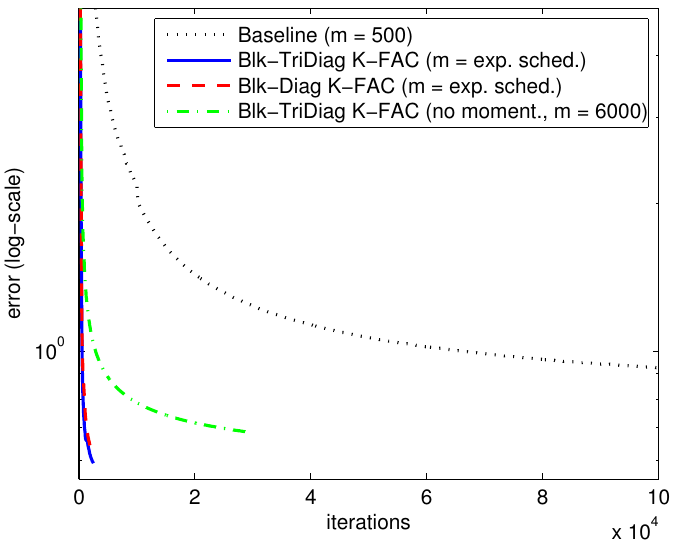}
\hspace{0.025\columnwidth}
\includegraphics[width=0.45\columnwidth]{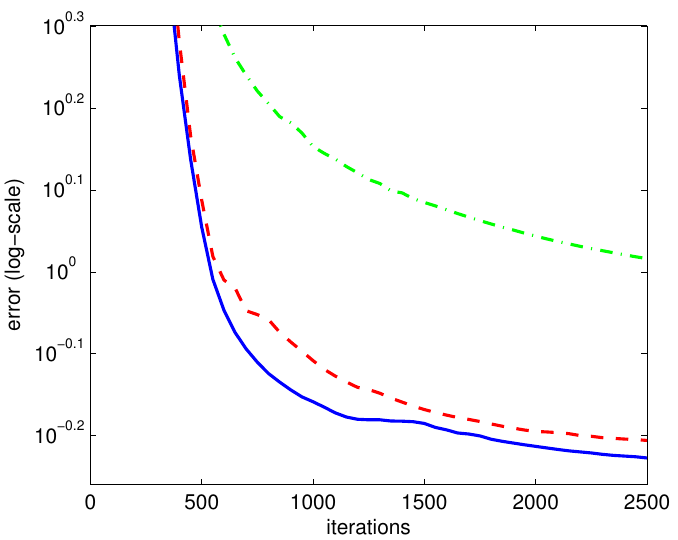}

\vspace{0.025\columnwidth}

\includegraphics[width=0.45\columnwidth]{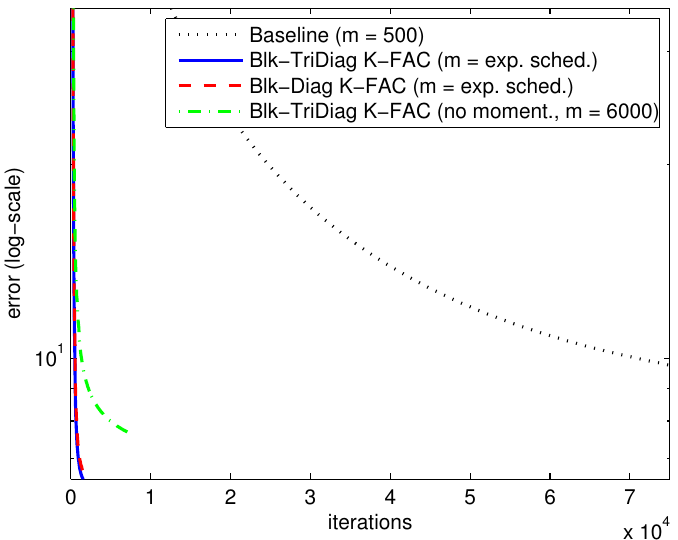}
\hspace{0.025\columnwidth}
\includegraphics[width=0.45\columnwidth]{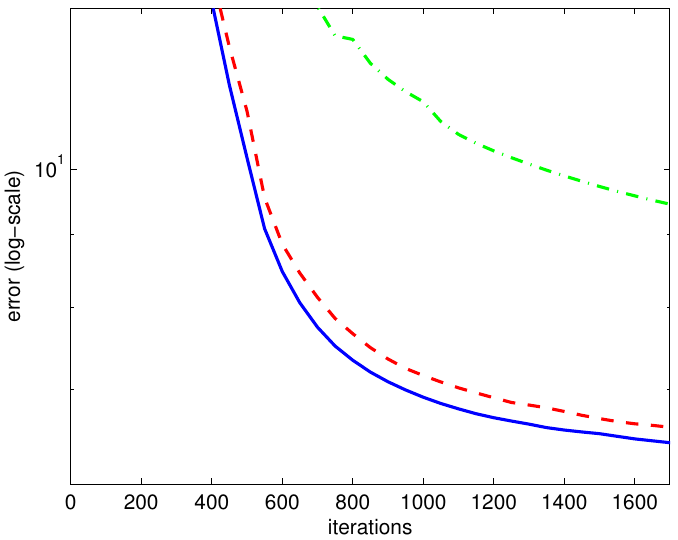}

\caption{\small More results from our second experiment showing training error versus iteration on the CURVES (\textbf{top} row), MNIST (\textbf{middle} row), and FACES (\textbf{bottom} row) deep autoencoder problems.  The plots on the right are zoomed in versions of those on the left which highlight the difference in per-iteration progress made by the different versions of \acronym{}.  \label{fig:experiments_iterations} }
\end{centering}
\end{figure}

Recall that the type of momentum used by \acronym{} compensates for the inexactness of our approximation to the Fisher by allowing \acronym{} to build up a better solution to the exact quadratic model minimization problem (defined using the exact Fisher) across many iterations.  Thus, if we were to use a much stronger approximation to the Fisher when computing our update proposals $\Delta$, the benefit of using this type of momentum would have likely been much smaller than what we observed.  One might hypothesize that it is the particular type of momentum used by \acronym{} that is mostly responsible for its advantages over the SGD baseline.  However in our testing we found that for SGD the more conventional type of momentum used by \citet{momentum_ilya} performs significantly better.

From Figure \ref{fig:experiments_iterations} we can see that the block-tridiagonal version of \acronym{} has a per-iteration rate of progress which is typically 25\% to 40\% larger than the simpler block-diagonal version.  This observation provides empirical support for the idea that the block-tridiagonal approximate inverse Fisher $\hat{F}^{-1}$ is a more accurate approximation of $F^{-1}$ than the block-diagonal approximation $\breve{F}^{-1}$.  However, due to the higher cost of the iterations in the block-tridiagonal version, its overall per-second rate of progress seems to be only moderately higher than the block-diagonal version's, depending on the problem.  

%\vspace{0.3in}

Note that while matrix-matrix multiplication, matrix inverse, and SVD computation all have the same computational complexity, in practice their costs differ significantly (in increasing order as listed).  Computation of the approximate Fisher inverse, which is performed in our experiments once every 20 iterations (and for the first 3 iterations), requires matrix inverses for the block-diagonal version, and SVDs for the block-tridiagonal version. For the FACES problem, where the layers can have as many as 2000 units, this accounted for a significant portion of the difference in the average per-iteration computational cost of the two versions (as these operations must be performed on $2000 \times 2000$ sized matrices).

While our results suggest that the block-diagonal version is probably the better option overall due to its greater simplicity (and comparable per-second progress rate), the situation may be different given a more efficient implementation of \acronym{} where the more expensive SVDs required by the tri-diagonal version are computed approximately and/or in parallel with the other tasks, or perhaps even while the network is being optimized.

Our results also suggest that \acronym{} may be much better suited than the SGD baseline for a massively distributed implementation, since it would require far fewer synchronization steps (by virtue of the fact that it requires far fewer iterations).

%\section{Discussion of results}

\section{Conclusions and future directions}

In this paper we developed \acronym{}, an approximate natural gradient-based optimization method.  We started by developing an efficiently invertible approximation to a neural network's Fisher information matrix, which we justified via a theoretical and empirical examination of the statistics of the gradient of a neural network.  Then, by exploiting the interpretation of the Fisher as an approximation of the Hessian, we designed a developed a complete optimization algorithm using quadratic model-based damping/regularization techniques, which yielded a highly effective and robust method virtually free from the need for hyper-parameter tuning.  We showed the \acronym{} preserves many of natural gradient descent's appealing theoretical properties, such as invariance to certain reparameterizations of the network.  Finally, we showed that \acronym{}, when combined with a form of momentum and an increasing schedule for the mini-batch size $m$, far surpasses the performance of a well-tuned version of SGD with momentum on difficult deep auto-encoder optimization benchmarks (in the setting of a single GPU machine).  Moreover, our results demonstrated that \acronym{} requires orders of magnitude fewer total updates/iterations than SGD with momentum, making it ideally suited for a massively distributed implementation where synchronization is the main bottleneck.

Some potential directions for future development of \acronym{} include:
\begin{itemize}
\item a better/more-principled handling of the issue of gradient stochasticity than a pre-determined increasing schedule for $m$
\item extensions of \acronym{} to recurrent or convolutional architectures, which may require specialized approximations of their associated Fisher matrices
\item an implementation that better exploits opportunities for parallelism described in Section \ref{sec:efficiency}
\item exploitation of massively distributed computation in order to compute high-quality estimates of the gradient%, which would enable massively data-parallel neural network optimization with the need for very few synchonization steps
\end{itemize}

%\subsubsection*{Acknowledgments}
\section*{Acknowledgments}
We gratefully acknowledge support from Google, NSERC, and the University of Toronto.   We would like to thank Ilya Sutskever for his constructive comments on an early draft of this paper.

%\subsubsection*{References}

%[[\textbf{REMEMBER TO ANONYMIZE ANY ``IN PREPARATION" CITATIONS LIKE FANG}]]

%\bibliographystyle{plainnat}
%\bibliographystyle{plainnat-eprints}
\bibliographystyle{plainnat-eprints-authabrv}
 
\bibliography{kron_bibliography}

\newpage
\appendix

\section{Derivation of the expression for the approximation from Section \ref{sec:justifying_approx}}

In this section we will show that
\begin{align*}
\expected \left[ \bar{a}^{(1)} \bar{a}^{(2)} \: g^{(1)} g^{(2)} \right] - &\expected \left[ \bar{a}^{(1)} \bar{a}^{(2)} \right] \expected \left[ g^{(1)} g^{(2)} \right] \\
&= \kappa(\bar{a}^{(1)}, \bar{a}^{(2)}, g^{(1)}, g^{(2)}) + \kappa(\bar{a}^{(1)}) \kappa(\bar{a}^{(2)}, g^{(1)}, g^{(2)}) + \kappa(\bar{a}^{(2)})\kappa(\bar{a}^{(1)}, g^{(1)}, g^{(2)})
\end{align*}

The only specific property of the distribution over $\bar{a}^{(1)}$, $\bar{a}^{(2)}$, $g^{(1)}$, and $g^{(2)}$ which we will require to do this is captured by the following lemma.
\begin{lemma}
\label{lemma:udv_expectation}
Suppose $u$ is a scalar variable which is independent of $y$ when conditioned on the network's output $f(x,\theta)$, and $v$ is some intermediate quantity computed during the evaluation of $f(x,\theta)$ (such as the activities of the units in some layer).  Then we have
\begin{align*}
\expected \left[ u \: \deriv v \right] = 0
\end{align*}
\end{lemma}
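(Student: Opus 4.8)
The plan is to prove the identity by conditioning on the input $x$ and exploiting two structural facts: that $\deriv v$ has the chain-rule form whose Jacobian factor depends on $x$ alone, and that the expected score of the predictive density vanishes under the model's own distribution. Throughout, recall that the outer expectation is over $x$ drawn from $\hat{Q}_x$ and $y$ drawn from the \emph{model's} predictive distribution $P_{y|x}(\theta)$, a point which turns out to be essential.

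First I would apply the law of total expectation, writing $\expected[u\,\deriv v] = \expected_x\!\left[\expected_{y|x}[u\,\deriv v]\right]$, where the inner expectation is over $y\sim P_{y|x}(\theta)$. Since $v$ is an intermediate quantity in the forward evaluation of $f(x,\theta)$, it is a deterministic function of $x$, and by the chain rule $\deriv v = \derivfrac{L(y,f(x,\theta))}{v} = -\left(\frac{\partial f(x,\theta)}{\partial v}\right)^\top \left.\frac{\partial \log r(y|z)}{\partial z}\right|_{z=f(x,\theta)}$, where the Jacobian $\frac{\partial f(x,\theta)}{\partial v}$ is measurable with respect to $x$ (it does not involve $y$).

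Next I would invoke the hypothesis that $u$ is independent of $y$ conditioned on $f(x,\theta)$; because $f(x,\theta)$ is determined by $x$, this yields $u \perp y \mid x$. As $\deriv v$ is a deterministic function of $(x,y)$, it follows that $u$ and $\deriv v$ are conditionally independent given $x$, so the inner expectation factors as $\expected_{y|x}[u\,\deriv v] = \expected[u\mid x]\,\expected_{y|x}[\deriv v]$. It then remains to show the second factor vanishes. Pulling the $x$-measurable Jacobian outside the $y$-expectation gives $\expected_{y|x}[\deriv v] = -\left(\frac{\partial f(x,\theta)}{\partial v}\right)^\top \expected_{y\sim R_{y|z}}\!\left[\frac{\partial \log r(y|z)}{\partial z}\right]_{z=f(x,\theta)}$, and the remaining expectation is the expected score of $r(\cdot\,|z)$ under its own distribution, which equals $\int \frac{\partial r(y|z)}{\partial z}\,dy = \frac{\partial}{\partial z}\int r(y|z)\,dy = \frac{\partial}{\partial z} 1 = 0$ by interchanging differentiation and integration. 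Hence $\expected_{y|x}[\deriv v] = 0$ for every $x$, so $\expected_{y|x}[u\,\deriv v] = 0$, and taking the outer expectation over $x$ gives $\expected[u\,\deriv v] = 0$.

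The main obstacle is the combination of the conditional-independence factorization with correctly invoking the expected-score identity: one must be careful that the expectation over $y$ is taken with respect to the model's predictive distribution $P_{y|x}(\theta)$ rather than the data distribution, since it is precisely this choice that makes the expected score zero (using $\hat{Q}_{y|x}$ instead would give the empirical Fisher and break the identity, as the paper's footnote emphasizes). The interchange of $\frac{\partial}{\partial z}$ and $\int dy$ requires only mild regularity on $r$, and the factorization step relies on the elementary fact that a variable independent of $y$ given $x$ is also independent of any fixed function of $y$ given $x$.
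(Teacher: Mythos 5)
Your proof is correct and follows essentially the same route as the paper's own: the chain rule expresses $\deriv v$ as an $x$-dependent Jacobian acting on the score of $r(\cdot|z)$ at $z=f(x,\theta)$, the tower property conditions on $x$, and the inner expectation vanishes because the expected score of a distribution under itself is zero (a fact the paper cites and you spell out via interchanging $\partial/\partial z$ with $\int dy$). The one fragile step is your inference from ``$u \perp y$ given $f(x,\theta)$'' to ``$u \perp y$ given $x$,'' which is not valid for an arbitrary random $u$ (conditioning on a finer variable can destroy conditional independence); it is harmless here because in every intended application $u$ is a deterministic function of the forward pass and hence constant given $x$ --- the same implicit assumption the paper's proof makes when it pulls $u$ outside the inner expectation over $y$.
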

Our proof of this lemma (which is at the end of this section) makes use of the fact that the expectations are taken with respect to the network's predictive distribution $P_{y|x}$ as opposed to the training distribution $\hat{Q}_{y|x}$.

Intuitively, this lemma says that the intermediate quantities computed in the forward pass of Algorithm \ref{alg:nnet_gradient} (or various functions of these) are statistically uncorrelated with various derivative quantities computed in the backwards pass, provided that the targets $y$ are sampled according to the network's predictive distribution $P_{y|x}$ (instead of coming from the training set).  Valid choices for $u$ include $\bar{a}^{(k)}$, $\bar{a}^{(k)} - \expected \left[ \bar{a}^{(k)} \right]$ for $k \in \{1,2\}$, and products of these.  Examples of invalid choices for $u$ include expressions involving $g^{(k)}$, since these will depend on the derivative of the loss, which is not independent of $y$ given $f(x,\theta)$.

According to a well-known general formula relating moments to cumulants we may write $\expected \left[ \bar{a}^{(1)} \bar{a}^{(2)} \: g^{(1)} g^{(2)} \right]$ as a sum of 15 terms, each of which is a product of various cumulants corresponding to one of the 15 possible ways to partition the elements of $\{\bar{a}^{(1)}, \bar{a}^{(2)}, g^{(1)}, g^{(2)}\}$ into non-overlapping sets.  For example, the term corresponding to the partition $\{\{\bar{a}^{(1)}\}, \{\bar{a}^{(2)}, g^{(1)}, g^{(2)}\}\}$ is $\kappa( \bar{a}^{(1)} ) \kappa( \bar{a}^{(2)}, g^{(1)}, g^{(2)} )$.

Observing that 1st-order cumulants correspond to means and 2nd-order cumulants correspond to covariances, for $k \in \{1,2\}$ Lemma \ref{lemma:udv_expectation} gives 
\begin{align*}
\kappa(g^{(k)}) = \expected \left[ g^{(k)} \right] = \expected \left[ \deriv x^{(k)} \right] = 0
\end{align*}
where $x^{(1)} = [x_i]_{k_2}$, and $x^{(2)} = [x_j]_{k_4}$ (so that $g^{(k)} = \deriv x^{(k)}$).  And similarly for $k,m \in \{1,2\}$ it gives
\begin{align*}
\kappa(\bar{a}^{(k)},g^{(m)}) = \expected \left[ \left( \bar{a}^{(m)} - \expected\left[ \bar{a}^{(m)} \right]\right) \left (g^{(k)} - \expected \left[ g^{(k)} \right] \right) \right] = \expected \left[ \left( \bar{a}^{(m)} - \expected\left[ \bar{a}^{(m)} \right]\right) g^{(k)} \right] = 0
\end{align*}
Using these identities we can eliminate 10 of the terms.

The remaining expression for $\expected \left[ \bar{a}^{(1)} \bar{a}^{(2)} \: g^{(1)} g^{(2)} \right]$ is thus
\begin{align*}
\kappa(\bar{a}^{(1)}, \bar{a}^{(2)}, g^{(1)}, g^{(2)}) + \kappa(\bar{a}^{(1)}) & \kappa(\bar{a}^{(2)}, g^{(1)}, g^{(2)}) + \kappa(\bar{a}^{(2)})\kappa(\bar{a}^{(1)}, g^{(1)}, g^{(2)}) \\
&+ \kappa( \bar{a}^{(1)}, \bar{a}^{(2)} ) \kappa( g^{(1)}, g^{(2)} ) + \kappa( \bar{a}^{(1)} ) \kappa( \bar{a}^{(2)} ) \kappa( g^{(1)}, g^{(2)} )
\end{align*}

Noting that
\begin{align*}
\kappa( \bar{a}^{(1)}, \bar{a}^{(2)} ) &\kappa( g^{(1)}, g^{(2)} ) + \kappa( \bar{a}^{(1)} ) \kappa( \bar{a}^{(2)} ) \kappa( g^{(1)}, g^{(2)} ) \\
&= \cov( \bar{a}^{(1)}, \bar{a}^{(2)} ) \expected \left[ g^{(1)} g^{(2)} \right] + \expected \left[ \bar{a}^{(1)} \right] \expected \left[ \bar{a}^{(2)} \right] \expected \left[ g^{(1)} g^{(2)} \right] = \expected \left[ \bar{a}^{(1)} \bar{a}^{(2)} \right] \expected \left[ g^{(1)} g^{(2)} \right]
\end{align*}
it thus follows that
\begin{align*}
\expected \left[ \bar{a}^{(1)} \bar{a}^{(2)} \: g^{(1)} g^{(2)} \right] - &\expected \left[ \bar{a}^{(1)} \bar{a}^{(2)} \right] \expected \left[ g^{(1)} g^{(2)} \right] \\
&= \kappa(\bar{a}^{(1)}, \bar{a}^{(2)}, g^{(1)}, g^{(2)}) + \kappa(\bar{a}^{(1)}) \kappa(\bar{a}^{(2)}, g^{(1)}, g^{(2)}) + \kappa(\bar{a}^{(2)})\kappa(\bar{a}^{(1)}, g^{(1)}, g^{(2)})
\end{align*}
as required.

It remains to prove Lemma \ref{lemma:udv_expectation}.

\begin{proof}[Proof of Lemma \ref{lemma:udv_expectation}]

The chain rule gives
\begin{align*}
\deriv v = -\derivfrac{ \log p( y | x, \theta) }{v} = \left. -\derivfrac{ \log r( y | z) }{z} \right|_{z = f(x,\theta)}^\top \derivfrac{f(x,\theta)}{v}
\end{align*}

From which it follows that
\begin{align*}
\expected \left[ u \: \deriv v \right] &= \expected_{\hat{Q}_x} \left[ \expected_{P_{y | x}} \left[ u \: \deriv v \right]  \right] = \expected_{\hat{Q}_x} \left[ \expected_{R_{y | f(x,\theta)}} \left[ u \: \deriv v \right]  \right] \\
&= \expected_{\hat{Q}_x} \left[ \expected_{R_{y | f(x,\theta)}} \left[ -u \left. \derivfrac{ \log r( y | z) }{z} \right|_{z = f(x,\theta)}^\top \derivfrac{f(x,\theta)}{v} \right]  \right] \\
&= \expected_{\hat{Q}_x} \left[ -u \expected_{R_{y | f(x,\theta)}} \left[ \left. \derivfrac{ \log r( y | z) }{z} \right|_{z = f(x,\theta)} \right]^\top \derivfrac{f(x,\theta)}{v} \right] = \expected_{\hat{Q}_x} \left[ -u \: \vec{0}^\top \: \derivfrac{f(x,\theta)}{v} \right] = 0
\end{align*}

That the inner expectation above is $\vec{0}$ follows from the fact that the expected score of a distribution, when taken with respect to that distribution, is $\vec{0}$.

\end{proof}

\section{Efficient techniques for inverting $A \otimes B \pm C \otimes D$}
\label{sec:stein_solve}

It is well known that $(A \otimes B)^{-1} = A^{-1} \otimes B^{-1}$, and that matrix-vector products with this matrix can thus be computed as $(A^{-1} \otimes B^{-1}) v = \ovec( B^{-1} V A^{-\top} )$, where $V$ is the matrix representation of $v$ (so that $v = \ovec(V)$).

Somewhat less well known is that there are also formulas for $(A \otimes B \pm C \otimes D)^{-1}$ which can be efficiently computed and likewise give rise to efficient methods for computing matrix-vector products.

First, note that $(A \otimes B \pm C \otimes D)^{-1} v = u$ is equivalent to $(A \otimes B \pm C \otimes D) u = v$, which is equivalent to the linear matrix equation $B U A^\top \pm D U C^\top = V$, where $u = \ovec(U)$ and $v = \ovec(V)$.  This is known as a generalized Stein equation, and different examples of it have been studied in the control theory literature, where they have numerous applications.  For a recent survey of this topic, see \citet{iter_survey}.

One well-known class of methods called Smith-type iterations \citep{smith} involve rewriting this matrix equation as a fixed point iteration and then carrying out this iteration to convergence.  Interestingly, through the use of a special squaring trick, one can simulate $2^j$ of these iterations with only $\bigO(j)$ matrix-matrix multiplications. 

Another class of methods for solving Stein equations involves the use of matrix decompositions \citep[e.g.][]{stein_chu, stein_gardiner}.  Here we will present such a method particularly well suited for our application, as it produces a formula for $(A \otimes B + C \otimes D)^{-1} v$,  which after a fixed overhead cost (involving the computation of SVDs and matrix square roots), can be repeatedly evaluated for different choices of $v$ using only a few matrix-matrix multiplications.

We will assume that $A$, $B$, $C$, and $D$ are symmetric positive semi-definite, as they always are in our applications. We have
\begin{align*}
A \otimes B \pm C \otimes D = (A^{1/2} \otimes B^{1/2})(I \otimes I \pm A^{-1/2}CA^{-1/2} \otimes B^{-1/2}DB^{-1/2})(A^{1/2} \otimes B^{1/2})
\end{align*}
Inverting both sides of the above equation gives
\begin{align*}
(A \otimes B \pm C \otimes D)^{-1} = (A^{-1/2} \otimes B^{-1/2})(I \otimes I \pm A^{-1/2}CA^{-1/2} \otimes B^{-1/2}DB^{-1/2})^{-1}(A^{-1/2} \otimes B^{-1/2})
\end{align*}

Using the symmetric eigen/SVD-decomposition, we can write $A^{-1/2}CA^{-1/2} = E_1 S_1 E_1^\top$ and $B^{-1/2}DB^{-1/2} = E_2 S_2 E_2^\top$, where for $i \in \{1,2\}$ the $S_i$ are diagonal matrices and the $E_i$ are unitary matrices.

This gives
\begin{align*}
I \otimes I \pm A^{-1/2}CA^{-1/2} \otimes B^{-1/2}DB^{-1/2} &= I \otimes I \pm E_1 S_1 E_1^\top \otimes E_2 S_2 E_2^\top \\
&= E_1 E_1^\top \otimes E_2 E_2^\top \pm E_1 S_1 E_1^\top \otimes E_2 S_2 E_2^\top \\
&= (E_1 \otimes E_2)(I \otimes I \pm S_1  \otimes S_2)(E_1^\top \otimes E_2^\top)
\end{align*}
so that
\begin{align*}
(I \otimes I \pm A^{-1/2}CA^{-1/2} \otimes B^{-1/2}DB^{-1/2})^{-1} = (E_1 \otimes E_2)(I \otimes I \pm S_1 \otimes S_2)^{-1}(E_1^\top \otimes E_2^\top)
\end{align*}
Note that both $I \otimes I$ and $S_1 \otimes S_2$ are diagonal matrices, and thus the middle matrix $(I \otimes I \pm S_1 \otimes S_2)^{-1}$ is just the inverse of a diagonal matrix, and so can be computed efficiently.

Thus we have
\begin{align*}
(A \otimes B \pm C \otimes D)^{-1} &= (A^{-1/2} \otimes B^{-1/2})(E_1 \otimes E_2)(I \otimes I \pm S_1 \otimes S_2)^{-1}(E_1^\top \otimes E_2^\top)(A^{-1/2} \otimes B^{-1/2}) \\
&= (K_1 \otimes K_2)(I \otimes I \pm S_1 \otimes S_2)^{-1}(K_1^\top \otimes K_2^\top)
\end{align*}
where $K_1 = A^{-1/2} E_1$ and $K_2 = B^{-1/2} E_2$.

And so matrix-vector products with $(A \otimes B \pm C \otimes D)^{-1}$ can be computed as
\begin{align*}
(A \otimes B \pm C \otimes D)^{-1} v = \ovec\left (  K_2 \left[ (K_2^\top V K_1) \oslash \left (\mathbf{1}\mathbf{1}^\top \pm s_2 s_1^\top \right) \right] K_1^\top  \right)
\end{align*}
where $E \oslash F$ denotes element-wise division of $E$ by $F$, $s_i = \diag(S_i)$, and $\mathbf{1}$ is the vector of ones (sized as appropriate).  Note that if we wish to compute multiple matrix-vector products with $(A \otimes B \pm C \otimes D)^{-1}$ (as we will in our application), the quantities $K_1$, $K_2$, $s_1$ and $s_2$ only need to be computed the first time, thus reducing the cost of any future such matrix-vector products, and in particular avoiding any additional SVD computations.

In the considerably simpler case where $A$ and $B$ are both scalar multiples of the identity, and $\xi$ is the product of these multiples, we have
\begin{align*}
(\xi I \otimes I \pm C \otimes D)^{-1} = (E_1 \otimes E_2)(\xi I \otimes I \pm S_1 \otimes S_2)^{-1}(E_1^\top \otimes E_2^\top)
\end{align*}
where $E_1 S_1 E_1^\top$ and $E_2 S_2 E_2^\top$ are the symmetric eigen/SVD-decompositions of $C$ and $D$, respectively.  And so matrix-vector products with $(\xi I \otimes I \pm C \otimes D)^{-1}$ can be computed as
\begin{align*}
(\xi I \otimes I \pm C \otimes D)^{-1} v = \ovec\left (  E_2 \left[ (E_2^\top V E_1) \oslash \left (\xi \mathbf{1}\mathbf{1}^\top \pm s_2 s_1^\top \right) \right] E_1^\top  \right)
\end{align*}

\section{Computing $v^\top F v$ and $u^\top F v$ more efficiently}
\label{sec:cheap_uFv}

Note that the Fisher is given by
\begin{align*}
F = \expected_{\hat{Q}_x} \left[ J^\top F_R J \right]
\end{align*}
where $J$ is the Jacobian of $f(x,\theta)$ and $F_R$ is the Fisher information matrix of the network's predictive distribution $R_{y|z}$, evaluated at $z = f(x,\theta)$ (where we treat $z$ as the ``parameters"). 

To compute the matrix-vector product $Fv$ as estimated from a mini-batch we simply compute $J^\top F_R J v$ for each $x$ in the mini-batch, and average the results.  This latter operation can be computed in 3 stages \citep[e.g.][]{ng_martens}, which correspond to multiplication of the vector $v$ first by $J$, then by $F_R$, and then by $J^\top$.  

Multiplication by $J$ can be performed by a forward pass which is like a linearized version of the standard forward pass of Algorithm \ref{alg:nnet_gradient}.  As $F_R$ is usually diagonal or diagonal plus rank-1, matrix-vector multiplications with it are cheap and easy.  Finally, multiplication by $J^\top$ can be performed by a backwards pass which is essentially the same as that of Algorithm \ref{alg:nnet_gradient}. See \citet{schraudolph, ng_martens} for further details.

The naive way of computing $v^\top F v$ is to compute $Fv$ as above, and then compute the inner product of $Fv$ with $v$.   Additionally computing $u^\top F v$ and $u^\top F u$ would require another such matrix-vector product $F u$. 

However, if we instead just compute the matrix-vector products $J v$ (which requires only half the work of computing $Fv$), then computing $v^\top F v$ as $(J v)^\top F_R (J v)$ is essentially free.  And with $J u$ computed, we can similarly obtain $u^\top F v$ as $(J u)^\top F_R (J v)$ and $u^\top F u$ as $(J u)^\top F_R (J u)$. 

This trick thus reduces the computational cost associated with computing these various scalars by roughly half.

\section{Proofs for Section \ref{sec:invariance}}

\begin{proof}[Proof of Theorem \ref{thm:invariance}]

First we will show that the given network transformation can be viewed as reparameterization of the network according to an invertible linear function $\zeta$.  

Define $\theta^\dagger = [ \ovec(W^\dagger_1)^\top \ovec(W^\dagger_2)^\top \dots \ovec(W^\dagger_\ell)^\top ]^\top$, where $W^\dagger_i = \Phi_i^{-1} W_i \Omega_{i-1}^{-1}$ (so that $W_i = \Phi_i W^\dagger_i \Omega_{i-1}$) and let $\zeta$ be the function which maps $\theta^\dagger$ to $\theta$.  Clearly $\zeta$ is an invertible linear transformation.  

If the transformed network uses $\theta^\dagger$ in place of $\theta$ then we have 
\begin{align*}
\bar{a}^\dagger_i = \Omega_i \bar{a}_i \quad \quad \mbox{and} \quad \quad s^\dagger_i = \Phi_i^{-1} s_i
\end{align*}
which we can prove by a simple induction.  First note that $\bar{a}^\dagger_0 = \Omega_0 \bar{a}_0$ by definition.  Then, assuming by induction that $\bar{a}^\dagger_{i-1} = \Omega_{i-1} \bar{a}_{i-1}$, we have
\begin{align*}
s^\dagger_i = W^\dagger_i \bar{a}^\dagger_{i-1} = \Phi_i^{-1} W_i \Omega_{i-1}^{-1} \Omega_{i-1} \bar{a}_{i-1} = \Phi_i^{-1} W_i \bar{a}_{i-1} = \Phi_i^{-1} s_i
\end{align*}
and therefore also
\begin{align*}
\bar{a}^\dagger_i = \Omega_i \bar{\nonlin}_i( \Phi_i s^\dagger_i ) = \Omega_i \bar{\nonlin}_i( \Phi_i \Phi_i^{-1} s_i ) = \Omega_i \bar{\nonlin}_i( s_i ) = \Omega_i \bar{a}_i
\end{align*}

And because $\Omega_\ell = I$, we have $\bar{a}^\dagger_\ell = \bar{a}_\ell$, or more simply that $a^\dagger_\ell = a_\ell$, and thus both the original network and the transformed one have the same output (i.e. $f(x,\theta) = f^\dagger(x,\theta^\dagger)$). % and have the same objective function value (as evaluated at $\theta$ and $\theta^\dagger$ respectively).
From this it follows that $f^\dagger(x,\theta^\dagger) = f(x,\theta) = f(x,\zeta(\theta^\dagger))$, and thus the transformed network can be viewed as a reparameterization of the original network by $\theta^\dagger$.  Similarly we have that $h^\dagger(\theta^\dagger) = h(\theta) = h(\zeta(\theta^\dagger))$.

The following lemma is adapted from \citet{ng_martens} (see the section titled ``A critical analysis of parameterization invariance").

\begin{lemma}
\label{lemma:invar}
Let $\zeta$ be some invertible affine function mapping $\theta^\dagger$ to $\theta$, which reparameterizes the objective $h(\theta)$ as $h(\zeta(\theta^\dagger))$.  Suppose that $\B$ and $\B^\dagger$ are invertible matrices satisfying
\begin{align*}
J_\zeta^\top \B J_\zeta = \B^\dagger
\end{align*}
Then, additively updating $\theta$ by $\delta = -\alpha \B^{-1} \nabla h$ is equivalent to additively updating $\theta^\dagger$ by $\delta^\dagger = -\alpha \B^{\dagger-1} \nabla_{\theta^\dagger} h(\zeta(\theta^\dagger))$, in the sense that $\zeta(\theta^\dagger + \delta^\dagger ) = \theta + \delta$.
\end{lemma}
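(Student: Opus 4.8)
The plan is to prove Lemma \ref{lemma:invar} by direct computation, exploiting the one structural fact that an \emph{affine} $\zeta$ has a \emph{constant} Jacobian. First I would write $\zeta(\theta^\dagger) = J_\zeta \theta^\dagger + b$ for a constant vector $b$ and a constant invertible matrix $J_\zeta$ (namely its Jacobian), and fix the convention that $\theta = \zeta(\theta^\dagger)$, so that the quantities $\nabla h$ and $\B$ in the original-space update are understood to be evaluated at this $\theta$. Because $\zeta$ is affine, $J_\zeta$ does not depend on the point $\theta^\dagger$, and this is exactly what keeps the argument free of higher-order correction terms.

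Next I would compute $\delta^\dagger$ explicitly and show it pulls back to $\delta$ under $J_\zeta$. By the chain rule applied to the reparameterized objective, $\nabla_{\theta^\dagger} h(\zeta(\theta^\dagger)) = J_\zeta^\top \nabla h$. Substituting this together with the hypothesis $\B^\dagger = J_\zeta^\top \B J_\zeta$ (equivalently $\B^{\dagger-1} = J_\zeta^{-1}\B^{-1}J_\zeta^{-\top}$) into the definition of the transformed update gives
\[
\delta^\dagger = -\alpha\,\B^{\dagger-1}\nabla_{\theta^\dagger} h(\zeta(\theta^\dagger)) = -\alpha\, J_\zeta^{-1}\B^{-1}J_\zeta^{-\top}J_\zeta^\top \nabla h = J_\zeta^{-1}\left(-\alpha\,\B^{-1}\nabla h\right) = J_\zeta^{-1}\delta ,
\]
where the key cancellation $J_\zeta^{-\top}J_\zeta^\top = I$ is precisely what the congruence relation between $\B$ and $\B^\dagger$ is designed to produce. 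Hence $J_\zeta\delta^\dagger = \delta$.

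Finally, invoking affineness of $\zeta$ once more, I would conclude $\zeta(\theta^\dagger + \delta^\dagger) = J_\zeta(\theta^\dagger + \delta^\dagger) + b = (J_\zeta\theta^\dagger + b) + J_\zeta\delta^\dagger = \zeta(\theta^\dagger) + \delta = \theta + \delta$, which is exactly the claimed equivalence. There is no deep obstacle in this lemma; the only thing to carry out with care is the transpose/inverse bookkeeping in the step above, verifying that the (transposed) reparameterization Jacobian introduced by the chain-rule transformation of the gradient is cancelled by the two Jacobian factors coming from $\B^{\dagger-1}$. I would also flag at the outset that invertibility of $\B$, $\B^\dagger$, and $J_\zeta$ is what makes every inverse appearing here well defined, so that the manipulation is valid.
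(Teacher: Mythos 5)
Your proof is correct. The paper never actually writes out a proof of this lemma---it is stated as adapted from \citet{ng_martens} and used as a black box inside the proof of Theorem \ref{thm:invariance}---and your direct computation is precisely the standard argument that citation points to: the chain rule gives $\nabla_{\theta^\dagger} h(\zeta(\theta^\dagger)) = J_\zeta^\top \nabla h$, the congruence hypothesis $\B^\dagger = J_\zeta^\top \B J_\zeta$ makes the Jacobian factors cancel so that $\delta^\dagger = J_\zeta^{-1}\delta$, and affineness of $\zeta$ (constant, invertible $J_\zeta$) lets you push the update through $\zeta$ to conclude $\zeta(\theta^\dagger + \delta^\dagger) = \theta + \delta$, with all the inverse/transpose bookkeeping handled correctly.
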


%\begin{lemma}
%\label{lemma:invar}
%Let $\zeta$ be some invertible affine function mapping $\theta^\dagger$ to $\theta$, and suppose we are given two optimizers, one working in the $\theta$ parameterization and one working in the $\theta^\dagger$ parameterization, which start at equivalent initializations (i.e. $\theta_0 = \zeta(\theta_0^\dagger)$).  Suppose further that they use additive updates of the form $d_\theta = \alpha C^{-1} \nabla h$ and $d_{\theta^\dagger} = \alpha {C^\dagger}^{-1} \nabla h^\dagger$ (respectively), where $C$ and $C^\dagger$ are some curvature matrices (which may depend on the current value of $\theta$ or $\theta^\dagger$), and $\nabla h^\dagger$ is the gradient of $h(\zeta(\theta^\dagger))$ w.r.t. $\theta^\dagger$, and $\alpha$ is some learning rate which is parameterization invariant (but can otherwise depend on the iteration number or the network's current predictive distribution).  Then if the curvatures matrices satisfy the identity
%\begin{align*}
%J_\zeta^\top C J_\zeta = C^\dagger
%\end{align*}
%where $J_\zeta$ is the Jacobian of $\zeta$, we have that the paths taken by the optimizers, when viewed in the same space, are identical.
%\end{lemma}

%Going forward, we will add a ``$\dagger$" superscript to any network-dependent quantity in order to denote the version of that quantity computed using the transformed model.

Because $h^\dagger(\theta^\dagger) = h(\theta) = h(\zeta(\theta^\dagger))$ we have that $\nabla h^\dagger = \nabla_{\theta^\dagger} h(\zeta(\theta^\dagger))$.   So, by the above lemma, to prove the theorem it suffices to show that $J_\zeta^\top \breve{F} \ J_\zeta = \breve{F}^\dagger$ and $J_\zeta^\top \tilde{F} \ J_\zeta = \tilde{F}^\dagger$.%, where $\breve{F}^\dagger$ and $\tilde{F}^\dagger$ are the versions of $\breve{F}$ and $\tilde{F}^\dagger$ computed in the transformed model.

Using $W_i = \Phi_i W^\dagger_i \Omega_{i-1}$ it is straightforward to verify that
\begin{align*}
J_\zeta = \diag( \Omega_0^\top \otimes \Phi_1, \Omega_1^\top \otimes \Phi_2, \: \ldots, \: \Omega_{\ell-1}^\top \otimes \Phi_\ell )
\end{align*}

Because $s_i = \Phi_i s^\dagger_i$ and the fact that the networks compute the same outputs (so the loss derivatives are identical), we have by the chain rule that, $g_i^\dagger = \deriv s^\dagger_i = \Phi_i^\top \deriv s_i = \Phi_i^\top g_i$, and therefore
\begin{align*}
G_{i,j}^\dagger = \expected \left[ g_i^\dagger g_j^{\dagger\top} \right] = \expected \left[ \Phi_i^\top g_i (\Phi_i^\top g_i)^\top \right] = \Phi_i^\top \expected \left[ g_i g_i^\top \right] \Phi_j = \Phi_i^\top G_{i,j} \Phi_j
\end{align*}

Furthermore,
\begin{align*}
\bar{A}_{i,j}^\dagger = \expected \left[ \bar{a}_i^\dagger \bar{a}_j^{\dagger\top} \right] = \expected \left[ (\Omega_i \bar{a}_i) (\Omega_j \bar{a}_j)^\top \right] = \Omega_i \expected \left[ \bar{a}_i \bar{a}_j^\top \right] \Omega_j^\top = \Omega_i \bar{A}_{i,j} \Omega_j^\top
\end{align*}

Using these results we may express the Kronecker-factored blocks of the approximate Fisher $\tilde{F}^\dagger$, as computed using the transformed network, as follows:
\begin{align*}
\tilde{F}^\dagger_{i,j} = \bar{A}^\dagger_{i-1,j-1} \otimes G^\dagger_{i,j} = \Omega_{i-1} \bar{A}_{i-1,j-1} \Omega_{j-1}^\top
 \otimes \Phi_i^\top G_{i,j} \Phi_j &= (\Omega_{i-1} \otimes \Phi_i^\top)(\bar{A}_{i-1,j-1} \otimes G_{i,j})(\Omega_{j-1}^\top \otimes \Phi_j) \\
&= (\Omega_{i-1} \otimes \Phi_i^\top) \tilde{F}_{i,j} (\Omega_{j-1}^\top \otimes \Phi_j)
\end{align*}

Given this identity we thus have
\begin{align*}
\breve{F}^\dagger &= \diag\left( \tilde{F}^\dagger_{1,1}, \tilde{F}^\dagger_{2,2}, \: \ldots, \: \tilde{F}^\dagger_{\ell,\ell} \right) \\
&= \diag\left( (\Omega_0 \otimes \Phi_1^\top) \tilde{F}_{1,1} (\Omega_0^\top \otimes \Phi_1), (\Omega_1 \otimes \Phi_2^\top) \tilde{F}_{2,2} (\Omega_1^\top \otimes \Phi_2), \: \ldots, \: (\Omega_{\ell-1} \otimes \Phi_\ell^\top) \tilde{F}_{\ell,\ell} (\Omega_{\ell-1}^\top \otimes \Phi_\ell) \right) \\
&= \diag( \Omega_0 \otimes \Phi_1^\top, \Omega_1 \otimes \Phi_2^\top, \: \ldots, \: \Omega_{\ell-1} \otimes \Phi_\ell^\top ) \diag\left( \tilde{F}_{1,1}, \tilde{F}_{2,2}, \: \ldots, \:, \tilde{F}_{\ell,\ell} \right) \\
&\hspace{3.5in} \cdot \diag( \Omega_0^\top \otimes \Phi_1, \Omega_1^\top \otimes \Phi_2, \: \ldots, \: \Omega_{\ell-1}^\top \otimes \Phi_\ell ) \\
&= J_\zeta^\top \breve{F} J_\zeta
\end{align*}
%Thus, by Lemma \ref{lemma:invar} we have that the theorem holds for updates computed with $\breve{F}$.

We now turn our attention to the $\hat{F}$ (see Section \ref{sec:blocktridiag_approx} for the relevant notation). 

First note that
\begin{align*}
\Psi^\dagger_{i,i+1} &= \tilde{F}^\dagger_{i,i+1} {\tilde{F}_{i+1,i+1}}^{\dagger-1} = (\Omega_{i-1} \otimes \Phi_i^\top) \tilde{F}_{i,i+1} (\Omega_{i}^\top \otimes \Phi_{i+1}) \left(  (\Omega_i \otimes \Phi_{i+1}^\top) \tilde{F}_{i+1,i+1} (\Omega_i^\top \otimes \Phi_{i+1}) \right)^{-1} \\
&= (\Omega_{i-1} \otimes \Phi_i^\top) \tilde{F}_{i,i+1} (\Omega_{i}^\top \otimes \Phi_{i+1}) (\Omega_i^\top \otimes \Phi_{i+1})^{-1} \tilde{F}_{i+1,i+1}^{-1} (\Omega_i \otimes \Phi_{i+1}^\top)^{-1} \\
&= (\Omega_{i-1} \otimes \Phi_i^\top) \tilde{F}_{i,i+1} \tilde{F}_{i+1,i+1}^{-1} (\Omega_i \otimes \Phi_{i+1}^\top)^{-1} \\
&= (\Omega_{i-1} \otimes \Phi_i^\top) \Psi_{i,i+1} (\Omega_i \otimes \Phi_{i+1}^\top)^{-1}
\end{align*}
and so
\begin{align*}
\Sigma^\dagger_{i|i+1} &= \tilde{F}^\dagger_{i,i} - \Psi^\dagger_{i,i+1} \tilde{F}^\dagger_{i+1,i+1} \Psi_{i,i+1}^{\dagger\top} \\
&= (\Omega_{i-1} \otimes \Phi_i^\top) \tilde{F}_{i,i} (\Omega_{i-1}^\top \otimes \Phi_i) \\
& \quad\quad - (\Omega_{i-1} \otimes \Phi_i^\top) \Psi_{i,i+1} (\Omega_i \otimes \Phi_{i+1}^\top)^{-1}  (\Omega_i \otimes \Phi_{i+1}^\top) \tilde{F}_{i+1,i+1} (\Omega_i^\top \otimes \Phi_{i+1})   (\Omega_i \otimes \Phi_{i+1}^\top)^{-\top} \\
&\hspace{4.75in} \cdot \Psi_{i,i+1}^\top (\Omega_{i-1} \otimes \Phi_i^\top)^\top \\
&= (\Omega_{i-1} \otimes \Phi_i^\top) (\tilde{F}_{i,i} - \Psi_{i,i+1} \tilde{F}_{i+1,i+1} \Psi_{i,i+1}^\top) (\Omega_{i-1}^\top \otimes \Phi_i) \\
& = (\Omega_{i-1} \otimes \Phi_i^\top) \Sigma_{i|i+1} (\Omega_{i-1}^\top \otimes \Phi_i)
\end{align*}
Also, $\Sigma^\dagger_\ell = \tilde{F}^\dagger_{\ell,\ell} = (\Omega_{\ell-1} \otimes \Phi_\ell^\top) \tilde{F}_{\ell,\ell} (\Omega_{\ell-1}^\top \otimes \Phi_\ell) = (\Omega_{\ell-1} \otimes \Phi_\ell^\top) \Sigma_\ell (\Omega_{\ell-1}^\top \otimes \Phi_\ell)$.

From these facts it follows that
\begin{align*}
\Lambda^{\dagger-1} &= \diag\left( \Sigma_{1|2}^\dagger, \Sigma_{2|3}^\dagger, \: \ldots, \: \Sigma_{\ell-1|\ell}^\dagger, \Sigma_\ell^\dagger \right ) \\
&= \diag\left( (\Omega_0 \otimes \Phi_1^\top) \Sigma_{1|2} (\Omega_0 \otimes \Phi_1^\top), (\Omega_1 \otimes \Phi_2^\top) \Sigma_{2|3} (\Omega_1 \otimes \Phi_2^\top), \: \ldots, \right. \\
&\hspace{2in}\left. (\Omega_{\ell-2} \otimes \Phi_{\ell-1}^\top) \Sigma_{\ell-1|\ell} (\Omega_{\ell-2} \otimes \Phi_{\ell-1}^\top), (\Omega_{\ell-1} \otimes \Phi_\ell^\top) \Sigma_\ell (\Omega_{\ell-1}^\top \otimes \Phi_\ell) \right ) \\
&= \diag( \Omega_0 \otimes \Phi_1^\top, \Omega_1 \otimes \Phi_2^\top, \: \ldots, \: \Omega_{\ell-2} \otimes \Phi_{\ell-1}^\top,  \Omega_{\ell-1} \otimes \Phi_\ell^\top )  \diag\left( \Sigma_{1|2}, \Sigma_{2|3}, \: \ldots, \: \Sigma_{\ell-1|\ell}, \Sigma_\ell \right ) \\
&\hspace{2.75in}\diag( \Omega_0^\top \otimes \Phi_1, \Omega_1^\top \otimes \Phi_2, \: \ldots, \: \Omega_{\ell-2}^\top \otimes \Phi_{\ell-1},  \Omega_{\ell-1}^\top \otimes \Phi_\ell ) \\
&= J_\zeta^\top \Lambda^{-1} J_\zeta
\end{align*}
Inverting both sides gives $\Lambda^\dagger = J_\zeta^{-1} \Lambda J_\zeta^{-\top}$.

Next, observe that
\begin{align*}
\Psi^{\dagger\top}_{i,i+1} (\Omega_{i-1}^\top \otimes \Phi_i)^{-1} &= (\Omega_i \otimes \Phi_{i+1}^\top)^{-\top} \Psi_{i,i+1}^\top (\Omega_{i-1} \otimes \Phi_i^\top)^\top (\Omega_{i-1}^\top \otimes \Phi_i)^{-1}
&= (\Omega_i^\top \otimes \Phi_{i+1})^{-1} \Psi_{i,i+1}^\top
\end{align*}
from which it follows that
\begin{align*}
&\Xi^{\dagger\top} J_\zeta^{-1} = \begin{bmatrix}
I  &   &   &    \\
-\Psi^{\dagger\top}_{1,2}  &  I &   &  \\
 &  -\Psi^{\dagger\top}_{2,3} &  I &  \\
 &   &  \ddots  & \ddots  \\
&   &   &    -\Psi^{\dagger\top}_{\ell-1,\ell} &  I 
\end{bmatrix}
\diag( (\Omega_0^\top \otimes \Phi_1)^{-1}, (\Omega_1^\top \otimes \Phi_2)^{-1}, \: \ldots, \: (\Omega_{\ell-1}^\top \otimes \Phi_\ell)^{-1} ) \\
&= \begin{bmatrix}
(\Omega_0^\top \otimes \Phi_1)^{-1}  &   &   &    \\
-\Psi^{\dagger\top}_{1,2} (\Omega_0^\top \otimes \Phi_1)^{-1}  &  (\Omega_1^\top \otimes \Phi_2)^{-1} &   &  \\
 &  -\Psi^{\dagger\top}_{2,3} (\Omega_1^\top \otimes \Phi_2)^{-1} &  (\Omega_2^\top \otimes \Phi_3)^{-1} &  \\
 &   &  \ddots  & \ddots  \\
&   &   &    -\Psi^{\dagger\top}_{\ell-1,\ell}(\Omega_{\ell-2}^\top \otimes \Phi_{\ell-1})^{-1} &  (\Omega_{\ell-1}^\top \otimes \Phi_\ell)^{-1} 
\end{bmatrix} \\
&= \begin{bmatrix}
(\Omega_0^\top \otimes \Phi_1)^{-1}  &   &   &    \\
-(\Omega_0^\top \otimes \Phi_1)^{-1} \Psi^{\top}_{1,2}  &  (\Omega_1^\top \otimes \Phi_2)^{-1} &   &  \\
 &  -(\Omega_1^\top \otimes \Phi_2)^{-1} \Psi^{\top}_{2,3} &  (\Omega_2^\top \otimes \Phi_3)^{-1} &  \\
 &   &  \ddots  & \ddots  \\
&   &   &    -(\Omega_{\ell-2}^\top \otimes \Phi_{\ell-1})^{-1} \Psi^{\top}_{\ell-1,\ell} &  (\Omega_{\ell-1}^\top \otimes \Phi_\ell)^{-1} 
\end{bmatrix} \\
& = \diag( (\Omega_0^\top \otimes \Phi_1)^{-1}, (\Omega_1^\top \otimes \Phi_2)^{-1}, \: \ldots, \: (\Omega_{\ell-1}^\top \otimes \Phi_\ell)^{-1} ) 
\begin{bmatrix}
I  &   &   &    \\
-\Psi^{\top}_{1,2}  &  I &   &  \\
 &  -\Psi^{\top}_{2,3} &  I &  \\
 &   &  \ddots  & \ddots  \\
&   &   &    -\Psi^{\top}_{\ell-1,\ell} &  I 
\end{bmatrix} \\
&= J_\zeta^{-1} \Xi^{\top}
\end{align*}

Combining $\Lambda^\dagger = J_\zeta^{-1} \Lambda J_\zeta^{-\top}$ and $\Xi^{\dagger\top} J_\zeta^{-1} = J_\zeta^{-1} \Xi^{\top}$ we have
\begin{align*}
\hat{F}^{\dagger-1} = \Xi^{\dagger\top} \Lambda^\dagger \Xi^\dagger = \Xi^{\dagger\top} J_\zeta^{-1} \Lambda J_\zeta^{-\top} \Xi^\dagger = (\Xi^{\dagger\top} J_\zeta^{-1}) \Lambda (\Xi^{\dagger\top} J_\zeta^{-1})^\top &= (J_\zeta^{-1} \Xi^{\top}) \Lambda (J_\zeta^{-1} \Xi^{\top})^\top \\
&= J_\zeta^{-1} \Xi^{\top} \Lambda \Xi J_\zeta^{-\top}\\
&= J_\zeta^{-1} \hat{F}^{-1} J_\zeta^{-\top}
\end{align*}

Inverting both sides gives $\hat{F}^\dagger = J_\zeta^\top \hat{F} J_\zeta$ as required. %, and thus by Lemma \ref{lemma:invar} we have that the theorem holds for updates computed with $\hat{F}$.

\end{proof}

\begin{proof}[Proof of Corollary \ref{cor:whitened_interpretation}]

First note that a network which is transformed so that $G^\dagger_{i,i} = I$ and $\bar{A}^\dagger_{i,i} = I$ will satisfy the required properties.  To see this, note that $\expected[g^\dagger_i  g^{\dagger\top}_i] = G^\dagger_{i,i} = I$ means that $g^\dagger_i$ is whitened with respect to the model's distribution by definition (since the expectation is taken with respect to the model's distribution), and furthermore we have that $\expected[g^\dagger_i] = 0$ by default (e.g. using Lemma \ref{lemma:udv_expectation}), so $g^\dagger_i$ is centered.  And since $\expected[a^\dagger_i  a^{\dagger\top}_i]$ is the square submatrix of $\bar{A}^\dagger_{i,i} = I$ which leaves out the last row and column, we also have that $\expected[a^\dagger_i  a^{\dagger\top}_i] = I$ and so $a^\dagger_i$ is whitened.  Finally, observe that $\expected[a^\dagger_i]$ is given by the final column (or row) of $\bar{A}_{i,i}$, excluding the last entry, and is thus equal to $0$, and so $a^\dagger_i$ is centered.

Next, we note that if $G^\dagger_{i,i} = I$ and $\bar{A}^\dagger_{i,i} = I$ then
\begin{align*}
\breve{F}^\dagger = \diag\left( \bar{A}^\dagger_{0,0} \otimes G^\dagger_{1,1}, \bar{A}^\dagger_{1,1} \otimes G^\dagger_{2,2}, \: \ldots, \: \bar{A}^\dagger_{\ell-1,\ell-1} \otimes G^\dagger_{\ell,\ell} \right) = \diag\left( I \otimes I, I \otimes I, \: \ldots, \: I \otimes I \right) = I
\end{align*}
and so $-\alpha \breve{F}^{-1} \nabla h^\dagger = -\alpha \nabla h^\dagger$ is indeed a standard gradient descent update.

Finally, we observe that there are choices of $\Omega_i$ and $\Phi_i$ which will make the transformed model satisfy $G^\dagger_{i,i} = I$ and $\bar{A}^\dagger_{i,i} = I$.  In particular, from the proof of Theorem \ref{thm:invariance} we have that $G_{i,j}^\dagger = \Phi_i^\top G_{i,j} \Phi_j$ and $\bar{A}_{i,j}^\dagger = \Omega_i \bar{A}_{i,j} \Omega_j^\top$, and so taking $\Phi_i = G_{i,i}^{-1/2}$ and $\Omega_i = \bar{A}_{i,i}^{-1/2}$ works.

The result now follows from Theorem \ref{thm:invariance}.

\end{proof}

\end{document}